\documentclass[10pt]{article} 
\usepackage[accepted]{tmlr}

\usepackage{url}            
\usepackage{booktabs}       
\usepackage{amsfonts}       
\usepackage{nicefrac}       
\usepackage{microtype}      
\usepackage{graphicx}
\usepackage{natbib}
\usepackage{longtable}
\usepackage{doi}
\RequirePackage[countmax]{subfloat}
\usepackage{anyfontsize}%
\RequirePackage{multirow}
\RequirePackage{footnote}
\RequirePackage{url}
\RequirePackage{amsmath}
\usepackage{amsthm}
\RequirePackage{mathrsfs}
\RequirePackage{algorithm}%
\RequirePackage{algorithmicx}%
\RequirePackage{algpseudocode}%
\RequirePackage{listings}%
\usepackage{multirow}
\usepackage{comment}
\RequirePackage{adjustbox}
\usepackage{rotating} 
\usepackage{algorithm}
\usepackage{algpseudocode}
\usepackage{xr-hyper}
\usepackage{hyperref}
\hypersetup{
    colorlinks=true,   
    citecolor=blue,    
    linkcolor=blue,    
    urlcolor=black      
}
\usepackage{tablefootnote}
\usepackage{pifont}
\newcommand{\cmark}{\ding{51}}%
\newcommand{\xmark}{\ding{55}}%
\usepackage{makecell}
\usepackage{enumitem}

\theoremstyle{plain} 
\newtheorem{theorem}{Theorem}

\newtheorem{corollary}{Corollary}

\theoremstyle{definition}
\newtheorem{definition}{Definition}

\theoremstyle{remark}
\newtheorem{remark}{Remark}

\title{Deep Generative Spatiotemporal Engression for Probabilistic Forecasting of Epidemics}

 \author{\name Rajdeep Pathak \email rajdeep.pathak@sorbonne.ae \\
       \name Tanujit Chakraborty \email tanujit.chakraborty@sorbonne.ae \\
       \addr SAFIR, Sorbonne University Abu Dhabi, United Arab Emirates\\
       SCAI, Sorbonne Universit\'e, Paris, France}

\def\month{06}  
\def\year{2026} 
\def\openreview{\url{https://openreview.net/forum?id=7AfAztCd5A}} 

\begin{document}

\maketitle

\begin{abstract}
Accurate and reliable forecasting of epidemic incidences is critical for public health preparedness, yet it remains a challenging task due to complex nonlinear temporal dependencies and heterogeneous spatial interactions. Often, point forecasts generated by spatiotemporal models are unreliable in assigning uncertainty to future epidemic events. Probabilistic forecasting of epidemics is therefore crucial for providing the best or worst-case scenarios rather than a simple, often inaccurate, point estimate. We present deep spatiotemporal engression methods to generate accurate and reliable probabilistic forecasts on low-frequency epidemic datasets. The proposed methods act as distributional lenses, and out-of-sample probabilistic forecasts are generated by sampling from the trained models. Our frameworks encapsulate lightweight deep generative architectures, wherein uncertainty is quantified endogenously, driven by a pre-additive noise component during model construction. We establish geometric ergodicity and asymptotic stationarity of the spatiotemporal engression processes under mild assumptions on the network weights and pre-additive noise process. Comprehensive evaluations across six epidemiological datasets over three forecast horizons demonstrate that the proposal consistently outperforms several temporal and spatiotemporal benchmarks in both point and probabilistic forecasting. Additionally, we explore the explainability of the proposal to enhance the models' practical application for informed, timely public health interventions. Code is available at \url{https://github.com/PyCoder913/stengression}, and the \href{https://pypi.org/project/stengression/}{\texttt{stengression}} Python package offers an end-to-end implementation of our proposed approaches.
\end{abstract}

\section{Introduction}
Epidemics have persistently posed one of the most serious threats to global public health, societal stability, and economic sustainability. Throughout history, infectious diseases have periodically swept through populations, leaving behind devastating consequences. For instance, the bubonic plague or Black Death in the 14th century claimed nearly one-third of Europe’s population \citep{glatter2021history}. The Great Influenza epidemic or Spanish Flu of 1918 \citep{trilla20081918} infected about one-third of the world’s population and caused over 50 million deaths. More recent events such as the 2003 SARS epidemic \citep{cherry2004sars}, the 2014-2016 West African Ebola outbreak \citep{houlihan2017ebola}, the 2015 Zika epidemic \citep{ferguson2016countering}, and the COVID-19 pandemic \citep{chakraborty2020real} have demonstrated that even in an era of advanced science and communication, humanity remains vulnerable to fast-evolving pathogens that can overwhelm healthcare systems, disrupt economies, and destabilize societies. The increasing pace of globalization, climate change, and human mobility has intensified the spatial reach and frequency of such outbreaks. Hence, epidemic forecasting is not merely an academic pursuit but an indispensable component of public health preparedness and response.

The imperative to anticipate epidemic dynamics has driven the development of forecasting models essential for strategic resource allocation and the formulation of interventions. A plethora of forecasting models are available in the literature, and a vast majority of them can generate point forecasts based on the data of past epidemic incidence cases \citep{panja2023epicasting}. However, public health officials can take better decisions with an estimate of the uncertainty in the forecasted target rather than solely relying on point forecasts \citep{ioannidis2022forecasting}. Due to this, probabilistic forecasts are more desirable for high-stakes decision making \citep{rumack2022recalibrating}. A probabilistic forecast quantifies predictive uncertainty by providing prediction intervals or selected quantiles around a point forecast, capturing the range of plausible future outcomes. Epidemics are driven by highly nonlinear interacting processes, including pathogen mutation rates, spatiotemporal mobility patterns, and rapid shifts in human behavior following non-pharmaceutical interventions. It is therefore important for an epidemic forecaster (also for a wide variety of other applications) to produce spatiotemporal probabilistic forecasts, to inform public health officials for tailored responses to outbreaks of infectious disease.

Previous works on epidemic modeling generally bifurcate into mechanistic (compartmental) frameworks, which simulate the physical laws of disease transmission \citep{kermack1927contribution}, and phenomenological approaches, which utilize historical data to forecast future trends without explicit biological assumptions. Early approaches to forecast epidemics were predominantly temporal, focusing on time series forecasting of incidence data \citep{benvenuto2020application}. More recently, machine learning techniques such as Recurrent Neural Networks (RNNs) and Long Short-Term Memory (LSTM) networks \citep{hochreiter1997long} have emerged as powerful nonlinear approximators capable of learning intricate patterns from epidemiological data \citep{datilo2019review}. However, most of these methods are suitable for producing point forecasts, which is just a scalar value for each target and has no measure of uncertainty. Model-agnostic methods such as conformal prediction  \citep{shafer2008tutorial} are often used to create post-hoc prediction intervals from a trained time series model \citep{barman2025epidemic, panja2026stgcn}. However, they require exchangeability of data and rely on holdout calibration set for calculation of non-conformity scores. Current probabilistic time series models often depend on parametric distributions \citep{salinas2020deepar, herzen2022darts}, discrete conditional percentiles via quantile regression \citep{jensen2022ensemble}, or computationally exhaustive Markov Chain Monte Carlo (MCMC) sampling for Bayesian inference \citep{klein2023deep}. Crucially, these purely temporal architectures overlook the spatial factors of epidemic spread. By isolating single geographical units, they fail to model the complex diffusion of diseases across networks, effectively ignoring the spatial transmission pathways.

To overcome this limitation, recent works are focused on spatiotemporal modeling, which accounts for temporal dynamics under spatial dependence. Spatiotemporal models represent the epidemic process as a dynamic system evolving over a network or a spatial lattice. The Space-Time Autoregressive Moving Average (STARMA) model \citep{pfeifer1980three} provides a statistical framework for modeling spatiotemporal dependence, expressing the current state of a region as a function of its own past and that of its spatial neighbors. More recently, Graph Neural Networks (GNNs) have gained significant traction in modeling and forecasting epidemics \citep{liu2024review} and spatial time series in general. Models such as Graph-Structured RNN (GSRNN; \cite{li2019study}), Cross-location Attention based GNN (Cola-GNN; \cite{deng2020cola}), and Epi-GNN \citep{xie2022epignn} have combined GCNs and RNNs for effectively forecasting infectious diseases such as the COVID-19 and influenza. However, a significant portion of current spatiotemporal literature focused on epidemics \citep{barman2025epidemic, deng2020cola} remains restricted to point estimation. The existing probabilistic models in the spatiotemporal domain are predominantly tailored for climate and energy sectors, where high-frequency, high-volume data (e.g., sub-hourly intervals) are readily available. Notable examples include Spatio-temporal Neural Network (STNN) for wind-speed forecasting \citep{liu2020probabilistic}, DiffSTG for traffic and air quality modeling \citep{wen2023diffstg}, and spatiotemporal echo-state networks (STESNs) for wind power prediction \citep{huang2022forecasting}. Conversely, epidemic datasets typically exhibit much lower temporal granularity - ranging from daily or weekly to monthly observations, and are often constrained by a paucity of data points. Moreover, the operational utility of models like DiffSTG and STESN is constrained by the high computational overhead of their sampling mechanisms, which renders the generation of forecast ensembles computationally expensive during inference. Thus, there is a critical need for spatiotemporal models for data-driven epidemic forecasting that performs well on low-frequency data, are lightweight, and offer model-intrinsic uncertainty quantification through a probabilistic framework.

This work focuses on building spatiotemporal frameworks based on deep generative architectures that can produce accurate and reliable probabilistic forecasts for low-frequency epidemic datasets. This is done via integrating the idea of pre-additive noise mechanism, a spatial module, temporal module (with LSTM as the base model), and energy score loss. Traditional point-based post-additive noise models ($Y = g(X) + \eta$, discussed in Sec. \ref{sec:background}) generally focus on the conditional mean estimation, operating under the restrictive assumption that error distributions are symmetric (typically Gaussian) and centered around the mean. Engression, a neural network-based distributional regression methodology \citep{shen2025engression}, vouches for pre-additive noise models and learns to sample from the training data distribution via a transformation of a simple distribution. Building on this, we introduce three probabilistic frameworks for spatiotemporal forecasting of epidemics, namely \textit{Multivariate Engression Network} (MVEN), \textit{Graph Convolutional Engression Network} (GCEN), and \textit{Spatio-Temporal Engression Network} (STEN). The MVEN architecture is purely temporal and based on LSTM-engression \citep{kraft2026modeling}, comprising LSTM networks processing noise-perturbed inputs. In contrast, STEN and GCEN are probabilistic spatiotemporal frameworks that can capture both temporal dynamics and spatial correlation structures. STEN's spatial module relies on prior knowledge of the spatial connectivity in terms of a distance-based weights matrix. This approach produces interpretable spatial embeddings by capturing the linear trajectory among the spatial locations. The graph convolution operations of GCEN offers higher flexibility in modeling the dynamic, complex, nonlinear spatial dependencies across several spatial units.

Our proposed models function as distributional lenses; multiple `plausible' multi-step-ahead forecast trajectories can be sampled from the trained models to form a forecast ensemble, thereby quantifying the uncertainty of the disease dynamics. By integrating uncertainty quantification as an intrinsic feature of the architectures, the proposed frameworks provide prediction intervals (PIs) without the need for independent, model-agnostic wrappers like conformal prediction, or the prohibitive computational overhead inherent in Bayesian posterior sampling. Moreover, generating PIs directly from the forecast ensemble eliminates the necessity for external calibration using a holdout set. We further investigate the asymptotic properties of the spatiotemporal engression processes by formulating them as closed-loop Markov chains, demonstrating their geometric ergodicity and asymptotic stationarity under mild assumptions. This theoretical guarantee yields significant practical implications for reliable long-term epidemic forecasting. Finally, we conduct an explainability analysis by understanding the internal model dynamics; we explore how the latent stochasticity induced by the pre-additive noise propagates to generate a diverse ensemble of plausible forecast trajectories. Additionally, the spatial architecture of the proposed STEN inherently offers interpretability by explicitly quantifying the contribution of individual spatial lags to local epidemic incidence. This yields practical and actionable evidence for public health interventions, clearly separating the influence of ongoing local patterns from cross-regional spread.

To validate our models, we consider six diverse epidemic datasets consisting of airborne and vector-borne infectious diseases with daily, weekly, and monthly granularities. Each spatiotemporal dataset consists of disease incidence cases for all the states or prefectures for the respective countries with varying sample sizes. The proposed deep spatiotemporal engression methods, viz. MVEN, GCEN, and STEN are applied to these low-frequency datasets to produce multi-step-ahead probabilistic forecasts. These lightweight architectures are compared to existing spatiotemporal probabilistic architectures such as Fast Gaussian Process (GpGp), DiffSTG, and STESN, that are extremely computationally heavy. When compared to other distinct temporal and spatiotemporal state-of-the-art methods using multiple point and probabilistic performance metrics for different forecast horizons, the proposals either outperform the benchmarks, or remain highly competitive. Thus, deep spatiotemporal engression offers model-intrinsic uncertainty quantification on spatiotemporal data with theoretical guarantees, and consistently improves not only the prediction intervals, but also predictive accuracy for epidemic datasets.

Our main contributions can be summarized as follows.
\begin{enumerate}
    \item Existing architectures such as GCN-LSTM and STARMA-based networks are predominantly deterministic, and probabilistic methods like LSTM-engression are traditionally confined to pure temporal sequences. This work introduces the first spatiotemporal deep generative frameworks - MVEN, GCEN, and STEN, that integrate a pre-additive noise mechanism with energy score optimization for probabilistic forecasting, which is highly relevant for epidemic series. Hence, existing GCN-LSTM architectures and LSTM-engression appear as special cases of the `more general' proposed models.
    \item The statistical properties of classical engression were studied under an i.i.d. (independently and identically distributed) assumption on the covariates. In contrast, we establish the asymptotic properties of closed-loop, pre-additive spatiotemporal recurrent network processes, where the i.i.d. assumption is violated. To the best of our knowledge, this is the first time that asymptotic stationarity is studied for pre-additive noise-based spatiotemporal deep learning models.
    \item From a practical perspective, our proposed architectures provide a highly efficient alternative to current resource-heavy probabilistic models. Compared to state-of-the-art diffusion-based forecasters, our lightweight models achieve strong probabilistic calibration with a significantly reduced computational footprint, making them well-suited for processing low-frequency datasets, such as epidemic incidence records.
    \item We provide a comprehensive benchmarking resource for epidemiological forecasting by establishing a rigorous evaluation framework characterized by: six diverse datasets (varying in transmission modes, spatial structures, and temporal granularities), three forecasting horizons, twelve established temporal and spatiotemporal baseline models, and ten point-based and probabilistic metrics.
\end{enumerate}

The remainder of this paper is structured as follows. Sec. \ref{sec:Motivating_example} discusses the probabilistic forecasting problem for an epidemic data and motivates the use of pre-additive noise for time series forecasting. Sec. \ref{sec:background} provides an overview of the engression framework. Sec. \ref{sec:methodology} details the proposed methodologies, including the spatiotemporal problem formulation and the MVEN, GCEN, and STEN architectures. Theoretical results regarding geometric ergodicity and asymptotic stationarity are established in Sec. \ref{sec:ergodicity} and verified through numerical simulations in Sec. \ref{sec:Simulation}. The experimental evaluation and comparative analyses on real-world epidemic datasets are presented in Sec. \ref{sec:evaluation_on_rw_data}, followed by an analysis of model explainability in Sec. \ref{sec:model_explainability}. We conduct an ablation study in Sec. \ref{sec:ablation} to evaluate the components of proposed model architectures. In Sec. \ref{sec:conclusion}, we conclude the paper, highlighting the limitations of the proposal and directions for future research.

\section{Motivating Example} \label{sec:Motivating_example}
Our work is motivated by real-world epidemic datasets where point forecasts are often inadequate to help decision makers in public health \citep{ioannidis2022forecasting}. To illustrate, we consider weekly dengue incidence data in Guainia, Colombia \citep{clarke2024global}. To motivate the problem of probabilistic forecasting, we review a simple univariate time series forecasting task, where only the past lag $y_{t-1}\in \mathbb{R}$ is used to forecast the value for the next time-step, $y_{t}\in \mathbb{R}$ for the dengue dataset. One can observe the dependence of $y_t$ on $y_{t-1}$ using a lag plot, which is the visualization of $y_t$ against $y_{t-1}$. A vertical slice of this plot at any fixed lag value $y_{t-1}$ empirically reveals the conditional distribution of the future state $y_t$. Due to external factors driving the process, we may observe different future values for the same past lag. For example, in an epidemic process, the number of cases can jump from $y_{t-1} = 20$ to $y_t = 30$ in a given week, $y_t = 50$ in another week considering disease spread due to its contagious nature, and may dip to $y_t = 10$ in another, assuming effective control measures are imposed over time to curb the disease spread. Fig. \ref{fig:Guainia_lag_plot} (A) illustrates this, with a lag plot on the dengue data of Guainia. With 835 observations in the range of $[0,22]$ each, it is clear (in fact, follows from the pigeonhole principle; \cite{rebman1979pigeonhole}) that (at least) one past lag ($y_{t-1}$) value will correspond to multiple values for the future observation $y_t$.

\begin{figure}[!ht]
        \centering
        \includegraphics[width=\linewidth]{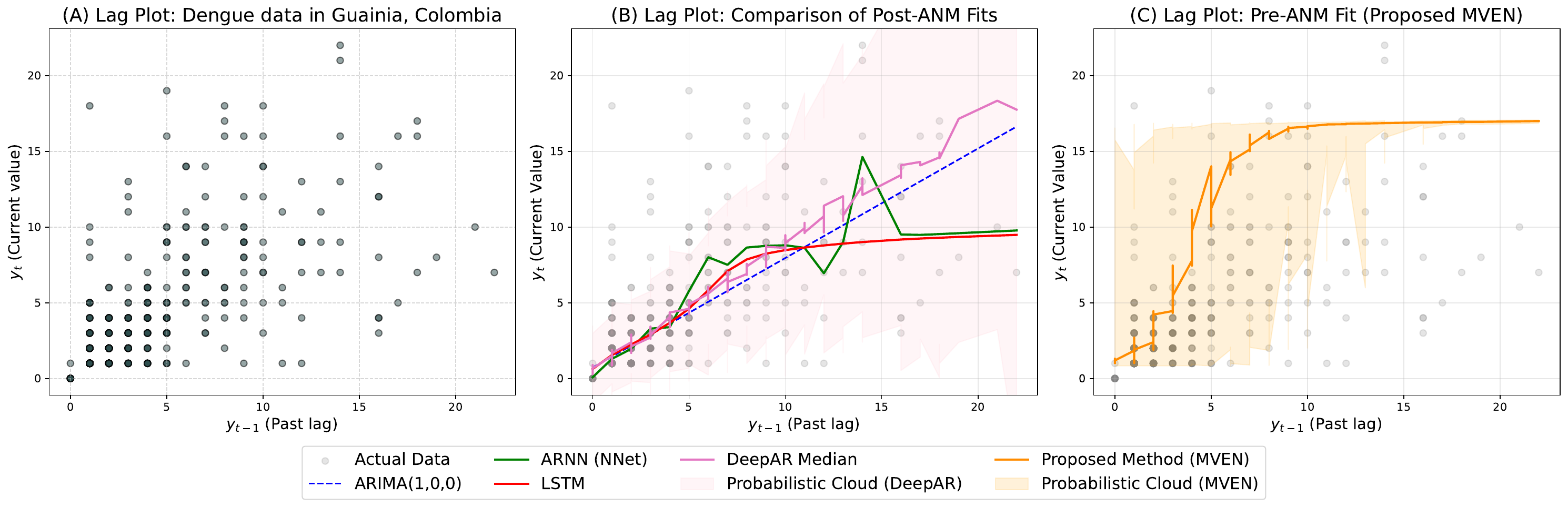}
        \caption{(A) Lag plot $y_t$ vs. $y_{t-1}$ for weekly dengue data observed in Guainia, Colombia, (B) comparison of post-ANM fits on the lag plots, and (C) proposed MVEN (and a generated probabilistic cloud based on the proposal) fit on the data.}
        \label{fig:Guainia_lag_plot}
\end{figure}

In standard point forecasting frameworks, including classical AR(p) and autoregressive neural networks (ARNN), the modeling paradigm typically assumes a post-additive noise structure, formulated as $y_t = f(y_{t-1}) + \eta_t$. This formulation inherently imposes a rigid constraint on the predictive distribution. It presumes that the stochasticity of the system is invariant across the state space, manifesting as a symmetric, often Gaussian, error distribution centered around the conditional mean. Consequently, such models effectively enforce an uncertainty profile where the shape of the distribution remains constant regardless of the input. This limitation is visually evident in the inability of the mean fit to encompass the dispersion of incidence data (Fig. \ref{fig:Guainia_lag_plot} (B)). Hence, making them generative by repeatedly sampling noise values to obtain `plausible future trajectories' might result in a spurious uncertainty profile. Other than that, there are post-additive noise models such as DeepAR that allow input-dependent predictive distributions, applying rigid parametric assumptions on the data. Because a slight deviation in a sample influences the parameters of the next sample, the trajectories naturally fan out, making uncertainty predicted by such models grow the further into the future one predicts (see Figs. \ref{fig:Guainia_lag_plot} (B) and \ref{fig:Forecasts_others}). To address these issues, this work motivates a pre-additive noise mechanism: $y_t = f(y_{t-1} + \eta_t)$ coupled with optimization of a strictly proper scoring rule - the energy score loss \citep{gneiting2007strictly}. By injecting stochasticity prior to the non-linear transformation $f$, the model empowers the neural network to function as a complex distributional lens by actively propagating noise through the model's non-linear layers. Moreover, training by optimizing the energy score loss mathematically ensures that the predicted distribution converges to the true distribution, thereby resolving the `fanning out' uncertainty of models like DeepAR. The resulting `probabilistic cloud' (see Fig. \ref{fig:Guainia_lag_plot} (C)) obtained via repeated sampling of noise thus provides a rigorous approximation of the predictive density, ensuring that generated trajectories remain dynamically plausible and structurally consistent with the underlying physical process. Thus, Fig. 1 motivates the use of pre-additive noise models with strictly proper scoring rule as compared to post-additive noise models for epidemic forecasting\footnote{We note that the uncertainty produced by a post-additive noise model can also be shaped by training with the energy score loss. We compare this structure with the pre-additive formulation in an ablation study in Sec. \ref{sec:ablation}.}. Consequently, this work introduces spatiotemporal methods with a pre-additive noise structure, designed to generate reliable and accurate probabilistic forecasts for low-frequency epidemic datasets.

\section{Background} \label{sec:background}

Distributional regression fundamentally seeks to estimate the full conditional distribution $P(Y|X=x)$ of a target variable $Y$ given a set of covariates $X$, rather than a specific functional such as the conditional mean or median. Several established methodologies successfully achieve this goal by quantifying predictive uncertainty and capturing complex data distributions. For instance, Mixture Density Networks (MDNs) \citep{bishop1994mixture} parameterize specific distribution families; normalizing flows \citep{papamakarios2021normalizing} utilize invertible transformations to learn highly complex densities; autoregressive architectures like DeepAR \citep{salinas2020deepar} yield probabilistic forecasts by parameterizing probability distributions over sequential data. These methods often face significant challenges when extrapolating beyond the training data. To address this limitation within the distributional regression family, the engression framework was recently introduced as a deep generative methodology \citep{shen2025engression}. Engression distinguishes itself through two primary contributions: a strictly proper scoring rule (energy score loss) formulation and a pre-additive noise model (pre-ANM). Rather than relying on likelihood maximization, the model learns the conditional distribution by minimizing the energy score loss. Concurrently, the pre-ANM injects noise $\eta$ directly into the covariates $X$ prior to a nonlinear transformation $g$, represented as $Y=g(X+\eta)$. This contrasts with the conventional post-additive noise model (post-ANM) where noise is appended after the transformation ($Y=g(X)+\eta$).

The synthesis of a distributional fit via the energy score with a pre-ANM architecture equips engression with theoretically grounded advantages in extrapolation. While the extrapolative properties of engression provide strong theoretical guarantees, they rely on monotonicity assumptions which are violated in real-world spatiotemporal setups. However, the pre-additive structure of engression is particularly advantageous for time series forecasting because it renders the model inherently generative with a greater distributional flexibility through the non-linear propagation of noise. This is a critical mechanism for robust uncertainty quantification. To draw a single sample from the trained model, a random noise vector $\eta$ is sampled from a pre-defined distribution (e.g., uniform or Gaussian) and used to compute the perturbed input $X'=X+\eta$. This perturbed input is then passed through the learned function $g$ to yield a sample output $Y_{sample}= g(X')$. By repeating this stochastic process, the model can generate an entire ensemble of plausible future trajectories, effectively sampling from the learned distribution to provide a rich and dynamically consistent characterization of predictive uncertainty. Although point-based post-ANMs are also generative in the sampling sense, generating trajectories from them through this noise sampling strategy would result in a spurious uncertainty profile, as discussed in the preceding section. The generative capability of engression has been leveraged in environmental domains, such as rainfall-runoff prediction using LSTM-engression \citep{kraft2026modeling}. The present work develops deep spatiotemporal engression frameworks specifically designed for the probabilistic forecasting of multivariate time series under complex spatial dependencies. Our proposed models exploit the generative sampling mechanism of the pre-ANM and the energy score formulation to produce model-intrinsic prediction intervals, accurately quantifying uncertainty for forecasting spatiotemporal epidemic series.


\section{Proposed Methodology} \label{sec:methodology}

Spatiotemporal data, which archives measurements across both geographical locations and temporal intervals, is fundamental to the study of time series forecasting. This research focuses on predicting epidemic disease incidences, addressing the critical challenge of jointly modeling two distinct phenomena: the inherent temporal dependencies evident in historical cases data and the complex spatial relationships that exist between administrative regions, such as states or prefectures. The contagious nature of diseases like tuberculosis, COVID-19, dengue, and influenza-like illnesses necessitates this dual focus, as the transmission dynamics in one region are often influenced by its geographical neighbors. Assuming that there are $N$ administrative regions, the spatiotemporal dataset is a three dimensional tensor, $\mathcal{Y} \in \mathbb{R}^{T\times N\times D}$, which encapsulates measurements of $D$ features across $N$ discrete spatial nodes over a sequence of $T$ time-steps. An individual element in the tensor $\mathcal{Y}$, denoted as $y_{t,n,d}$, represents the value of the $d^{th}$ feature ($d \in \{1, 2, ..., D\}$), recorded at the $n^{th}$ spatial node ($n \in \{1, 2, ..., N\}$), at time-step $t$ ($t \in \{1, 2, ..., T\}$). A slice along the first axis at time $t$, denoted as $\textbf{Y}_{t}$, is an $N \times D$ matrix representing a snapshot of all features across all nodes at that specific moment. In the current context, we have $D=1$ as we are interested in modeling only the epidemic incidence cases for multiple spatial nodes; however, the setting can be generalized, where the last dimension can accommodate multiple target variables to forecast along with exogenous covariates (see Sec. \ref{sec:limitations}). Then, given the historical epidemic data $\mathcal{Y} = \left\{\mathbf{Y}_t\right\}_{t=1}^T \in \mathbb{R}^{T\times N\times 1}$ up to time $T$, the aim is to design a model that can produce multi-step ahead forecasts for $q$ future time-steps ($q\in\mathbb{N}$): $\widehat{\mathcal{Y}} = \left\{\widehat{\mathbf{Y}}_t\right\}_{t=T+1}^{T+q} \in \mathbb{R}^{q\times N \times 1}$, where each $\widehat{\mathbf{Y}}_t \in \mathbb{R}^{N\times 1}$ is the prediction for incidence cases at the $t^{th}$ time-step across all $N$ nodes. 

In this section, we present three spatiotemporal probabilistic forecasting frameworks, motivated by the idea of engression (developed for regression examples in \cite{shen2025engression}) and LSTM-engression (developed for temporal forecasting in \cite{kraft2026modeling}).

\subsection{Graph Convolutional Engression Network} \label{sec:GCEN}
We propose Graph Convolutional Engression Network (GCEN), a deep learning architecture designed for probabilistic spatiotemporal forecasting, particularly on data exhibiting graphical or lattice structures. Its design, inspired by Spatio-Temporal Graph Convolutional Networks \citep{yu2018spatio}, integrates spatial feature extraction with a probabilistic temporal forecasting mechanism. This creates a principled separation between the modeling of spatial dependencies and temporal dynamics. GCEN is made up of three key components: a spatial embedding function $f_{GCN}$, latent noise injection, and a temporal evolution function $f_{Temp}$. A visual illustration of the GCEN architecture is presented in Fig. \ref{fig:GCEN_Architecture}.

\begin{figure}[!t]
        \centering
        \includegraphics[width=\linewidth]{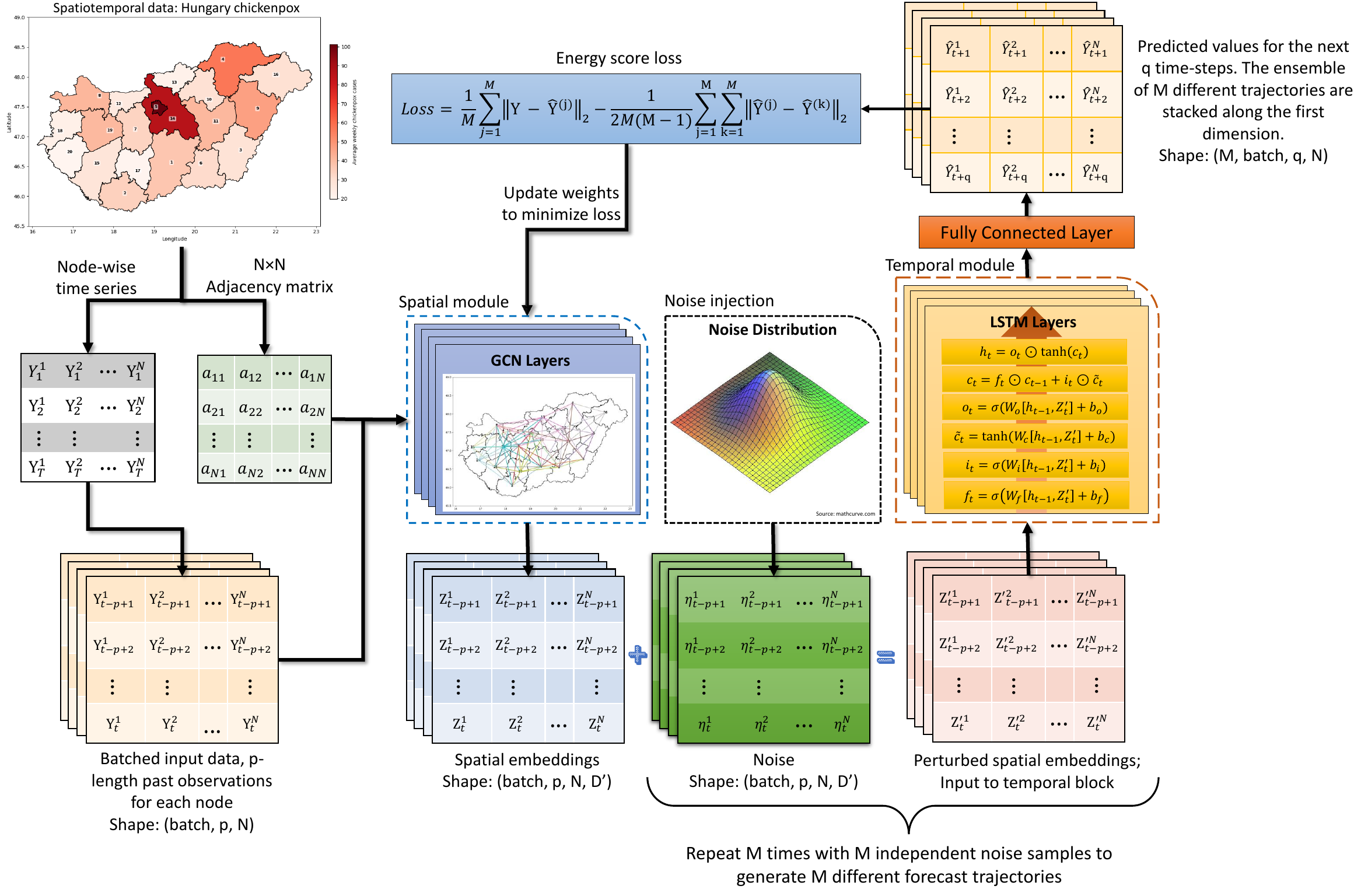}
        \caption{A visual representation of the end-to-end training loop of the proposed GCEN architecture. Batches of input sequences along with the static adjacency matrix is first processed by the spatial module consisting of GCN layers to produce the spatial embeddings. Noise is sampled from a pre-determined distribution and added to the spatial embeddings, which are then passed to the temporal module consisting of LSTM layers, followed by a fully connected layer. The process is repeated with $M$ independent noise samples to generate an ensemble of $M$ forecasts for the next $q$ time-steps for each node. The network is trained by minimizing the energy score loss function.}
        \label{fig:GCEN_Architecture}
\end{figure}

The spatial module of the GCEN architecture comprises GCNs, that functions as a spatial feature extractor. It learns a spatially-aware representation of the system's state at each point in time. The underlying spatial relationships between the nodes are encoded in a static adjacency matrix, $A\in \mathbb{R}^{N\times N}$, which represents an undirected graph. First, the weighted Haversine distance $(d_{ij})$ between the geographical locations of nodes $i$ and $j$ with latitude-longitude pair $(lat_i, lon_i)$ and $(lat_j, lon_j)$, respectively, is computed as: 
\begin{equation} \label{eqn:Haversine_distance}
    d_{ij} = 2 R \sin^{-1} \left(\sqrt{\sin^2\left(\frac{lat_i - lat_j}{2}\right) + \cos(lat_i)\cos(lat_j)\sin^2\left(\frac{lon_i - lon_j}{2}\right)}\right),
\end{equation}
where $R$ represents the radius of the Earth. The spatial interdependencies among the network nodes are formally characterized by a weighted adjacency matrix $A$, wherein the pairwise similarity $a_{ij}$ between distinct nodes is derived from a thresholded Gaussian kernel: $$a_{ij} = \exp\left(-\frac{d_{ij}^2}{\tilde{\sigma}^2}\right), \text{ when } i \neq j \text{ and } \exp\left(-\frac{d_{ij}^2}{\tilde{\sigma}^2}\right) \geq \epsilon,$$
where the hyperparameters $\tilde{\sigma}^2$ and $\epsilon$ serve as critical regulators of the matrix's distribution and sparsity, respectively. This Gaussian radial basis function effectively establishes a finite interaction radius; specifically, if the distance between two nodes exceeds the threshold $\sqrt{-\tilde{\sigma}^2 \ln \epsilon}$, the edge weight is truncated to zero, and thus no connectivity is considered between those nodes. 

The spatial module operates in a sequence-to-sequence paradigm, utilizing a window of $p$ lagged time steps, $\textbf{Y}_{t-p+1:t}$, as the input at time $t$. At each time-step $t$, it processes the feature slice $\textbf{Y}_{t-p+1:t} \in \mathbb{R}^{p\times N\times D}$ and the adjacency matrix $A$, to produce a latent embedding $\textbf{Z}_{t-p+1:t} \in \mathbb{R}^{p\times N\times D'}$, where $D'$ is the dimension of the node features in the embedding domain. This transformation can be formally expressed as:
$$\textbf{Z}_{t-p+1:t} = f_{GCN}(\textbf{Y}_{t-p+1:t},A;\Theta_{GCN}),$$
where $\Theta_{GCN}$ represents the learnable weight matrix of the GCN layers. The function $f_{GCN}$ is typically a composition of graph convolution operations that propagate and transform node features across the graph, allowing each node's resulting representation to be informed by its local spatial context. To maintain localization and efficiency, GCNs approximate polynomial filters using a first-order Chebyshev expansion \citep{kipf2017semisupervised}. For an input signal $\textbf{Y}_t \in \mathbb{R}^{N\times D}$, the graph convolution is defined as:$$\textbf{Y}_t' = \theta_0 \textbf{Y}_t +\theta_1 \left(\frac{2L}{\lambda_{\max}}-I_N\right)\textbf{Y}_t,$$where $\theta_0, \theta_1$ are shared filter weights, $L$ is the graph Laplacian, $\lambda_{\max}$ is its largest eigenvalue, and $I_N$ is the identity matrix of order $N$. By stacking $K$ such layers and adhering to a neural message passing framework \citep{gilmer2017neural}, the model generates spatial embeddings. Each node $i$ updates its representation $h_t^{i,(k)}$ through 1-hop mean aggregation of its neighborhood $\mathcal{N}(i)$ and its own prior state \citep{panja2026stgcn}:

$$h_t^{i, (0)} = Y_t^i,$$
$$h_t^{i,(k)} = f_t^{(k)}\left(\Theta_t^{(k)}\frac{\sum_{j\in \mathcal{N}(i)}h_t^{j, (k-1)}}{|\mathcal{N}(i)|}+ B_t^{(k)}h_t^{i,(k-1)}\right), \quad k = 1, \ldots, K,$$
$$Z_t^i = \text{Dense}\left(h_t^{i, (K)}\right),$$where $f_t^{(k)}$ is the update function and $\Theta_t^{(k)}$ and $B_t^{(k)}$ are learnable parameters. The final spatiotemporal representation $Z_t^i$ is produced by a fully connected layer following the $K^{th}$ iteration. The resultant embedding matrix, $\textbf{Z}_t = \left[Z^1_t, \ldots, Z^N_t\right]^\top \in \mathbb{R}^{N\times D'}$, thereby captures information from the $(K-1)$-order neighborhood of each node combined with their temporal information.

 A core architectural element, adapted from the engression methodology, is the injection of stochastic noise into the learned latent space, that helps the model to produce probabilistic forecasts. For each spatial embedding $\textbf{Z}_\tau$ within the $p$-length historical sequence ($\tau = \{t-p+1,\ldots, t\}$), a corresponding noise tensor $\boldsymbol{\eta}_\tau \in \mathbb{R}^{N\times D'}$ is sampled from a pre-defined distribution (e.g., standard Gaussian or Uniform). This noise is additively combined with the deterministic embedding to yield a perturbed latent state: $$\textbf{Z}_\tau' = \textbf{Z}_\tau + \boldsymbol{\eta}_\tau.$$ 
This perturbation is applied to all time-steps in the lookback window, producing a stochastic embedding sequence, $\textbf{Z}'_{t-p+1:t}$. An alternative injection method involves sampling a noise tensor $\boldsymbol{\eta}_\tau \in \mathbb{R}^{N\times D_{noise}}$ and concatenating it with the embedding, yielding a perturbed representation $\textbf{Z}_\tau' \in \mathbb{R}^{N\times(D'+D_{noise})}$. For ease of presentation, this text will primarily assume the additive method; however, our model implementation flexibly supports both approaches. The choice of the noise distribution is data dependent; we empirically observe that datasets depicting (local) periodicity are better modeled using uniform distribution, whereas for more noisy spatiotemporal datasets, Gaussian distribution provides better coverage during prediction.

The sequence of perturbed spatial embeddings, $\textbf{Z}'_{t-p+1:t}$, serves as the input to the second module: a temporal model. In the temporal module of GCEN, we use an LSTM network, following prior works \citep{panja2026stgcn, kraft2026modeling}. This module is responsible for capturing the complex temporal dynamics of the system's evolution and generating probabilistic forecasts through the noise injection mechanism. Hence, the temporal evolution governed by the function $f_{Temp}$ and parameterized by $\Theta_{Temp}$ is given by:
$$\mathcal{H}_\tau = f_{Temp}(\textbf{Z}'_\tau, \ \mathcal{H}_{\tau-1}; \Theta_{Temp}) = \text{LSTM}(\textbf{Z}'_\tau, \ \mathcal{H}_{\tau-1}; \Theta_{\text{LSTM}}),$$ where $\mathcal{H}_\tau \in \mathbb{R}^{N\times D_{hidden}}$ denotes the hidden state of the temporal model at time-step $\tau = \{t-p+1, \ldots, t\}$ for all $N$ nodes, and $\Theta_{\text{LSTM}}$ is the learnable weight matrix in LSTM.
The final multi-step forecast for a future horizon of $q$ steps (starting after time-step $t$), denoted by $\widehat{\textbf{Y}}_{t+1:t+q} \in \mathbb{R}^{q\times N \times D}$, is then generated from the final hidden state of the sequence via a fully connected (dense) output layer: $$\widehat{\textbf{Y}}_{t+1:t+q} = \text{Dense}(\mathcal{H}_t).$$

\noindent The process from the noise sampling step is repeated $M$ times, each with an independently drawn noise sequence $\left\{\boldsymbol{\eta^{(j)}}\right\}_{j=1}^M$ to generate an ensemble of $M$ distinct future trajectories $\left\{\widehat{\mathbf{Y}}_{t+1:t+q}^{(j)}\right\}_{j=1}^M$. This ensemble constitutes the model's probabilistic forecasts, where each $\widehat{\mathbf{Y}}_{t+1:t+q}^{(j)}$ represents a sample forecast (trajectory) drawn from its learned distribution. The GCEN model can be viewed as a generative process, mapping a historical input sequence consisting of values for $p$ time-steps, $\textbf{Y}_{t-p+1:t}$, the graph structure $A$, and a sequence of $M$ noise samples $\left\{\boldsymbol{\eta}_{t-p+1:t}^{(j)}\right\}_{j=1}^M$ to an ensemble of $M$ forecast trajectories $\left\{\widehat{\mathbf{Y}}_{t+1:t+q}^{(j)}\right\}_{j=1}^M$. A forward pass is the process of passing input data through a neural network's layers - input, hidden, and output - to generate a prediction. A single $(j^{th})$ forward pass of the GCEN model (pseudo code presented in Algorithm \ref{alg:genforward} in Appendix \ref{appendix:algo}) produces a single forecast trajectory
$$\widehat{\textbf{Y}}_{t+1:t+q}^{(j)} = f_{GCEN}\left(\textbf{Y}_{t-p+1:t}, \ A, \ \boldsymbol{\eta}_{t-p+1:t}^{(j)}\right),$$
and this process is repeated $M$ times ($M$ forward passes) to generate $M$ forecast trajectories for the same forecast horizon, which constitute the probabilistic forecast ensemble. If the model is trained with the entire available historical data $\textbf{Y}_{1:T}$, then a forecast for the next $q$ time-steps can be generated using the historical sequence $\textbf{Y}_{T-p+1:T}$ as input to the trained model. The resulting forecast ensemble $\left\{\widehat{\mathbf{Y}}_{T+1:T+q}^{(j)}\right\}_{j=1}^M$ facilitates the derivation of point forecast trajectory (e.g., the ensemble median), as well as the estimation of prediction intervals through specific quantiles. A detailed discussion about estimation of the model parameters using energy score-based backpropagation is given in Sec. \ref{sec:Objective_Loss}.

\subsection{Spatio-Temporal Engression Network} \label{sec:STEN}
The STEN architecture incorporates spatial dependencies based on a predefined spatial weights matrix, motivated by the idea of Spatiotemporal Autoregressive Moving Average (STARMA) \citep{pfeifer1980three} model. By adapting this framework into a differentiable neural layer, it becomes possible to explicitly define the nature of spatial dependencies, such as the influence of second-order neighbors or the relative importance of different spatial scales. The STEN model comprises two sequential modules - a STAR-layer: STAR-inspired spatial embedding layer (see \cite{rathod2018improved} for details), and a probabilistic engression-LSTM module.

At each time-step $\tau$, the STAR-layer receives the matrix of node features, $\mathbf{Y}_\tau \in \mathbb{R}^{N\times D}$, and a pre-computed, static $N\times N$ spatial weights matrix $W$. The Haversine distance $d_{ij}$ is computed between each node using \eqref{eqn:Haversine_distance}, and the weights matrix is constructed as $W_{ij} = \frac{1}{d_{ij}^\alpha},$
where $\alpha$ is the decay parameter which can be tuned using a holdout validation set. The core operation of the layer is to compute a set of spatially lagged feature representations. For a pre-defined maximum spatial lag $L$, the layer calculates the sequence of matrices $\{\mathbf{Y}_\tau, W \mathbf{Y}_\tau, W^2 \mathbf{Y}_\tau, \ldots, W^L \mathbf{Y}_\tau\}$. Each matrix $W^l\mathbf{Y}_\tau\in \mathbb{R}^{N\times D}$ represents the features of the $l^{th}$-order neighbors aggregated to each node. These computations can be performed very efficiently, especially if $W$ is a sparse matrix. The layer then combines these multi-scale spatial representations through a learnable, weighted aggregation. The final spatial embedding for time $\tau$, denoted by $\widetilde{\textbf{Z}}_\tau \in \mathbb{R}^{N\times D'}$, is computed as:
\begin{equation} \label{eqn:STAR-layer}
    \widetilde{\textbf{Z}}_\tau = \text{ReLU}\left(\sum_{l=0}^L(W^l \mathbf{Y}_\tau) \Phi_l\right),
\end{equation}
where each $\Phi_l \in \mathbb{R}^{D\times D'}$ is a learnable weight matrix, effectively acting as a linear transformation specific to the $l^{th}$ spatial lag and $D'$ is the dimensionality of the learned spatial embedding. This formulation allows the model to learn the relative importance of information from different spatial scales when creating the final embedding. The STAR-layer is applied to a batch of input sequences of length $p$ (for each batch) in each forward pass, transforming the raw input sequence $\textbf{Y}_{t-p+1:t}$ into a sequence of rich spatial embeddings, $\widetilde{\textbf{Z}}_{t-p+1:t}$. Each embedding $\widetilde{\textbf{Z}}_{\tau}$ ($\tau=\{t-p+1, \ldots, t\}$), encodes not just the features at time $\tau$, but also the learned multi-scale spatial context surrounding each node.

The spatial embeddings from the STAR-layer are then perturbed with noise before being passed to the temporal module, an LSTM network, to generate a hidden state tensor $\mathcal{H}_{t-p+1:t} \in \mathbb{R}^{p\times N \times D_{hidden}}$, similar to GCEN. Finally, a single dense layer acts as the forecasting head, which takes the final hidden state of the temporal module as input and maps it to the desired output shape for a $q$-step-ahead forecast horizon:
$\widehat{\mathbf{Y}}_{t+1:t+q}\in \mathbb{R}^{q\times N \times D}.$ The complete, end-to-end process for a single forward pass of the STEN model is detailed in Algorithm \ref{alg:sen_forward} (Appendix \ref{appendix:algo}). To generate probabilistic forecasts, the entire process from the noise sampling step is repeated $M$ times with different noise samples $\left\{\boldsymbol{\eta}^{(j)}\right\}_{j=1}^M$ to create an ensemble of trajectories $\left\{\widehat{\mathbf{Y}}_{t+1:t+q}^{(j)}\right\}_{j=1}^M$. 


\subsection{Multivariate Engression Network}
MVEN is a multivariate time series framework based on engression. It utilizes a generative approach characterized by the direct injection of stochastic noise into the input features instead of spatial embeddings, which are subsequently processed by LSTM layers for forecasting. LSTM-engression, proposed by \cite{kraft2026modeling}, maps the LSTM hidden states to a single value and uses softplus activation to transform them into the positive domain. In contrast, MVEN utilizes a fully connected layer after the LSTM network as a forecasting head, which maps the LSTM hidden states into the final forecast shape. MVEN uses energy score loss (see Sec. \ref{sec:Objective_Loss}) during backpropagation for model parameter estimation, and the forecast ensemble consisting of $M$ trajectories is generated as a stacked tensor of shape $(M, q, N, D)$. Formulated as a sequence-to-sequence framework similar to GCEN and STEN, a single forward pass of MVEN ingests a historical observation tensor $\mathbf{Y}_{t-p+1:t} \in \mathbb{R}^{p \times N \times D}$ and generates a $q$-step ahead forecast $\widehat{\mathbf{Y}}_{t+1:t+q} \in \mathbb{R}^{q \times N \times D}$ for all the nodes (see Algorithm \ref{alg:men_forward} in Appendix \ref{appendix:algo}). As a purely (multivariate) temporal framework, this model can be viewed as a baseline for evaluating spatiotemporal architectures, as it skips spatial dependencies, thereby treating the nodes as independent, to isolate the contribution of temporal dynamics in the dataset. MVEN will be particularly useful for spatiotemporal forecasting problems where locational information are seldom available, such as movements of robots while performing manipulation tasks.

\subsection{Optimization} \label{sec:Objective_Loss}
The training dataset $\mathcal{D} = \{(\mathbf{Y}_{t-p+1:t}, \mathbf{Y}_{t+1:t+q})\}, \ p \leq t\leq T-q$, is constructed by taking sequential overlapping slices from the original multivariate time series, where each input sample $\mathbf{Y}_{t-p+1:t} \in \mathbb{R}^{p \times N \times D}$ and its corresponding ground truth $\mathbf{Y}_{t+1:t+q} \in \mathbb{R}^{q \times N \times D}$ are temporally contiguous. The proposed models are trained by minimizing the empirical energy score loss \citep{gneiting2007strictly}, a strictly proper scoring rule that evaluates the quality of the entire predictive distribution against the ground truth observation. The loss function comprises two terms: an accuracy term and a sharpness term, which allows the model to learn not just a single `best guess' prediction, but the entire probable distribution of future states, while still ensuring the average prediction is accurate. This is similar to the idea of balancing exploration and exploitation in reinforcement learning \citep{sutton1998reinforcement}. For a single slice of ground truth sequence and its prediction (an ensemble of $M$ forecasts), the accuracy term is given by: 
$$\mathcal{L}_{accuracy} = \frac{1}{M} \sum_{j=1}^M \left\|\mathbf{Y}_{t+1:t+q} - \widehat{\mathbf{Y}}^{(j)}_{t+1:t+q}\right\|_2^\beta,$$
and the sharpness term is given by
$$\mathcal{L}_{sharpness} = \frac{1}{2M(M-1)} \sum_{j=1}^M \sum_{k=1}^M \left\|\widehat{\mathbf{Y}}_{t+1:t+q}^{(j)} - \widehat{\mathbf{Y}}_{t+1:t+q}^{(k)}\right\|_2^\beta,$$
where $\mathbf{Y}_{t+1:t+q}$ is the ground truth (observed target vector) during training, $\widehat{\mathbf{Y}}_{t+1:t+q}^{(j)}$ is the in-sample predicted $j^{th}$ trajectory (an independent sample drawn from the model's predicted distribution), $M$ is the total number of forecasts in the forecast ensemble $(j = 1, 2, \ldots, M)$, and the exponent $\beta\in (0,2)$. The empirical energy score (ES) loss computes the loss between all (batches of) sequences of ground truth observations and in-sample predictions in $\mathcal{D}$ as follows:
\begin{equation} \label{eqn:esloss}
    \mathcal{L}_{ES} =  \frac{1}{|\mathcal{D}|} \sum_{i=1}^{|\mathcal{D}|} \left[\frac{1}{M} \sum_{j=1}^M \left\|\textbf{Y}_i - \widehat{\textbf{Y}}_i^{(j)}\right\|_2^\beta - \frac{1}{2M(M-1)} \sum_{j=1}^M \sum_{k=1}^M \left\|\widehat{\textbf{Y}}_i^{(j)} - \widehat{\textbf{Y}}_i^{(k)}\right\|_2^\beta\right],
\end{equation}
where $\textbf{Y}_i$ is the ground truth sequence of length $q$ (second element of the $i^{th}$ tuple in $\mathcal{D}$) and $\widehat{\textbf{Y}}_i^{(j)}$ is its predicted value.

We empirically illustrate the benefits of ES loss over standard data loss such as mean squared error (MSE) in probabilistic forecasting through an ablation study in Sec. \ref{sec:ablation} and a toy example in Appendix \ref{appendix:esloss}. Since the loss function in \eqref{eqn:esloss} is differentiable almost everywhere, standard backpropagation is used to optimize the objective and learn the weights of the proposed models. The full end-to-end training loop of the proposed models is given in Algorithm \ref{alg:training_procedure} in Appendix \ref{appendix:algo}. \\

\begin{remark}[Model selection]
Model selection among the proposed frameworks is contingent upon the specific trade-off between data size, interpretability, and the complexity of spatial dependencies inherent in the data. Because of the multiplication operations between input tensors and powers of spatial weights matrices in STEN (\eqref{eqn:STAR-layer}), the computational complexity of STEN is directly proportional to data size. In contrast, GCEN possesses better scalability because of weight-sharing in the graph convolution operations. Although the MCB analysis (Figs. \ref{fig:MCB_test} and \ref{fig:Ablation_MCB}) establishes the empirical superiority of GCEN over STEN, GCEN's graph-based convolutions operates as a black box, masking the explicit pathways of spatial dissemination. STEN addresses this limitation by offering structural transparency; its parameterization explicitly isolates and quantifies the impact of individual spatial lags on disease propagation (see Sec. \ref{sec:STEN_Explainability}). Thus, STEN is preferable when interpretability is paramount and data is scarce, while GCEN is superior for capturing complex, nonlinear spatial heterogeneity through data-driven graph convolutions and is more scalable. Finally, MVEN is optimal for datasets with weak or noisy spatial signals, as it isolates temporal dynamics without imposing potentially confounding spatial structures.
\end{remark}

\section{Geometric Ergodicity and Asymptotic Stationarity} 
\subsection{Theoretical Results} \label{sec:ergodicity}
 Geometric ergodicity is a fundamental property for evaluating the stability and predictive reliability of stochastic forecasting models. It confirms that the underlying Markov chain converges to a unique stationary distribution at an exponential rate \citep{meyn2012markov}. This geometric convergence implies that the influence of initial conditions and distant past observations decays exponentially fast. Establishing this property is important, as it ensures the model is stable and its forecasts are asymptotically independent of arbitrary starting conditions, which is a prerequisite for reliable long-term prediction. For the theoretical studies, we consider a closed-loop (i.e., no exogenous covariates) and 1-step recurrent (input and output sequence lengths $p,q=1$) setup without loss of generality, for simplicity. The results can be extended for a setup with $p,q>1$ and exogenous inputs. We first prove that such a temporal network process driven by pre-additive noise is geometrically ergodic, following \cite{zhao2020rnn}. Using that, we then proceed to demonstrate that the proposed MVEN, GCEN, and STEN processes, when formulated as homogeneous Markov chains in their closed-loop forms, are geometrically ergodic.

To establish the geometric ergodicity and asymptotic stationarity of the engression processes, we first show that they are irreducible, meaning that all the states in the Markov chain communicate with each other. The concept of irreducibility is essential when considering the structure of the Markov chain for any nonlinear time series model. By constructing a suitable non-negative continuous function and verifying Tweedie's drift criterion \citep{tweedie1983criteria} along with irreducibility, we ensure the existence of a unique invariant distribution to which the process converges geometrically fast, thus establishing ergodicity. Before stating the main results, we formally define the notions of irreducibility and geometric ergodicity.

\begin{definition}[Irreducibility]
Let $\lambda_k$ be the Lebesgue measure on the state space $\mathcal{S}\subseteq \mathbb{R}^k$. The Markov chain $\{S_t\}$ is $\lambda_k$-irreducible if, for every initial state $s \in \mathcal{S}$ and every set $A \subseteq \mathcal{S}$ with $\lambda_k(A) > 0$, there exists an integer $n \geq 1$ such that $P^n(\mathbf{s}, A) > 0$. This means any set $A$ with positive measure can eventually be reached from any starting state $s$.
\end{definition}

\begin{definition}[Geometric ergodicity]
Let $\mathcal{B}$ and $\lambda$ denote the Borel $\sigma$-field and Lebesgue measure, respectively. A Markov chain $\{S_t\}$ is called geometrically ergodic if there exists a probability measure $\Pi$ on a probability triple $(\mathcal{S},\mathcal{B},\lambda)$ and a constant $\rho > 1$ such that 
$\lim_{n\to\infty} \rho^n \|\mathbb{P}^n(\mathbf{s},\cdot) - \Pi(\cdot)\| = 0$, $\forall \, \mathbf{s} \in \mathcal{S}$ where $\|\cdot\|$ denotes the total variation
 norm. Then, we say the distribution of ${S_t}$ converges to $\Pi$ and ${S_t}$ is asymptotically stationary.
\end{definition}

Denote by $\mathcal{S}\subseteq \mathbb{R}^k$ the state space, $\mathcal{B}^k$ the class of Borel sets on $\mathbb{R}^k$, and $\lambda_k$ the Lebesgue measure on $(\mathbb{R}^k, \mathcal{B}^k)$. Let $\{\mathbf{y}_t\}_{t\in \mathbb{N}}$ be the target sequence, where $\mathbf{y}_t\in \mathbb{R}^N$. Let $\mathbf{h}_t\in \mathbb{R}^{N'}$ denote general hidden states, so that $k=N+N'$. We define a network process with pre-additive noise and no exogenous inputs (for simplicity), where the input at the $t^{th}$ step is the previous observed value $\mathbf{y}_{t-1}$. Let $\boldsymbol{\eta}_t$ be the noise added to the input at the $t^{th}$ time-step. To establish irreducibility, without loss of generality, we add a vanishingly small noise $\boldsymbol{\xi}_t$ to the hidden states. This can be formulated as a homogeneous Markov chain with transition function $\mathcal{F}$:
\begin{equation*} \label{eq:recurrent_network_process1}
\begin{pmatrix}
    \mathbf{y}_t\\
    \mathbf{h}_t
\end{pmatrix} =\mathcal{F}\left(\begin{pmatrix}
    \mathbf{y}_{t-1}\\
    \mathbf{h}_{t-1}
\end{pmatrix}+
\begin{pmatrix}
    \boldsymbol{\eta}_t\\
    \boldsymbol{\xi}_{t}
\end{pmatrix}\right),
\end{equation*}
or equivalently,
\begin{equation} \label{eq:recurrent_network_process}
S_t = \mathcal{F}\left(S_{t-1}+\boldsymbol{\varepsilon}_t\right),
\end{equation}
where $S_t = \left(\mathbf{y}^\top_t, \mathbf{h}^\top_t\right)^\top \in \mathbb{R}^{k}$ denotes the current state, $\boldsymbol{\eta}_t\in \mathbb{R}^{N}$ is a random noise vector sampled from a pre-determined distribution with pdf $f_{\boldsymbol{\eta}}(\cdot)$, and $\boldsymbol{\varepsilon}_t = \left(\boldsymbol{\eta}^\top_t; \boldsymbol{\xi}^\top_{t}\right)^\top$, where $\boldsymbol{\xi}_{t} \in \mathbb{R}^{N'}, \ \boldsymbol{\xi}_{t} \sim \mathcal{N}(\boldsymbol{0}_{N'}, \sigma^2I)$, where $\sigma^2_{ii}$ are vanishingly small positive scalars. 

To prove the geometric ergodicity of the temporal network process with pre-additive noise, we impose the following assumptions:
\begin{enumerate}
    \item[(A1)] The joint density function of $\boldsymbol{\eta}_t$ is continuous and positive everywhere. 
    \item[(A2)] For any initial state $\mathbf{s} \in \mathbb{R}^k$ and any Borel set $A \in \mathcal{B}^k$ with positive Lebesgue measure ($\lambda_k(A) > 0$), the set $\left\{\left(\boldsymbol{\eta}^\top, \boldsymbol{\xi}^\top\right)^\top\in\mathbb{R}^k:\mathcal{F}\left(\mathbf{s} + \left(\boldsymbol{\eta}^\top, \boldsymbol{\xi}^\top\right)^\top\right) \in A\right\}$ has positive Lebesgue measure. 
    \item[(A3)] $\mathbb{E}(\|\boldsymbol{\eta}_t\|^c)<\infty$ for some $c\geq 1$.
\end{enumerate}
(A1) and (A3) are standard assumptions in \cite{zhao2020rnn}, that can be satisfied by, for example, standard Gaussian noise; hence, these assumptions trivially hold for $\boldsymbol{\xi}_{t}$. (A3) assumes that the noise term $\boldsymbol{\eta}_t$ has a finite $c^{th}$-order moment. In our case, it suffices to consider a finite first moment. (A2) is necessary to prove irreducibility in our pre-additive setup, which means that for any state $\mathbf{s} \in \mathbb{R}^k$ and any Borel set $A$ with $\lambda_k(A) > 0$, the image set $\left\{\mathcal{F}\left(\mathbf{s} + \left(\boldsymbol{\eta}^\top, \boldsymbol{\xi}^\top\right)^\top\right) : \left(\boldsymbol{\eta}^\top, \boldsymbol{\xi}^\top\right)^\top \in \mathbb{R}^k\right\}$ must intersect $A$ such that the preimage of this intersection has a positive Lebesgue measure in $\mathbb{R}^k$.

The theorem below states that a recurrent network process with pre-additive noise is geometrically ergodic under the assumptions (A1)-(A3) stated above. The proof is given in Appendix \ref{appendix:Proofs}, and involves showing that the Markov chain $\{S_t\}$ is irreducible, followed by establishing Tweedie's drift criterion \citep{tweedie1983criteria}.

\begin{theorem} \label{thm:Preadditive_Ergodicity} Suppose that Assumptions (A1)-(A3) hold. If there exist real numbers $a \in (0,1)$ and $b$ such that $\|\mathcal{F}(z)\| \leq a \|z\| + b$, then the pre-additive recurrent network process defined in \eqref{eq:recurrent_network_process} is geometrically ergodic.
\end{theorem}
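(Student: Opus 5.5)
The plan is to verify the hypotheses of Tweedie's drift criterion \citep{tweedie1983criteria} for the Markov chain $\{S_t\}$ defined by \eqref{eq:recurrent_network_process}. The chain is time-homogeneous because the innovations $\boldsymbol{\varepsilon}_t=(\boldsymbol{\eta}_t^\top,\boldsymbol{\xi}_t^\top)^\top$ are i.i.d. and independent of $S_{t-1}$. So once we have $\lambda_k$-irreducibility, aperiodicity, the small-set property for compact sets, and a geometric drift inequality outside a compact set, Tweedie's theorem (in the form used by \cite{zhao2020rnn}, or Theorem 15.0.1 of \cite{meyn2012markov}) yields a unique invariant probability measure $\Pi$ and geometric convergence in total variation.

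\textbf{Step 1 (irreducibility).} Fix $\mathbf{s}\in\mathbb{R}^k$ and $A\in\mathcal{B}^k$ with $\lambda_k(A)>0$. The one-step transition probability is
\[
P(\mathbf{s},A)=\int \mathbf{1}\{\mathcal{F}(\mathbf{s}+\mathbf{e})\in A\}\, f_{\boldsymbol{\eta}}(\mathbf{e}_1) f_{\boldsymbol{\xi}}(\mathbf{e}_2)\,d\mathbf{e}.
\]
By (A1), $f_{\boldsymbol{\eta}}$ is positive everywhere, and the Gaussian density of $\boldsymbol{\xi}_t$ is also positive everywhere, so the joint density of $\boldsymbol{\varepsilon}_t$ is strictly positive on $\mathbb{R}^k$. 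By (A2), the set of $\mathbf{e}$ with $\mathcal{F}(\mathbf{s}+\mathbf{e})\in A$ has positive Lebesgue measure. Integrating a strictly positive density over a set of positive measure gives $P(\mathbf{s},A)>0$ with $n=1$. This gives $\lambda_k$-irreducibility and, since one step suffices from every state, aperiodicity as well.

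\textbf{Step 2 (compact sets are small).} I would argue that every compact set $C$ is petite/small, so that the drift condition only needs to hold outside a compact set. Continuity of the joint density (A1) together with continuity of $\mathcal{F}$ makes the chain weak Feller: $\mathbf{s}\mapsto \mathbb{E}\,g(\mathcal{F}(\mathbf{s}+\boldsymbol{\varepsilon}))$ is continuous for bounded continuous $g$ by dominated convergence. For a $\lambda_k$-irreducible T-chain, or an irreducible weak Feller chain whose irreducibility measure has open support, compact sets are petite \citep{meyn2012markov}. This is the step I expect to be the main obstacle, because (A2) gives only pointwise positivity, not a uniform lower bound over $C$. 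I would try the weak Feller route first. If that fails, I would fall back on assuming $\mathcal{F}$ is continuous, which holds for the network maps in question (compositions of affine maps with continuous activations), and extract a uniform minorization on $C$ by lower semicontinuity of $\mathbf{s}\mapsto P(\mathbf{s},A)$.

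\textbf{Step 3 (drift).} Take $V(\mathbf{s})=1+\|\mathbf{s}\|$, which is nonnegative, continuous and unbounded. Using the growth condition, the triangle inequality, and (A3) with $c\ge 1$ (which makes $\mathbb{E}\|\boldsymbol{\varepsilon}_t\|<\infty$, since Gaussian $\boldsymbol{\xi}_t$ has all moments):
\[
\mathbb{E}[V(S_t)\mid S_{t-1}=\mathbf{s}] \le 1+ a\|\mathbf{s}\| + a\,\mathbb{E}\|\boldsymbol{\varepsilon}_t\| + b .
\]
Pick $\rho'\in(a,1)$. The right-hand side is at most $\rho' V(\mathbf{s})$ whenever
\[
\|\mathbf{s}\|\ge \frac{1+b+a\,\mathbb{E}\|\boldsymbol{\varepsilon}_t\|-\rho'}{\rho'-a},
\]
that is, outside a closed ball $C$. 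On $C$ the conditional expectation is bounded. This is exactly Tweedie's geometric drift condition with the compact (hence small, by Step 2) set $C$.

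Combining Steps 1–3, Tweedie's criterion gives geometric ergodicity. Concretely, there exist $\Pi$ and $\rho>1$ with $\rho^n\|P^n(\mathbf{s},\cdot)-\Pi\|\to0$ for every $\mathbf{s}$, which also shows that $\{S_t\}$ is asymptotically stationary.
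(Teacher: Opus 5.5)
Your proposal is correct and follows essentially the same route as the paper's proof. Both use one-step $\lambda_k$-irreducibility from (A1)--(A2), the Feller property to make the compact ball small, and Tweedie's drift criterion with $V(\mathbf{s})=1+\|\mathbf{s}\|$, the growth bound and a finite first moment of $\boldsymbol{\varepsilon}_t$; your fixed rate $\rho'\in(a,1)$ is a cleaner statement than the paper's $\mathbf{s}$-dependent $\delta$.

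The step you flag as the main obstacle does not need continuity of $\mathcal{F}$, because the noise enters before $\mathcal{F}$. Write $P(\mathbf{s},A)=\int \mathbb{I}_A(\mathcal{F}(u))\,f_{\boldsymbol{\varepsilon}}(u-\mathbf{s})\,du$; continuity of translation in $L^1$ then makes the chain strong Feller. For compact $C$ and bounded $B$ with $\lambda_k(B)>0$, (A1) gives $m:=\inf_{\mathbf{s}\in C,\,u\in B}f_{\boldsymbol{\varepsilon}}(u-\mathbf{s})>0$, hence $P(\mathbf{s},A)\ge m\,\lambda_k\bigl(B\cap\mathcal{F}^{-1}(A)\bigr)$ for all $\mathbf{s}\in C$. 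That is a direct one-step minorization.
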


Next, we proceed to prove that closed-loop 1-step recurrent GCEN, STEN, and MVEN processes are geometrically ergodic under mild assumptions. The output of the GCEN and STEN processes are determined by an LSTM network, which processes (perturbed) spatial embeddings as input. The hidden unit computations in an LSTM network involve several gating operations, as outlined below.

\begin{equation*} \label{eqn:LSTM}
    \begin{aligned}
        \mathbf{f}_t &= \sigma(W_{fh}\mathbf{h}_{t-1} + W_{fy}\mathbf{y}_{t-1}+b_f) &\text{(forget gate)}\\ 
        \mathbf{i}_t &= \sigma(W_{ih}\mathbf{h}_{t-1} + W_{iy}\mathbf{y}_{t-1}+b_i) &\text{(input gate)}\\
        \mathbf{o}_t &= \sigma(W_{oh}\mathbf{h}_{t-1} + W_{oy}\mathbf{y}_{t-1}+b_o) &\text{(output gate)}\\
        \widetilde{\mathbf{c}}_t &= \tanh(W_{ch}\mathbf{h}_{t-1} +W_{cy}\mathbf{y}_{t-1} + b_c) &\text{(candidate cell state)}\\
        \mathbf{c}_t &= \mathbf{i}_t \odot \widetilde{\mathbf{c}}_t + \mathbf{f}_t \odot \mathbf{c}_{t-1} &\text{(cell state)}\\
        \mathbf{h}_t &= \mathbf{o}_t \odot \tanh(\mathbf{c}_t) &\text{(hidden state)}\\
        \mathbf{z}_t &= g(W_{zh}\mathbf{h}_t + b_z) &\text{(output)}
    \end{aligned}
\end{equation*}
where, $\mathbf{y}_0 = \mathbf{0}_N, \mathbf{h}_0 = \mathbf{0}_{N'}$, $\mathbf{y}_t, \mathbf{z}_t\in \mathbb{R}^N$, $\mathbf{h}_t, \mathbf{c}_t, \widetilde{\mathbf{c}}_t, \mathbf{o}_t, \mathbf{i}_t, \mathbf{f}_t \in \mathbb{R}^{N'}$ for $t=\{1, \ldots, T\}$, $g, \sigma,$ and $\tanh$ are respectively an elementwise output function and the elementwise sigmoid and hyperbolic tangent functions, and $\odot$ is the elementwise (Hadamard) product. The hidden weight matrices $W_{fh}, W_{ih}, W_{oh}, W_{ch} \in \mathbb{R}^{N'\times N'}$, input weight matrices $W_{fy}, W_{iy}, W_{oy}, W_{cy} \in \mathbb{R}^{N' \times N}$, output weight matrix $W_{zh}\in \mathbb{R}^{N\times N'}$, and bias terms $b_f, b_i, b_o, b_c \in \mathbb{R}^{N'}$ and $b_z\in \mathbb{R}^N$. Let $f_{spatial}:\mathbb{R}^N \to \mathbb{R}^N (\mathbb{R}^{1\times N \times 1} \to \mathbb{R}^{1\times N \times 1})$ denote the transformation that maps the input sequence to spatial embeddings, which is achieved by the graph convolution operations in GCEN and STAR-layer in STEN. Hence, the GCEN and STEN processes can be defined as:

\begin{equation}\label{eqn:GCEN/STEN process}
    \begin{pmatrix}
        \mathbf{y}_t\\
        \mathbf{h}_t\\
        \mathbf{c}_t
    \end{pmatrix} = \mathcal{F}_{LSTM}\left(
    \begin{pmatrix}
        f_{spatial}(\mathbf{y}_{t-1})\\ \mathbf{h}_{t-1}\\ \mathbf{c}_{t-1}
    \end{pmatrix}+
    \begin{pmatrix}
        \boldsymbol{\eta}_t \\ \boldsymbol{\xi}_{t} \\ \boldsymbol{\psi}_{t}
    \end{pmatrix}\right),
\end{equation}
where $\mathcal{F}_{LSTM}$ denotes the transition function of the LSTM network and $\boldsymbol{\psi}_t\in \mathbb{R}^{N'}$ is a vanishingly small noise component added to the cell state, whose distribution follows that of $\boldsymbol{\xi}_t$. To prove that the simplified versions of GCEN and STEN processes are geometrically ergodic, we make another set of assumptions as follows.

\begin{enumerate}
    \item[(A4)] The noise-perturbed output generated by the spatial function $f_{spatial}$ is bounded everywhere: $f_{spatial}(\mathbf{x}) + \boldsymbol{\eta} \in B_\infty^N$.
    \item[(A5)] $\widetilde{M} := \sup_{x\in B_\infty^{N'}} \left\|g(W_{zh}x+b_z)\right\|_1 < \infty$.
    \item[(A6)] $\sigma(\left\|W_{fh}\right\|_\infty + \left\|W_{fy}\right\|_\infty + \left\|b_f\right\|_\infty) \leq a$ for some $a < 1$.
\end{enumerate}
Assumption (A4) can be made true, for instance, with bounded activation functions such as tanh and sigmoidal, used inside $f_{spatial}$ (GCN or STAR layers). Assumption (A5) ensures that the $\ell_1$-norm of the final output from the process is finite, given that the hidden state value is in $B_{\infty}^{N'}$, which denotes the ${N'}$-dimensional $\ell_\infty$-ball. Assumption (A6) means that the forget gate is contractive with respect to the matrix $\ell_\infty$-norm, denoted by $\left\|\cdot\right\|_\infty$ and defined as $\left\|P_{m\times n}\right\|_\infty = \max_{1\leq i \leq m}\sum_{j=1}^n |p_{ij}|$. Now, we state the main results for geometric ergodicity and asymptotic stationarity of the closed-loop spatiotemporal engression Markov processes.

\begin{theorem} \label{thm:GCEN_ergodicity}
    Suppose that Assumptions (A1)-(A6) hold. Then the GCEN and STEN processes defined in \eqref{eqn:GCEN/STEN process} are geometrically ergodic.
\end{theorem}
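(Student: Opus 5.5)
The plan is to reduce Theorem \ref{thm:GCEN_ergodicity} to Theorem \ref{thm:Preadditive_Ergodicity}. We treat $S_t=(\mathbf{y}_t^\top,\mathbf{h}_t^\top,\mathbf{c}_t^\top)^\top\in\mathbb{R}^{k}$ with $k=N+2N'$. Set $\mathcal{G}(\mathbf{s}):=\mathcal{F}_{LSTM}(\tilde f_{spatial}(\mathbf{s}))$, where $\tilde f_{spatial}$ applies $f_{spatial}$ to the $\mathbf{y}$-block and is the identity on $(\mathbf{h},\mathbf{c})$. Then \eqref{eqn:GCEN/STEN process} is a homogeneous Markov chain driven by the i.i.d. innovations $\boldsymbol{\varepsilon}_t=(\boldsymbol{\eta}_t^\top,\boldsymbol{\xi}_t^\top,\boldsymbol{\psi}_t^\top)^\top$, so it has the pre-additive shape of \eqref{eq:recurrent_network_process}.

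\textbf{Irreducibility.} The noise vector has a continuous, everywhere positive joint density on $\mathbb{R}^k$. This follows from (A1) for $\boldsymbol{\eta}_t$ and from Gaussianity of $\boldsymbol{\xi}_t,\boldsymbol{\psi}_t$. Assumption (A2), read for the composed map, says the preimage of any $A$ with $\lambda_k(A)>0$ has positive measure. Hence $P(\mathbf{s},A)=\int \mathbf{1}\{\mathcal{F}_{LSTM}(\cdot)\in A\}\,f_{\boldsymbol{\varepsilon}}>0$, so the chain is $\lambda_k$-irreducible in one step. Continuity of $\mathcal{G}$ and of the density gives the Feller property, so compact sets are small sets and aperiodicity follows from positivity of the one-step kernel. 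This is the same route as in the proof of Theorem \ref{thm:Preadditive_Ergodicity}.

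\textbf{Drift.} Since $\mathcal{F}_{LSTM}$ is not globally contractive in the full state, I will not apply Theorem \ref{thm:Preadditive_Ergodicity} verbatim. Instead I verify Tweedie's criterion with the test function $V(\mathbf{s})=1+\|\mathbf{c}\|_\infty+\|\mathbf{y}\|_1$.
\begin{itemize}
\item \emph{Hidden state.} We have $\mathbf{h}_t=\mathbf{o}_t\odot\tanh(\mathbf{c}_t)$, so $\|\mathbf{h}_t\|_\infty\le1$ always. Thus $\mathbf{h}_t\in B_\infty^{N'}$ and, by (A5), $\|\mathbf{y}_t\|_1\le\widetilde M$ deterministically.
\item \emph{Cell state.} Only the forget gate couples $\mathbf{c}$ to itself. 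The gate argument uses the bounded inputs $f_{spatial}(\mathbf{y}_{t-1})+\boldsymbol{\eta}_t\in B_\infty^N$ (A4) and $\mathbf{h}_{t-1}+\boldsymbol{\xi}_t$. By (A6) and monotonicity of $\sigma$, each forget-gate coordinate is at most $a<1$, up to the vanishing $\boldsymbol{\xi}_t$ contribution.
\item \emph{Recursion.} Since $\|\mathbf{i}_t\odot\widetilde{\mathbf{c}}_t\|_\infty\le1$, we get $\|\mathbf{c}_t\|_\infty\le a\|\mathbf{c}_{t-1}+\boldsymbol{\psi}_t\|_\infty+1$. Taking expectations and using (A3) for finite moments gives $\mathbb{E}[V(S_t)\mid S_{t-1}=\mathbf{s}]\le aV(\mathbf{s})+b'$.
\end{itemize}
This is exactly the linear-growth condition $\|\mathcal{F}(z)\|\le a\|z\|+b$ of Theorem \ref{thm:Preadditive_Ergodicity} in the weighted norm $\|\mathbf{y}\|_1+\|\mathbf{c}\|_\infty$. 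For any $a<\tilde a<1$, the drift $\mathbb{E}V(S_1)\le\tilde aV(\mathbf{s})$ holds outside the compact set $\{V\le b'/(\tilde a-a)\}$. Tweedie's criterion then yields geometric ergodicity, and hence asymptotic stationarity.

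\textbf{Main obstacle.} The hardest step is the forget-gate bound. Assumption (A6) is stated with $\mathbf{y}_{t-1}$ entering the gate, whereas the actual gate input is the perturbed spatial embedding, and the hidden state carries the additive $\boldsymbol{\xi}_t$ noise, so its entries need not lie in $[-1,1]$.

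I would handle this by interpreting $\mathbf{y}$ in (A6) as the perturbed LSTM input, which lies in $B_\infty^N$ by (A4). For the unbounded Gaussian part, I would condition on $\|\boldsymbol{\xi}_t\|_\infty\le\delta$ and choose $\delta$ so that $\sigma(\|W_{fh}\|_\infty(1+\delta)+\|W_{fy}\|_\infty+\|b_f\|_\infty)\le a'<1$. The complementary event has probability $o(1)$ as the variances vanish. On that event the forget gate is trivially at most $1$, so its contribution to the drift is at most $P(\|\boldsymbol{\xi}_t\|_\infty>\delta)\cdot V$. This term is absorbed into an effective contraction $a'+o(1)<1$.
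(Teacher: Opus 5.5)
Your proposal is correct under the stated hypotheses and has the same skeleton as the paper's proof. Irreducibility comes from (A1)--(A2), the only contraction comes from the forget gate via (A6), and $\mathbf h_t$, $\mathbf y_t$ and $\mathbf i_t\odot\widetilde{\mathbf c}_t$ are bounded by (A4)--(A5) and the ranges of $\sigma$ and $\tanh$. Where you differ is the drift step.

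The paper proves the global bound $\|\mathcal F_{LSTM}(u,v,w)\|_1\le a_0\|(u,v,w)\|_1+\widetilde M+2N'$ in the $\ell_1$ norm, using $\|f(u,v)\odot w\|_1\le\|f(u,v)\|_\infty\|w\|_1$, and then cites Theorem~\ref{thm:Preadditive_Ergodicity} as a black box. To get $\|f(u,v)\|_\infty\le a_0$ it simply declares $v=\mathbf h_{t-1}+\boldsymbol\xi_t\in B_\infty^{N'}$, which fails with positive probability for Gaussian $\boldsymbol\xi_t$. You verify Tweedie's criterion directly with $V=1+\|\mathbf c\|_\infty+\|\mathbf y\|_1$ and split on $\{\|\boldsymbol\xi_t\|_\infty\le\delta\}$, which repairs exactly that step. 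The paper's route buys brevity and reuse of Theorem~\ref{thm:Preadditive_Ergodicity}; yours buys a drift inequality that actually holds. You do not even need the variances to vanish. With $p=\mathbb P(\|\boldsymbol\xi_t\|_\infty>\delta)<1$, the coefficient of $\|\mathbf c\|_\infty$ is at most $a'(1-p)+p=1-(1-a')(1-p)<1$, and this requires no independence between $\boldsymbol\xi_t$ and $\boldsymbol\psi_t$.

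Two loose ends remain, both inherited from the paper.
\begin{itemize}
\item \emph{Initial state.} $\|\mathbf h_{t-1}\|_\infty\le 1$ holds only for $t-1\ge 1$, not at an arbitrary initial state $\mathbf s\in\mathbb R^k$. Also, because $V$ ignores $\mathbf h$, the set $\{V\le K\}$ is not compact in $\mathbb R^k$. Both are fixed by working on $\mathbb R^N\times[-1,1]^{N'}\times\mathbb R^{N'}$, which the chain enters in one step and never leaves, or by using a two-step drift condition.
\item \emph{Assumption (A2).} (A2) cannot literally hold for this chain. Since $\mathbf y_t=g(W_{zh}\mathbf h_t+b_z)$, the state $S_t$ is confined to a $\lambda_k$-null graph. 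The irreducibility step, yours and the paper's alike, should therefore be stated for the reduced chain $(\mathbf h_t,\mathbf c_t)$ and Lebesgue measure on its reachable set, not for $\lambda_k$ on $\mathbb R^k$.
\end{itemize}
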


\begin{corollary}[Asymptotic stationarity] \label{cor:asymptotic_stationarity}
    Let ${S_t} = \left(\mathbf{y}_t^\top, \mathbf{h}_t^\top, \mathbf{c}_t^\top\right)^\top$ be the GCEN or STEN process satisfying the conditions of Theorem \ref{thm:GCEN_ergodicity}, then $\{S_t\}$ is asymptotically stationary.
\end{corollary}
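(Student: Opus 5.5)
The corollary is essentially a direct consequence of Theorem \ref{thm:GCEN_ergodicity} combined with the definition of geometric ergodicity. I would first recall that, under (A1)--(A6), Theorem \ref{thm:GCEN_ergodicity} gives a probability measure $\Pi$ on $(\mathcal{S},\mathcal{B})$, with $\mathcal{S}\subseteq\mathbb{R}^{N+2N'}$, and a constant $\rho>1$ such that $\rho^n\|\mathbb{P}^n(\mathbf{s},\cdot)-\Pi(\cdot)\|\to 0$ for every initial state $\mathbf{s}$. Here $\mathbb{P}^n$ is the $n$-step transition kernel of the homogeneous Markov chain $S_t=\left(\mathbf{y}_t^\top,\mathbf{h}_t^\top,\mathbf{c}_t^\top\right)^\top$ defined in \eqref{eqn:GCEN/STEN process}. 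Since $\rho^n\ge 1$, this already yields $\|\mathbb{P}^n(\mathbf{s},\cdot)-\Pi\|\to 0$. In other words, the marginal law of $S_n$ started at the deterministic state $S_0=(\mathbf{0}_N^\top,\mathbf{0}_{N'}^\top,\mathbf{0}_{N'}^\top)^\top$ converges to $\Pi$ in total variation. That is exactly the definition of asymptotic stationarity stated after the definition of geometric ergodicity.

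To make the statement substantive, I would then verify two further facts. First, $\Pi$ is invariant, i.e.\ $\Pi\mathbb{P}=\Pi$. For this I write $\mathbb{P}^{n+1}(\mathbf{s},\cdot)=\int \mathbb{P}^n(\mathbf{s},d\mathbf{u})\,\mathbb{P}(\mathbf{u},\cdot)$. Because the map $\mu\mapsto\mu\mathbb{P}$ is a total-variation contraction, letting $n\to\infty$ gives $\Pi=\Pi\mathbb{P}$. So a chain started from $\Pi$ is strictly stationary, and $\Pi$ is the stationary law being approached. Second, the convergence extends from point masses to an arbitrary initial distribution $\mu$. This follows from dominated convergence applied to $\|\mu\mathbb{P}^n-\Pi\|\le\int\|\mathbb{P}^n(\mathbf{s},\cdot)-\Pi\|\,\mu(d\mathbf{s})$, since the integrand is bounded by $2$ and tends to $0$ pointwise.

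Finally, I would lift convergence of marginals to convergence of finite-dimensional distributions, so that the shifted process $\{S_{n+t}\}_{t\ge0}$ converges to the stationary chain. The joint law of $(S_{n},\dots,S_{n+m})$ equals the law of $S_n$ composed with the fixed kernels $\mathbb{P}$. Total-variation distance does not increase under such a composition, so it is bounded by $\|\mathbb{P}^n(\mathbf{s},\cdot)-\Pi\|\to0$.

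I expect no real obstacle here, since the work (irreducibility via (A1)--(A2) and the drift condition via (A4)--(A6)) is contained in Theorem \ref{thm:GCEN_ergodicity}. The only point needing care is making sure $\Pi$ is the \emph{unique} invariant law. Uniqueness follows because any invariant $\Pi'$ satisfies $\Pi'=\Pi'\mathbb{P}^n\to\Pi$ by the second fact above. Hence the limit does not depend on initialisation, and the rate is geometric.
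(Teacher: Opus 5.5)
Your proposal is correct and follows the paper's route. The paper gives no separate argument: it treats the corollary as an immediate consequence of Theorem~\ref{thm:GCEN_ergodicity} together with the definition of geometric ergodicity, whose last sentence defines asymptotic stationarity as total-variation convergence of $\mathbb{P}^n(\mathbf{s},\cdot)$ to $\Pi$, which is exactly your first paragraph. Your additional checks (invariance of $\Pi$ via non-expansiveness of $\mu\mapsto\mu\mathbb{P}$, extension to arbitrary initial laws, convergence of finite-dimensional distributions, and uniqueness of $\Pi$) are all valid and go beyond what the paper states, but the corollary as stated does not need them.
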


\begin{remark}
    If the spatial function $f_{spatial}$ is the identity function, then \eqref{eqn:GCEN/STEN process} represents the MVEN process. 
\end{remark}

\begin{corollary} \label{cor:MVEN_ergodicity}
    Under assumptions (A1)-(A6), the MVEN process is geometrically ergodic and asymptotically stationary.
\end{corollary}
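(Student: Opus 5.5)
The statement to prove is Corollary \ref{cor:MVEN_ergodicity}. I would reduce it directly to Theorem \ref{thm:GCEN_ergodicity} and Corollary \ref{cor:asymptotic_stationarity}, using the preceding Remark. That Remark says the MVEN process is exactly the process \eqref{eqn:GCEN/STEN process} with $f_{spatial}=\mathrm{id}$. The closed-loop 1-step MVEN recursion is therefore
\[
S_t=\mathcal{F}_{LSTM}\bigl((\mathbf{y}_{t-1}+\boldsymbol{\eta}_t)^\top,(\mathbf{h}_{t-1}+\boldsymbol{\xi}_t)^\top,(\mathbf{c}_{t-1}+\boldsymbol{\psi}_t)^\top\bigr)^\top,
\]
with the same state $S_t=(\mathbf{y}_t^\top,\mathbf{h}_t^\top,\mathbf{c}_t^\top)^\top$ and the same noise structure. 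It is a time-homogeneous Markov chain on $\mathbb{R}^{N+2N'}$, so the plan has two steps.

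First, I would confirm that the hypotheses of Theorem \ref{thm:GCEN_ergodicity} make sense and hold when $f_{spatial}$ is the identity.
\begin{itemize}
\item (A1), (A3), (A5) and (A6) involve only the noise, the output layer and the forget gate. They do not depend on $f_{spatial}$, so they carry over unchanged.
\item (A2) is stated for the generic transition map $\mathcal{F}$. For MVEN that map is $\mathcal{F}_{LSTM}$ applied to the perturbed state, so (A2) is assumed for this map directly.
\item (A4), specialised to $f_{spatial}=\mathrm{id}$, reads $\mathbf{y}+\boldsymbol{\eta}\in B_\infty^N$. This is the only place where the identity map could in principle cause trouble.
\end{itemize}
Since the corollary assumes (A1)--(A6), (A4) is granted in this specialised form. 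In the proof of Theorem \ref{thm:GCEN_ergodicity}, (A4) is used only to bound the gate pre-activations uniformly. That gives $\|\mathbf{c}_t\|_\infty\le a\|\mathbf{c}_{t-1}+\boldsymbol{\psi}_t\|_\infty+1$, using $|\mathbf{i}_t\odot\widetilde{\mathbf{c}}_t|\le 1$ together with (A6). It also gives $\|\mathbf{y}_t\|_1\le\widetilde M$ from (A5), because $\mathbf{h}_t\in B_\infty^{N'}$. Together these yield a drift inequality of the form $\|\mathcal{F}(z)\|\le a\|z\|+b$ in a suitable norm, and that argument never uses the particular form of $f_{spatial}$.

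Second, I would invoke Theorem \ref{thm:GCEN_ergodicity} to conclude that the MVEN chain is geometrically ergodic. Geometric ergodicity means there is a probability measure $\Pi$ with $\rho^n\|\mathbb{P}^n(\mathbf{s},\cdot)-\Pi\|\to0$ for every starting state $\mathbf{s}$. By the Definition, this is exactly asymptotic stationarity, which is the content of Corollary \ref{cor:asymptotic_stationarity} applied verbatim.

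The main obstacle is making sure nothing in the proof of Theorem \ref{thm:GCEN_ergodicity} uses a property of $f_{spatial}$ beyond (A4), such as boundedness of $f_{spatial}$ alone, or an irreducibility argument relying on a bounded activation. I would handle this by re-reading the irreducibility step and checking that it goes through (A1)--(A2) for the composite map. If it does not, I would fall back to proving irreducibility directly for MVEN. The key fact is that the noise $\boldsymbol{\eta}_t$ enters additively on the raw input with an everywhere-positive density, so the preimage condition in (A2) is applied to $\mathcal{F}_{LSTM}$ itself. After that, the drift bound from the second step is fed into Tweedie's criterion exactly as in Theorem \ref{thm:Preadditive_Ergodicity}.
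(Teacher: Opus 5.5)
Your proposal is correct and follows the paper's own argument. The paper obtains the corollary in one line by viewing MVEN as the case $f_{spatial}=\mathrm{id}$ of \eqref{eqn:GCEN/STEN process}, applying Theorem \ref{thm:GCEN_ergodicity}, and reading off asymptotic stationarity from the definition as in Corollary \ref{cor:asymptotic_stationarity}; your hypothesis check is a more explicit version of that reduction, with one caveat on the point you flagged: (A4) read literally as $\mathbf{y}+\boldsymbol{\eta}\in B_\infty^N$ (and equally its GCEN/STEN form) cannot hold for noise with the everywhere-positive density required by (A1), so here, as in the paper, the conclusion holds only formally under (A1)--(A6) taken as given.
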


\begin{remark}

\begin{enumerate}
    \item \noindent Corollaries \ref{cor:asymptotic_stationarity} and \ref{cor:MVEN_ergodicity} are a direct consequence of Theorem \ref{thm:GCEN_ergodicity}. The proof for Theorem \ref{thm:GCEN_ergodicity} is provided in Appendix \ref{appendix:Proofs}. 
    \item The theoretical injection of (vanishingly small) noise into the hidden states to ensure irreducibility is inspired by the `wobbling trick' introduced by \cite{jaeger2001echo} for echo-state networks (ESNs). In ESNs, injection of uniform noise inside the network states makes them wobble around the perfect periodic state sequence and stabilizes the fixed points. In our setup, the noise added to the spatial embeddings naturally propagates through the temporal module's hidden states; therefore, we omit explicit noise re-injection in our practical model implementation to avoid redundancy.
    \item In practice, the LSTM forget gate $f_t = \sigma(\cdot)$ inherently outputs values in $(0, 1)$, naturally favoring a contractive state update. The only scenario where contractivity fails is if the pre-activations grow unboundedly positive, causing $f_t \to 1$. This unbounded growth can be prevented in practical implementations by standard input scaling and $L_2$ weight regularization, which naturally bounds the pre-activation magnitude. Consequently, the maximum expected value of the forget gate remains strictly bounded away from $1$.
\end{enumerate}
\end{remark}

\subsection{Practical Implications of Theoretical Results}
The theoretical results derived for the simplified, closed-loop recurrent processes provide an essential mathematical foundation for MVEN, GCEN, and STEN. While our empirical forecasting tasks operate in a finite-sample, multi-step regime on real epidemic data, and without the explicit injection of hidden-state noise used in the mathematical proofs, evaluating this idealized setting allows us to formally demonstrate the stability of the underlying networks. For example, establishing irreducibility in the theoretical setting indicates that the core network structure is not fundamentally confined to a limited portion of the state space; it possesses the architectural capacity to transition between disparate states (e.g., from a low endemic period to a high-incidence outbreak). Furthermore, the results on geometric ergodicity demonstrate that the autonomous versions of these networks are structurally stable and resistant to `explosive behavior' over time. This foundational stability ensures that, under the specified bounded and contractive conditions, the influence of arbitrary initial conditions decays exponentially fast. Moreover, the theoretical assumptions can be enforced in practice by taking suitable noise distribution, activation functions, and appropriately scaling the inputs. This suggests that the probabilistic forecasts observed empirically are grounded in a mathematically non-explosive architecture, making them trustworthy tools for decision-making.


\subsection{Simulation Study} \label{sec:Simulation}

To empirically verify the theoretical stability guarantees of the closed-loop processes, we modeled the generative processes as autonomous, homogeneous Markov chains. For each process, we conducted simulations consisting of $200$ independent trials to empirically verify the asymptotic stationarity and geometric ergodicity. The $1$-step recurrent GCEN and STEN architectures were instantiated with GCN and STAR spatial modules featuring hidden dimensions of $D_{hidden} = 16, 32, \text{and } 64$, integrated with an LSTM cell for the temporal module. The MVEN architecture was instantiated with an LSTM cell, with the same settings for hidden units. For each trial in GCEN, we generated a random graph with a random number of spatial nodes $N$ from $10$ to $60$. For STEN, we instantiated the $N\times N$ spatial weights matrix (where $N\in \{10,\ldots,60\}$) with random entries such that the diagonal elements are zero and each row sums to one. The hidden and cell states $\mathbf{h}_0$ and $\mathbf{c}_0$, of shape $N\times D_{hidden}$, in all cases, were initialized with zeros. The input state vector $\mathbf{Y}_0$ of shape $1 \times N \times 1$ (the first dimension corresponds to time-step, and the last to that of input dimension) was also initialized with zeros. The noise injection was carried out with a standard Gaussian noise. The models were executed in a closed-loop autoregressive mode over a horizon of $T=500$ time steps for each trial, after a burn-in period of $50$.

\begin{figure}[!t]
        \centering
        \includegraphics[width=0.8\linewidth]{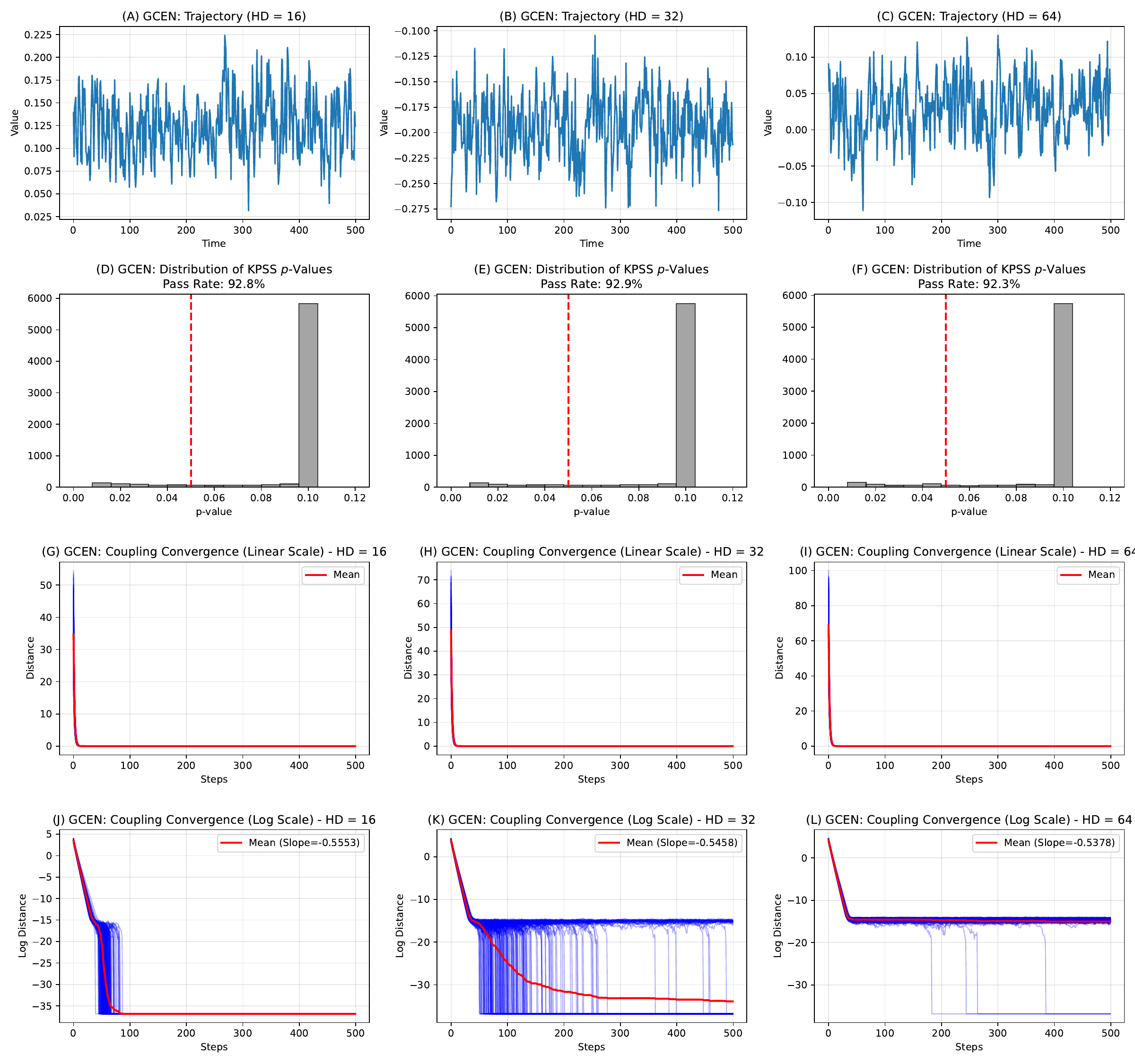}
        \caption{Simulation results for GCEN: empirical verification of (marginal) asymptotic stationarity and geometric ergodicity of the GCEN process through simulations for 500 time steps and 200 trials.}
        \label{fig:GCEN_Simulations}
\end{figure}

The following analysis details the simulation results of the closed-loop GCEN process; results for the STEN and MVEN processes are substantially similar and are deferred to Appendix \ref{appendix:figures} (Figs. \ref{fig:STEN_Simulations} and \ref{fig:MVEN_Simulations}, respectively). First, we verify the marginal asymptotic stationarity for each spatial node. We applied the Kwiatkowski-Phillips-Schmidt-Shin (KPSS) test to the generated trajectories of all $N$ nodes, at a significance level of $0.05$. Unlike standard unit root tests, the KPSS null hypothesis posits stationarity; thus, we aggregated the `pass rate' defined as the proportion of nodal time series where the null hypothesis could not be rejected ($p > 0.05$), i.e., the proportion of time series that are stationary. We obtained pass rates of $92.8\%$, $92.9\%$, and $92.3\%$, respectively for GCEN models simulated with $16$, $32$, and $64$ hidden dimensions, signifying strong evidence of asymptotic stationarity. In Fig. \ref{fig:GCEN_Simulations}, panels (A)-(C) present the simulated values of the time series for an example node (the last node in the last trial) for each model (hidden dimension setting). Panels (D)-(F) illustrate the distribution of $p$-values of the KPSS test, with the dashed red vertical line denoting $p=0.05$. 

To verify geometric ergodicity, we employed a synchronous coupling strategy. In each of the $200$ trials, two independent chains were initialized at divergent states, $\mathbf{S}_0 = (\mathbf{Y}_0, \mathbf{h}_0, \mathbf{c_0})$ and $\mathbf{S}'_0 = (\mathbf{Y}'_0, \mathbf{h}'_0, \mathbf{c}'_0)$, but were driven by an identical sequence of noise realizations $\left\{\boldsymbol{\eta}_t\right\}_{t \ge 1}$. We monitored the Euclidean distance between the trajectories as additive norm, $d_t = \left\|\mathbf{Y}_t - \mathbf{Y}'_t\right\|_2 + \left\|\mathbf{h}_t - \mathbf{h}'_t\right\|_2 + \left\|\mathbf{c}_t - \mathbf{c}'_t\right\|_2 $. Ideally, for a geometrically ergodic system, this distance should contract exponentially, adhering to $d_t \le C \zeta^t$ for some $\zeta < 1$. We quantified this by fitting a linear regression to the logarithmic distance, $\ln(d_t) \approx \gamma t + \kappa$, and reporting the mean decay rate $\bar{\gamma}$ across all trials. The regression window was restricted to the regime where $d_t > 10^{-6}$ (an arbitrarily chosen small float) to avoid artifacts arising from floating-point machine precision floors. The resulting negative slopes of the mean logarithmic distance, $\bar{\gamma} \approx -0.56, -0.55,$ and $-0.54$, respectively, for the models with hidden dimensions $16, \ 32,$ and $64$, confirms exponential contraction before the coupled chains effectively merge (Fig. \ref{fig:GCEN_Simulations} (J)-(L) and (G)-(I)), which indicates that the system forgets initial conditions at a geometric rate, validating the contractive drift of the GCEN process. Simulation analyses indicate that data generated from the closed-loop spatiotemporal engression processes exhibit asymptotic stationarity and temporal stability, lacking explosive divergence. Nevertheless, in practical applications, models trained on finite samples yield robust forecasts across both stationary and nonstationary time series dynamics.

\section{Application to Real-World Epidemic Datasets} \label{sec:evaluation_on_rw_data}
This section outlines the experimental framework for evaluating the proposed models on real-world epidemic datasets.

\subsection{Datasets} \label{sec:global_properties}
\begin{figure}[!ht]
        \centering
        \includegraphics[width=\linewidth]{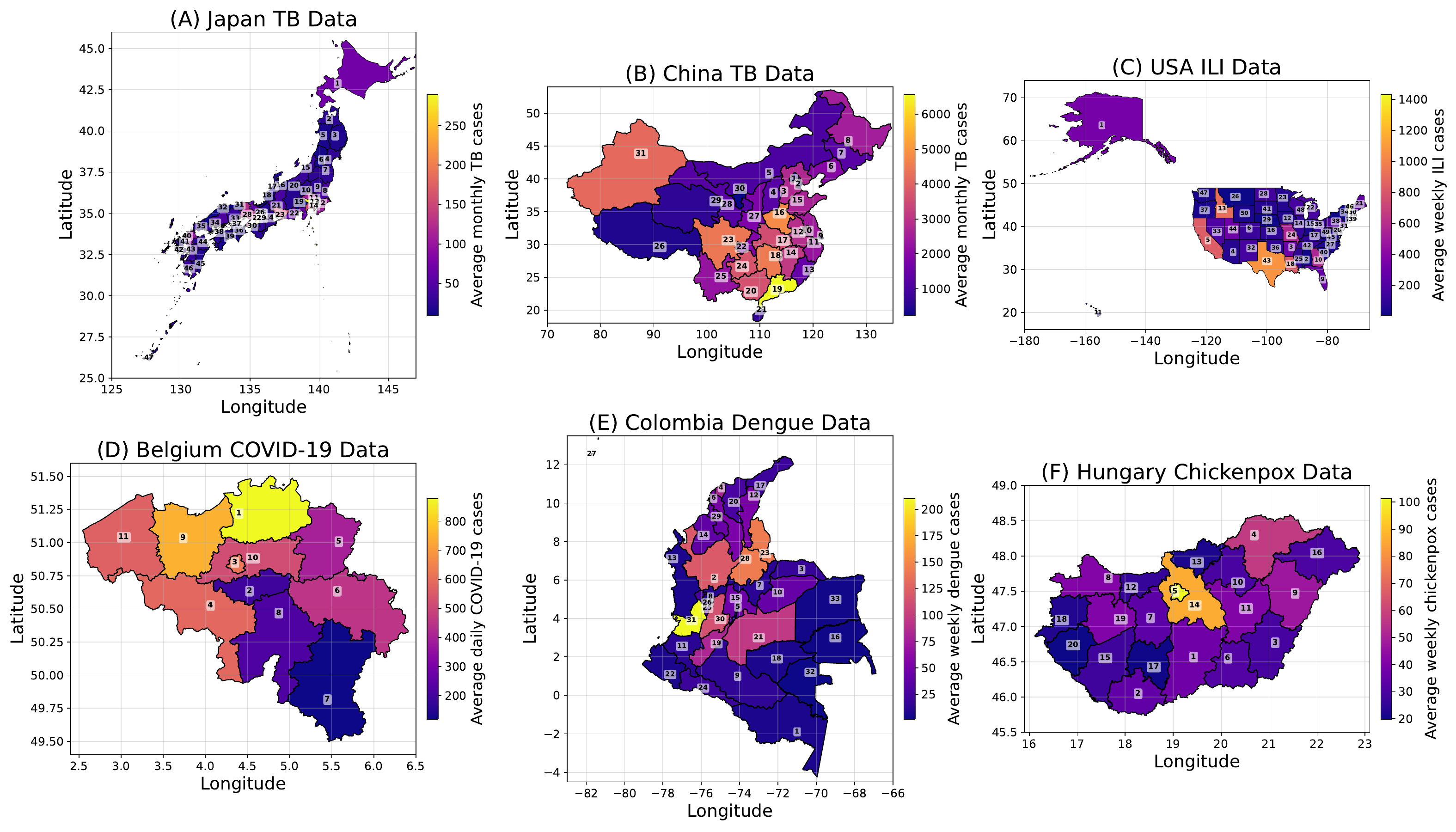}
        \caption{Geographical maps of the countries under study: (A) Japan, (B) China, (C) USA, (D) Belgium, (E) Colombia, and (F) Hungary - illustrating the mean incidence cases at each labeled node (see Tables \ref{table:Japan_Characteristics}-\ref{table:Hungary_Characteristics} in Appendix \ref{appendix:global_properties} for the corresponding node names). The territorial boundaries of the countries are shown for illustrative purposes only, without implying any political assertions.}
        \label{fig:Maps}
\end{figure}
To ensure a rigorous and comprehensive evaluation, the proposed models were benchmarked on six different spatiotemporal datasets encompassing diverse diseases and geographical regions. They vary in sizes, granularity, and number of nodes, as detailed in Table \ref{table:datasets}. We are particularly interested about five diseases characterized by diverse etiological agents and transmission mechanisms. Dengue is a vector-borne disease with a very high fatality rate (2-5\%; \cite{sah2023dengue}); Tuberculosis (TB) is a bacterial infection, whereas COVID-19, influenza-like illnesses (ILI), and varicella (chickenpox) are viral pathogens. The latter four diseases spread predominantly through direct anthropogenic pathways, including aerosolized respiratory droplets, person-to-person contact, and fomite contact. Therefore, models that integrate human mobility and the underlying spatial topology of the region can produce near real-time risk maps and case projections. 
\begin{table}[!ht]
    \centering
    \caption{Datasets used in this study.}
    \begin{adjustbox}{width=\linewidth}
    \begin{tabular}{ccccccc}
    \toprule
        Country & Disease & No. of observations & No. of locations & Frequency & Time frame & Reference \\ \midrule
        Japan & Tuberculosis (TB) & 216 & 47 & Monthly & January 1998-December 2015 & \cite{sumi2019time} \\ 
        China & Tuberculosis (TB) & 60 & 31 & Monthly & January 2014-December 2018 & \cite{ma2023influential} \\ 
        USA & Influenza-Like Illnesses (ILI) & 328 & 50 & Weekly & October 2010-January 2017 & CDC\tablefootnote{https://gis.cdc.gov/grasp/fluview/fluportaldashboard.html}\\ 
        Belgium & COVID-19 & 776 & 11 & Daily & September 2020-October 2022 & Epistat\tablefootnote{https://epistat.sciensano.be/covid/} \\ 
        Colombia & Dengue & 835 & 33 & Weekly & January 2007-December 2022 & \cite{clarke2024global} \\ 
        Hungary & Chickenpox & 522 & 20 & Weekly & January 2005-December 2014 & \cite{rozemberczki2021chickenpox} \\ \toprule
        
    \end{tabular}
    \end{adjustbox}
    \label{table:datasets}
\end{table}

Fig. \ref{fig:Maps} presents the spatial distribution of incidence cases at all locations for different datasets considered in this study, utilizing a color gradient to represent the mean case counts at each node. The global statistical properties of each dataset such as mean, standard deviation, range, skewness, kurtosis, stationarity tests, and nonlinearity tests are presented in Tables \ref{table:Japan_Characteristics}-\ref{table:Hungary_Characteristics} in Appendix \ref{appendix:global_properties}, along with a brief discussion on the temporal properties. 

\subsubsection{Spatial Autocorrelation}



We study the evolution of spatial autocorrelation over time for all the datasets. Spatial autocorrelation measures the extent to which a variable's geographic distribution deviates from spatial randomness. Moran’s I is the definitive global statistic for this assessment, yielding a standardized coefficient typically ranging from $-1$ to $+1$. Positive values signify spatial clustering, i.e., nodes with high (low) values are surrounded by neighbors that also have high (low) values; negative values denote dispersion, and values near the expected null indicate randomness \citep{moraga2023spatial}. The statistic is defined as:$$I = \frac{n \sum_{i} \sum_{j} w_{ij}(Y_{i} - \overline{Y})(Y_{j} - \overline{Y})}{(\sum_{i \neq j} w_{ij}) \sum_{i}(Y_{i} - \overline{Y})^2}$$where $n$ is the number of spatial nodes, $Y_{i}$ is the observed value in region $i$, and $\overline{Y}$ is the global mean. The connectivity between units is defined by a weights matrix $W \in \mathbb{R}^{N \times N}$ (where $w_{ii} = 0$), constructed as in STEN (see Sec. \ref{sec:STEN}). To capture the dynamic behavior of spatiotemporal systems, calculating Moran’s I for each discrete interval reveals the temporal evolution of spatial structure. As illustrated in Fig. \ref{fig:Morans_I_Evolution}, this approach identifies critical transitions, showing how clustering patterns emerge, stabilize, or fragment over time. Each plot reveals a unique spatial pattern for the respective region and disease. Japan (TB) exhibits a gradually intensifying positive spatial autocorrelation from 1998 to 2016, while China (TB) maintains a lower, stable clustering pattern from 2014 to 2019. Conversely, the USA (ILI) shows significant volatility, with Moran's I frequently fluctuating between positive clustering and spatial randomness. Belgium (COVID-19) displays oscillatory cycles, where autocorrelation peaks during infection waves and dips into negative territory during troughs. Colombia (dengue) is characterized by high-intensity clustering spikes, with deep spatial dispersion (negative Moran's I) in majority of the period (2007-2022). Finally, Hungary (chickenpox) demonstrates extreme volatility, with rapid transitions between strong clustering and dispersion throughout the decade.

\begin{figure}[!ht]
        \centering
        \includegraphics[width=0.8\linewidth]{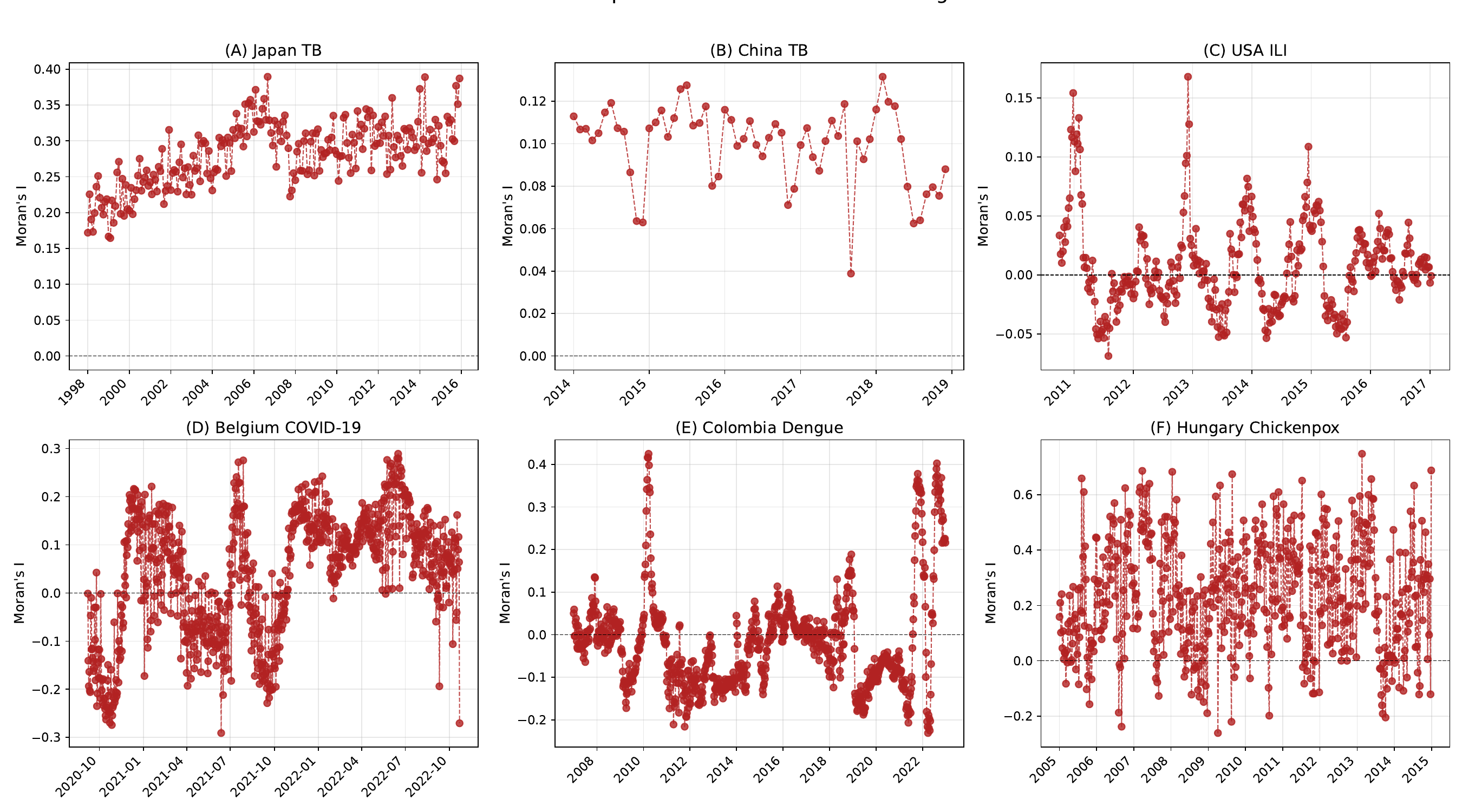}
        \caption{Temporal evolution of global spatial autocorrelation (Moran’s I) across the six epidemiological datasets.}
        \label{fig:Morans_I_Evolution}
\end{figure}
The analysis suggests a significant neighborhood effect in disease transmission. Accurate probabilistic forecasting of such infectious diseases is of our main interest since it is often difficult for the government and public health officials to know the exact disease mutation and transmission mechanism, and anticipate outbreaks while taking decisions. Data-driven statistical and machine learning systems can work with temporal dynamics in the presence of spatial dependencies to produce multi-step-ahead forecasts of the disease cases in a real-time setting. Moreover, effective epidemic forecasting necessitates robust probabilistic frameworks and uncertainty quantification to provide actionable insights for public health policy.

\subsection{Experimental Setup} \label{sec:experimental_setup}
The experimental framework is designed for a rigorous, multi-horizon evaluation tailored to the temporal granularity of each dataset. The monthly datasets (Japan and China TB) are evaluated on 6, 12, and 24-month forecast horizons by producing out-of-sample multi-step-ahead forecasts, with the 24-month horizon excluded for China due to its limited length. The daily dataset (Belgium COVID-19) is assessed over 30, 60, and 90-day forecast horizons, while the weekly datasets (USA ILI, Colombia dengue, and Hungary chickenpox) are evaluated over 4, 9, and 13-week horizons. To prevent data leakage, a strict temporal split is employed (as discussed in Appendix \ref{appendix:implementation_details}), where the final segment of each time series corresponding to the forecast horizon is held out as the test set. The training data is standardized before training the deep learning models. Specifically, each column is standardized using the individual mean and standard deviation of the corresponding column (node). The training process for the proposed models incorporated uniform noise for the Belgium dataset (due to pronounced localized sinusoidal patterns; see Fig. \ref{fig:Forecasts}, panels (I)-(L)) and Gaussian noise for all other datasets, for comprehensive prediction interval coverage. Selecting the noise distribution was based on a validation approach. More implementation details such as the choice of validation set lengths and hyperparameter optimization are provided in Appendix \ref{appendix:implementation_details}. The spatial topology used in GCEN is encoded in a static adjacency matrix $A$ for each country. For all the datasets, the connectivity between nodes was determined using the Haversine distance, calculated as per \eqref{eqn:Haversine_distance} and the adjacency matrix was formed using the methodology described in Sec. \ref{sec:GCEN}.

\begin{table}[!ht]
    \centering
    \caption{Comparison of forecasting models. The columns assess whether each model can handle non-linearity, non-stationarity, low frequency data, is spatiotemporal and scalable, computationally lightweight, and finally can produce probabilistic forecasts.}
    \begin{adjustbox}{width=\linewidth}
    \begin{tabular}{cccccccc}
    \toprule
        Models & Non-linear & Non-stationarity & Scalability & Low-frequency & Spatiotemporal & \makecell{Computationally\\lightweight} & \makecell{Probabilistic\\forecasting} \\ \midrule
        LSTM \citep{hochreiter1997long} & \cmark & \cmark & \cmark & \cmark & \xmark & \cmark & \xmark \\ 
        NHiTS \citep{challu2023nhits} & \cmark & \cmark & \cmark & \cmark & \xmark & \cmark & \xmark \\ 
        Transformers \citep{vaswani2017attention} & \cmark & \cmark & \cmark & \xmark & \xmark & \cmark & \xmark \\ 
        TCN \citep{bai2018empirical} & \cmark & \cmark & \cmark & \cmark & \xmark & \cmark & \xmark \\ 
        GSTAR \citep{ruchjana2012least} & \xmark & \cmark & \xmark & \cmark & \cmark & \cmark & \xmark \\ 
        STARMA \citep{pfeifer1980three} & \xmark & \xmark & \xmark & \cmark & \cmark & \cmark & \xmark \\ 
        STGCN \citep{yu2018spatio} & \cmark & \cmark & \cmark & \xmark & \cmark & \xmark & \xmark \\ 
        DeepAR \citep{salinas2020deepar} & \cmark & \cmark & \cmark & \cmark & \xmark & \cmark & \cmark \\ 
        Prob-iTransformer \citep{liu2023itransformer} & \cmark & \cmark & \cmark & \xmark & \xmark & \xmark & \cmark \\ 
        GpGp \citep{guinness2018permutation} & \xmark & \xmark & \cmark & \cmark & \cmark & \xmark & \cmark \\ 
        DiffSTG \citep{wen2023diffstg} & \cmark & \cmark & \cmark & \xmark & \cmark & \xmark & \cmark \\ 
        STESN \citep{huang2022forecasting} & \cmark & \cmark & \cmark & \xmark & \cmark & \xmark & \cmark \\ 
         \textbf{MVEN} (Proposed) & \cmark & \cmark & \cmark & \cmark & \cmark & \cmark & \cmark \\ 
        \textbf{GCEN} (Proposed) & \cmark & \cmark & \cmark & \cmark & \cmark & \cmark & \cmark \\ 
        \textbf{STEN} (Proposed) & \cmark & \cmark & \cmark & \cmark & \cmark & \cmark & \cmark \\ \toprule
    \end{tabular}
    \end{adjustbox}
    \label{table:baseline_models}
\end{table}

Model performance is evaluated against a suite of competitive baselines, presented in Table \ref{table:baseline_models}. Brief specifications for these baseline architectures along with relevant references and implementation details are provided in Appendix \ref{appendix:Models}. We use a set of 10 metrics (see Appendix \ref{appendix:Metrics} for details) to assess both point and probabilistic forecast accuracy. Metrics used to evaluate point forecast performance are Symmetric Mean Absolute Percentage Error (SMAPE), Mean Absolute Error (MAE), Root Mean Squared Error (RMSE), Mean Absolute Scaled Error (MASE), and Root Mean Squared Scaled Error (RMSSE). Probabilistic metrics include Pinball loss at the $80^{th}$ and $95^{th}$ percentiles (Pinball-80, Pinball-95), $\rho$-risks at $\rho=0.5$ and $0.9$, Continuous Ranked Probability Score (CRPS), Winkler Score, and visualization of calibration through Probability Integral Transform (PIT) Q-Q plots. For the sake of completeness, we also report the empirical coverage obtained by the 95\% prediction intervals (PIs) produced by the probabilistic frameworks, although we argue that it is a misleading metric.

We observe that setting $\beta=1$ in the energy score loss (\eqref{eqn:esloss}) gives the best results for our use-case. We use $M=2$ during the training process, following prior works \citep{kraft2026modeling, shen2025engression}. This means that in the training loop, we call the model's forward pass $M=2$ times. Hence, two independent noise tensors are sampled and two distinct (in-sample) forecast trajectories $\widehat{\mathbf{Y}}^{(1)}$ and $\widehat{\mathbf{Y}}^{(2)}$ are generated, with which the loss is calculated. Essentially, the training time depends on the value of $M$; a larger value of $M$ will correspond to more training time. For evaluation, an ensemble of 100 forecast trajectories is generated, from which the median is extracted as the point prediction. Consistent with prior literature \citep{kraft2026modeling}, we employ the median rather than the mean due to its superior robustness against outliers, thereby ensuring a more reliable point forecast trajectory.

Due to the inherent stochasticity of an engression model, a single trained instance yields non-deterministic forecasts. To ensure a robust and rigorous evaluation, we account for this variability by generating 50 independent prediction ensembles from the single trained model for each test case. The final results obtained by the performance metrics across three distinct forecast horizons, presented in Tables \ref{table:short-horizon-results}–\ref{table:long-term-results} (Appendix \ref{appendix:figures}), are reported as the mean and standard deviation computed across these 50 runs. To summarize, we repeat the process from forecast generation to metric calculation for 50 times and report the mean value for the metrics; for each forecast, we generate 100 trajectories and take the median as the point prediction. We also report the total computational time (both training and inference) measured in seconds in Table \ref{table:time} in Appendix \ref{appendix:implementation_details}. Time consumption by the proposed models in the total training-to-evaluation loop is significantly less, as compared to baselines such as STGCN, Prob-iTransformer, GpGp, DiffSTG, and STESN.

Moreover, the generation of multiple forecast trajectories to form a forecast ensemble leads to model-intrinsic uncertainty quantification. This property is leveraged to construct 95\% PIs, with the lower and upper bounds defined by the $2.5^{th}$ and $97.5^{th}$ quantiles of the empirical forecast distribution (ensemble of 100 trajectories), respectively. The quality of the intervals produced by MVEN, GCEN, and STEN are assessed on the test set using the Winkler score \citep{winkler1996scoring}, PIT analysis, and empirical coverage, all of which are briefly described in Appendix \ref{appendix:Metrics}. The experimental results demonstrate the superiority of the proposed models in both point and probabilistic forecasting. Fig. \ref{fig:Results} presents a comparative analysis with respect to CRPS, illustrating the performance of the proposed methods relative to the benchmark models across the six epidemic datasets. It is observed that, based on CRPS, the proposed models consistently outperform the state-of-the-art temporal and spatiotemporal methods.

\begin{figure}[!ht]
        \centering
        \includegraphics[width=\linewidth]{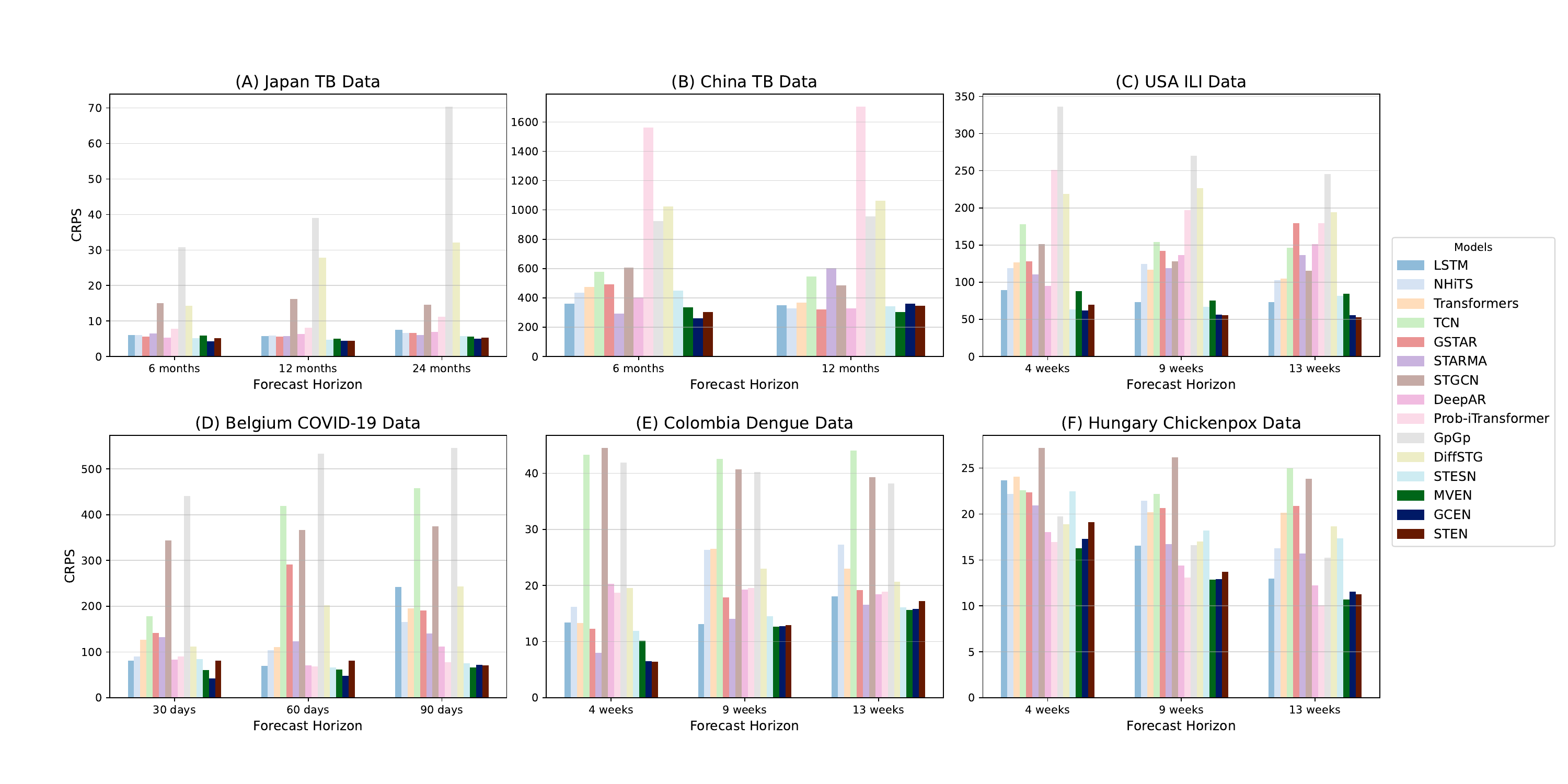}
        \caption{CRPS of the proposed and benchmark models on the test sets of the (A) Japan TB, (B) China TB, (C) USA ILI, (D) Belgium COVID-19, (E) Colombia Dengue, and (F) Hungary Chickenpox datasets. Lower values indicate better performance. To maintain a comprehensible scale, results for TCN and Transformers on the Japan TB data are not shown due to their significantly larger CRPS magnitudes.}
        \label{fig:Results}
\end{figure}

\subsection{Point Forecast Performance} \label{sec:point_forecast_perf}
To compute the point forecasts in MVEN, GCEN, and STEN, we consider the median of the forecast ensemble. Experimental results on the point forecast metrics are reported in Tables \ref{table:short-horizon-results}-\ref{table:long-term-results}, which depict that overall, the proposed methods either outperform the benchmarks, or remain competitive for short, medium, and long-term forecasting. 

For Japan's TB dataset, GCEN achieved the lowest SMAPE and MASE scores for the 6- and 12-month horizons, as well as the best MAE in all horizons. At the 24-month horizon, Transformers exhibited signs of instability with extremely high MAE and RMSE values (475.54 and 516.28, respectively). China’s TB dataset is characterized by low-frequency monthly observations and a restricted volume of 60 data points per province. In such data-constrained contexts, statistical models conventionally outperform deep learning architectures. For the 6-month forecast horizon, STARMA yielded the optimal values across several point metrics. Notably, GCEN remained highly competitive, frequently achieving the first and second ranks.

Our proposed models achieved high predictive accuracy on USA’s ILI and Belgium’s COVID-19 datasets. At the 4-week horizon for USA's ILI data, STESN showed highly competitive performance, achieving the best scores in all point metrics. STEN performed the best at the 9-week horizon, excelling in all point metrics except SMAPE, where it is shy of LSTM by a very short margin. At the 13-week horizon, STEN attained the best scores in all metrics but RMSE. Across all the horizons, GCEN and STEN achieved better scores than MVEN in terms of metrics such as MAE and RMSE, reflecting the importance of the spatial modules. For 30- and 60-day forecasts on Belgium's COVID-19 dataset, GCEN was unequivocally the best model, achieving the best scores in terms of all the point metrics. For the longest horizon of 90 days, GCEN and MVEN showed highly competitive performance, where MVEN frequently achieved the best and second best scores.

For Colombia's dengue data, model performance varied across metrics and forecasting horizons. On the 4-week horizon, GCEN outperformed others in terms of SMAPE, and achieved the second best RMSE, MASE, and RMSSE. Overall, MVEN exhibited strong and consistent performance at the 9- and 13-week horizons, ranking first in five and two metrics respectively. In this particular context, the spatial modules (in GCEN and STEN) exhibit suboptimal performance, likely attributable to high levels of noise or the sparse spatial dependencies characteristic of the Colombia Dengue dataset (Fig. \ref{fig:Morans_I_Evolution} (E)). On Hungary’s chickenpox data, MVEN achieved the best SMAPE, and GCEN scored the lowest MAE and MASE for the 4-week horizon. NHiTS achieved the best RMSE, MASE (jointly with GCEN), and RMSSE scores. The 13-week horizon was characterized by mixed performance, with STEN achieving the best SMAPE, LSTM the best MAE and MASE, and NHiTS the best RMSE and RMSSE.

\subsection{Probabilistic Forecast Performance} \label{sec:prob_forecast_perf}
Beyond point predictions, metrics like Pinball scores, $\rho$-risks, and CRPS evaluates probabilistic forecasts. The proposed models offer a natural way of uncertainty quantification through a probabilistic framework underpinned by a pre-additive noise-driven stochastic formulation. Fig. \ref{fig:Forecasts} presents the forecasts along with 95\% PIs generated by MVEN, GCEN, and STEN on selected nodes across the datasets. The quality of the generated PIs is crucial for decision-making under uncertainty, and are evaluated using Winkler score and PIT Q-Q plots. Winkler score is a reliable metric for evaluating prediction intervals, which simultaneously penalizes both excessively wide intervals and low coverage. Overall, the point forecasts are well-aligned with the ground truth and the intervals provide sufficient coverage.

\begin{figure}[]
        \centering
        \includegraphics[width=\linewidth]{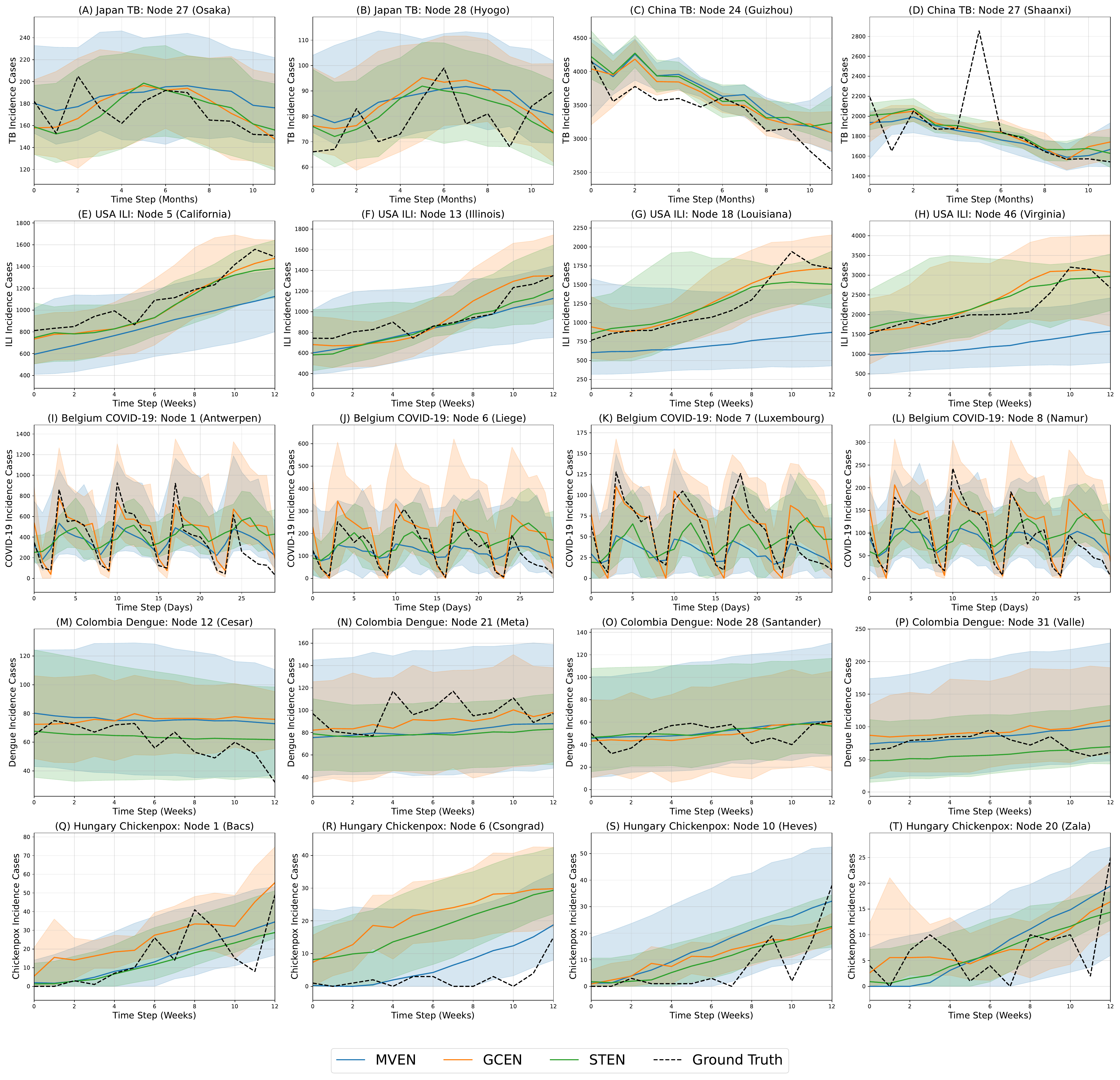}
        \caption{Comparison of out-of-sample forecasts produced by MVEN (blue), GCEN (orange), and STEN (green) against the ground truth (test data) on selected nodes across the six datasets. The shaded regions denote the 95\% PIs produced by the respective models.}
        \label{fig:Forecasts}
\end{figure}

\begin{figure}[!ht]       \centering
        \includegraphics[width=\linewidth]{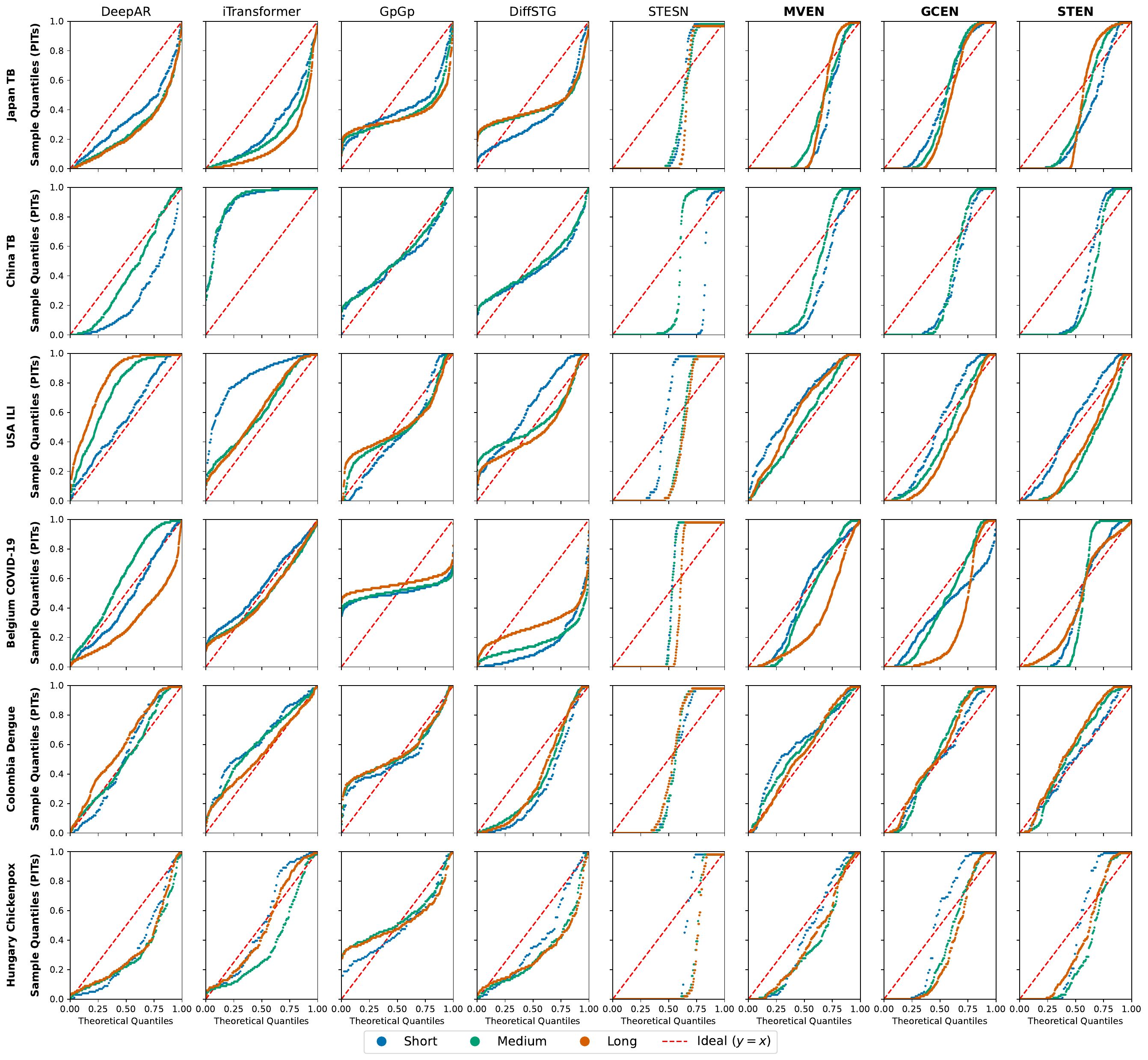}
        \caption{Multi-horizon PIT Q-Q plots for qualitative evaluation of the probabilistic calibration of the proposed models across six epidemic datasets. Each subplot maps the empirical quantiles of the calculated PIT values (y-axis) against the theoretical quantiles of a standard Uniform(0,1) distribution (x-axis). The dashed red diagonal line ($y=x$) represents ideal calibration. The solid lines are color-coded by forecast horizons: short (blue), medium (green), and long (orange), to illustrate the stability of calibration over time. The China TB dataset excludes the long (24-month) horizon due to limited data length.}
        \label{fig:Calibration_QQ}
\end{figure}

Across all evaluated datasets and temporal horizons, the proposed engression-based models demonstrated superior performance, achieving the best Pinball scores, $\rho$-risks, and CRPS values in 70.59\%, 67.65\%, and 94.11\% of the instances, respectively. GpGp, DiffSTG, and STESN showed diminished performance on Winkler scores. Furthermore, although GpGp attained high empirical coverage (97–100\%; see Table \ref{table:empirical_coverage} in Appendix \ref{appendix:figures}), its intervals were characterized by excessive width, resulting in suboptimal Winkler scores. The iTransformer architecture also frequently produced excessively wide intervals (which is not useful in decision making), especially for Colombia's dengue dataset. A visualization of the forecasts and uncertainty intervals for DeepAR, Prob-iTransformer, and GpGp, for the same nodes as in Fig. \ref{fig:Forecasts}, is provided in Fig. \ref{fig:Forecasts_others} in Appendix \ref{appendix:figures}. It is clear from the plots on the out-of-sample examples that our proposed models produced reasonable probabilistic bands (Fig. \ref{fig:Forecasts}) as compared to the wide and uninformative bands produced by the benchmark spatiotemporal probabilistic models (Fig. \ref{fig:Forecasts_others}).

To qualitatively evaluate the probabilistic calibration of our forecasting models, we employ PIT Q-Q plots and visualize the empirical PIT values against a theoretical Uniform(0, 1) distribution (see Appendix \ref{appendix:Metrics} for details on PIT). If the forecast ensembles are perfectly calibrated, these PIT values will follow a standard Uniform(0, 1) distribution. The Q-Q plots visualize this by mapping the empirical quantiles of these calculated PIT values (y-axis) against the theoretical quantiles of the Uniform distribution (x-axis). The plots are presented in Fig. \ref{fig:Calibration_QQ}. Points falling on the $y=x$ diagonal (dashed red line) indicate perfect calibration. Deviations into an S-shaped curve suggest overconfidence where PIs are narrow, while an inverse S-shape reveals underconfidence due to excessively wide intervals. The proposed models exhibit a characteristic S-shaped calibration curve, reflecting a high-precision forecasting approach that yields narrow, informative prediction intervals. While this indicates a degree of overconfidence, the curves remain closely aligned with the ideal diagonal across all datasets without requiring calibration using a validation set.

\subsection{Statistical Test for Model Comparison} \label{sec:MCB_test}
\begin{figure}[!ht]       \centering
        \includegraphics[width=\linewidth]{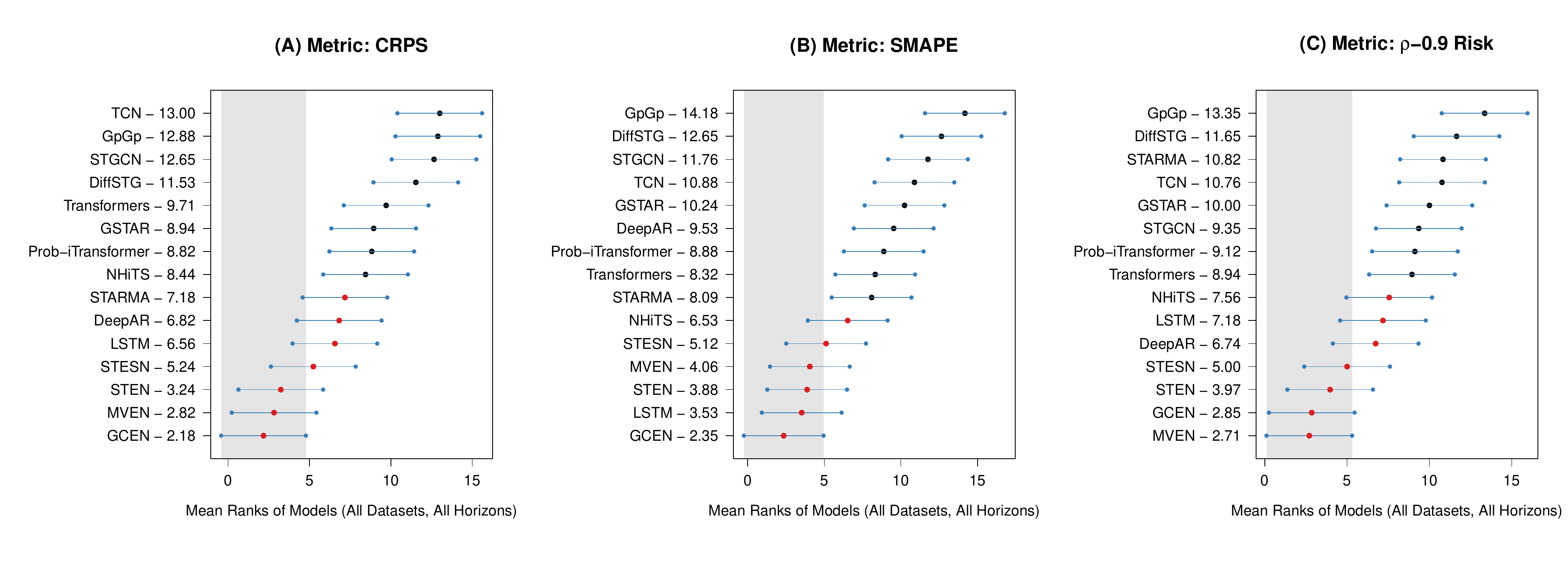}
        \caption{MCB test results based on average ranks across six datasets and multiple forecast horizons for (A) CRPS, (B) SMAPE, and (C) $\rho$-0.9 Risk. `$\mu$-$r$' in the y-axis of each plot indicates that model $\mu$'s average rank is $r$ according to that metric.}
        \label{fig:MCB_test}
\end{figure}
To evaluate the comparative performance of the forecasting models across all datasets and horizons, we employ the non-parametric Multiple Comparisons with the Best (MCB) test \citep{koning2005m3}. The MCB test is a statistical procedure used to identify which models are significantly better than others by computing their mean ranks and establishing a critical distance \citep{nemenyi1963distribution}. Any model whose critical distance does not overlap with the best-performing model is considered statistically inferior at the specified confidence level. This analysis is conducted across three distinct evaluation metrics: CRPS, SMAPE, and $\rho$-0.9 Risk, considering all datasets and forecast horizons. As illustrated in Fig. \ref{fig:MCB_test}, our proposed models demonstrate superior robustness by consistently remaining within the top three positions across all of these metrics. Specifically, in terms of CRPS, GCEN achieved the lowest average rank of 2.18, establishing it as the overall best performer for probabilistic forecasting. MVEN and STEN secured the second and third spots respectively for the CRPS metric. For SMAPE, GCEN secured the top position with an average rank of 2.35. Finally, in the $\rho$-0.9 Risk evaluation, which emphasizes tail performance and high-quantile accuracy, the GCEN and STEN models maintain their competitive edge by achieving the second and third best ranks, following MVEN. The MCB analysis indicates that LSTM and NHiTS remain competitive among temporal forecasting methods. In contrast, the remaining baseline models such as Transformers, STGCN, DiffSTG, and GpGp, exhibit mean ranks that exceed the critical threshold, signifying statistically inferior performance relative to the proposed models across the three evaluation metrics. This is due to the fact that most of these models work well with high-frequency (and high volume) data, as opposed to the nature of epidemic datasets.

\section{Model Explainability} \label{sec:model_explainability}
Temporal dependencies in our spatiotemporal engression models are handled using LSTM networks, which are generally less interpretable and considered as `black-box' architectures \citep{guo2019exploring}. Since the proposed architectures (MVEN, GCEN, and STEN) comprise pre-additive noise structure, we can look into the dynamics of the random noise values that are driving the non-linear layers of the model to produce an ensemble of forecast trajectories. Moreover, the learned weights of the STAR-layer in STEN can also produce insights on the learned importance of each spatial lag, as discussed below.

\subsection{Noise-Driven Generation of Forecast Trajectories}
We take a peek into the internal dynamics of how injecting random noise values to the spatial embeddings (inputs to the temporal module) results in different plausible forecast trajectories for the same horizon. For the visualization, we sample $M=30$ forecast trajectories from the trained GCEN, STEN, and MVEN models across three specific spatial nodes for demonstration purposes: Illinois (Node 13) for USA ILI, Santander (Node 28) for Colombia Dengue, and Osaka (Node 27) for Japan TB, respectively.

\begin{figure}[!b]       \centering
        \includegraphics[width=0.9\linewidth]{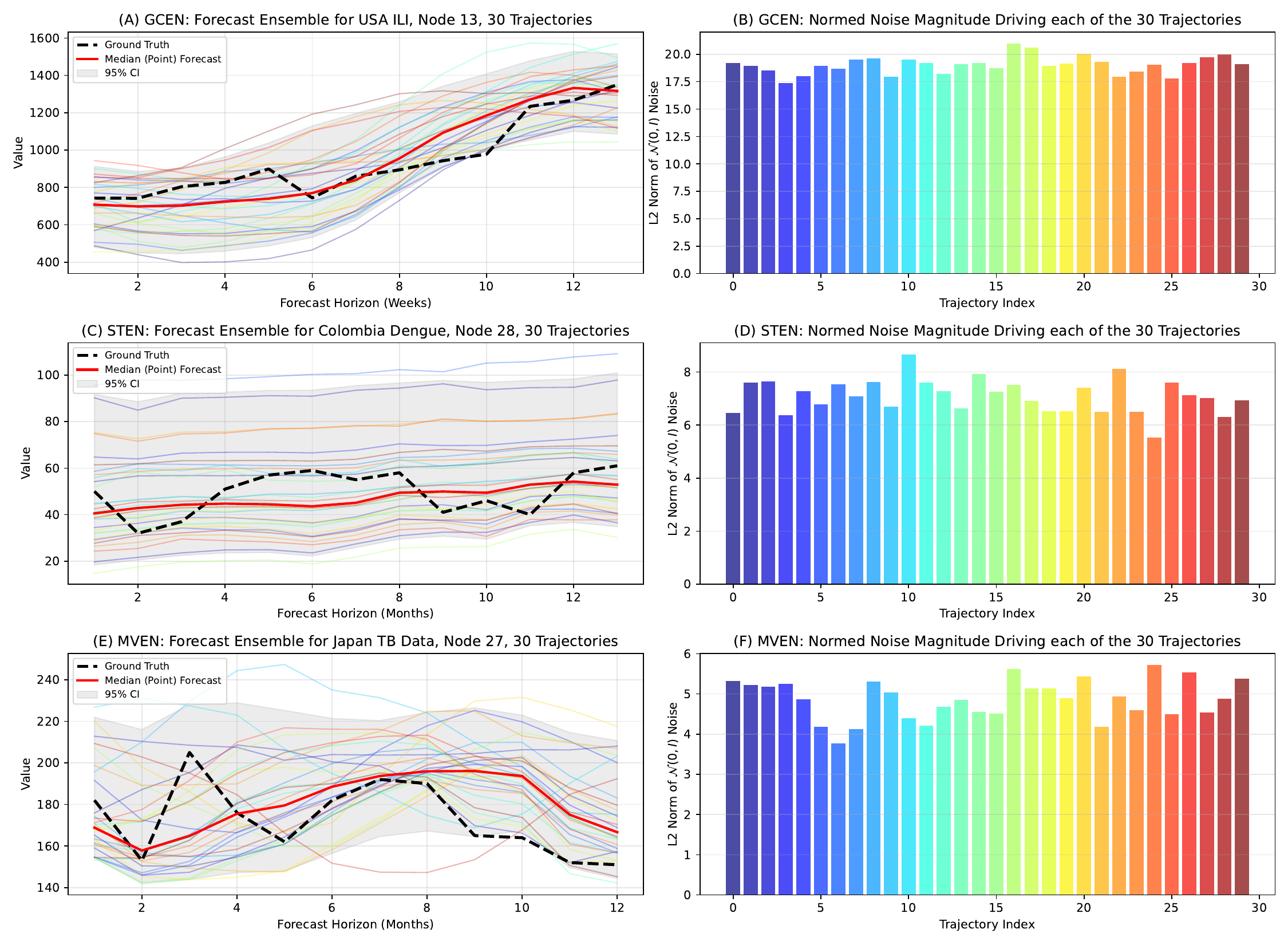}
        \caption{Analysis of ensemble forecast dispersion and latent stochasticity in GCEN, STEN, and MVEN.}
        \label{fig:explainability}
\end{figure}

 In each forward pass for GCEN and STEN, the stochastic driving force for each trajectory is derived from the exogenous random noise tensor $\boldsymbol{\eta} \sim \mathcal{N}(\mathbf{0}, \mathbf{I})^{p \times N \times D'}$ $\left(\text{for MVEN, } \boldsymbol{\eta} \sim \mathcal{N}(\mathbf{0}, \mathbf{I})^{p \times N \times D}\right)$, where $p$ is the temporal lag, $N$ is the number of nodes, and $D' (D)$ is the dimension of spatial embeddings (inputs). By isolating a specific node $i$, the noise contribution is reduced to a tensor $\boldsymbol{\eta}_i \in \mathbb{R}^{p \times D'}$. To quantify the magnitude of the perturbation driving each ensemble member, we compute the Frobenius norm ($L_2$ norm across the temporal and embedding dimensions):$$\varepsilon_m = \left\| \boldsymbol{\eta}_{i} \right\|_F = \sqrt{\sum_{j=1}^{p} \sum_{k=1}^{D'} (\eta_{j,k})^2}.$$This is done for all $M=30$ trajectories, and we end up with a magnitude of noise value per trajectory, $\left\{\varepsilon_m\right\}_{m=1}^M$. Each noise $\varepsilon_m$ is responsible for the generation of the $m^{th}$ forecast trajectory. We visualize these dynamics in a dual-panel framework in Fig. \ref{fig:explainability}: the left panel (A, C, E) presents the ensemble spaghetti plots, superimposing the median (solid red line), 95\% confidence interval (gray shaded region), and ground truth (dashed black line) in original units; while the right panel (B, D, F) depicts the corresponding noise magnitudes $\varepsilon_m$ that produces each of the $30$ trajectories. A unified color scheme enables a direct visual mapping between the latent noise intensity and the resulting forecast dispersion. The noise tensors are integrated into the standardized feature space during the forward pass. While the right panel depicts the Frobenius norms of these raw noise tensors, the forecast trajectories in the left panel are unstandardized to their original units to maintain physical interpretability and facilitate direct comparison with the ground truth. From the plots, it is evident that noise tensors with higher magnitudes produce forecasts that generally lie above the median, and vice versa.

\subsection{Importance of Spatial Lags in STEN} \label{sec:STEN_Explainability}
A key feature of STEN is the set of learnable weight matrices ($\Phi_l\in \mathbb{R}^{D\times D'}$) that combine multi-scale spatial lags $W^l\mathbf{Y}_t$ (the notations are as before). The explanation of spatial lag importance offers a transparent view into the specific spatiotemporal dynamics learned by the STEN model for each epidemic. To quantify this, the importance score for each spatial lag $l$ is calculated as the Frobenius norm \citep{nie2010efficient} of its corresponding learned weight matrix, $\Phi_l$:
$$\left\|\Phi_l\right\|_F = \sqrt{\sum_{i=1}^{D} \sum_{j=1}^{D'} \left(\phi_{i,j}^{(l)}\right)^2}.$$
These scores are then normalized by dividing each lag's individual score by the sum of all scores and converting it to a percentage, which represents its relative contribution to the final embedding:
$$P_l = \left( \frac{\left\|\Phi_l\right\|_F}{\sum_{k=0}^{L} \left\|\Phi_k\right\|_F} \right) \times 100\%,$$
where $L$ is the maximum number of spatial lags considered, and $P_l$ represents the normalized percentage importance of lag $l$ to the final spatiotemporal embedding. A high importance at the $l=0$ (self) lag indicates a strong autoregressive component, suggesting that a region's future incidence is predominantly driven by its own historical momentum and internal dynamics. In contrast, high importance at $l=1$ (neighbors) signifies a classic spatial diffusion effect, where the epidemic is primarily spread through contagion from immediately adjacent regions. Finally, significant importance at higher-order lags ($l>1$) reveals more complex, non-local transmission routes, which can be attributed to multiple factors, such as transportation routes, hierarchical spread, and shared environmental or demographic factors.

\begin{figure}       \centering
        \includegraphics[width=\linewidth]{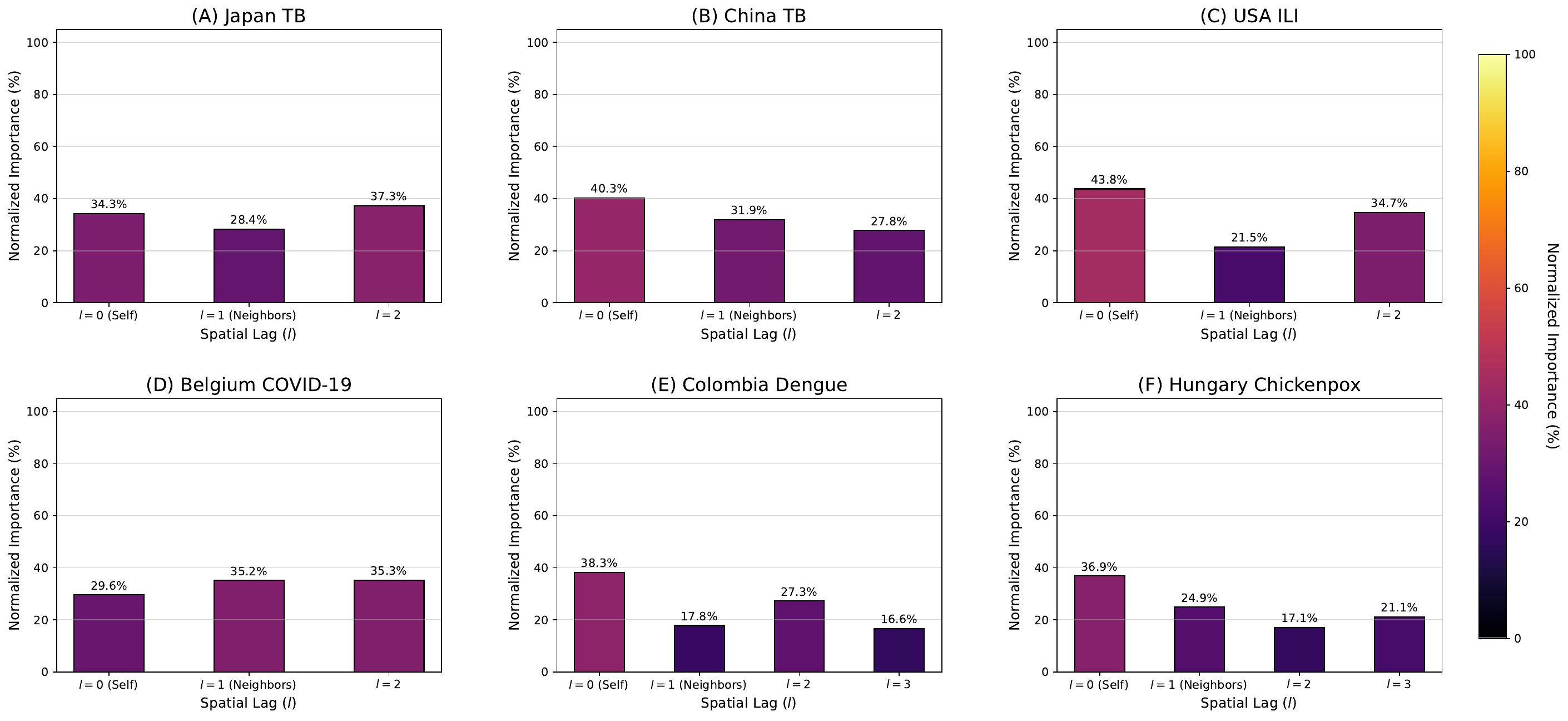}
        \caption{Analysis of learned spatial lag importance from the STEN model across all six epidemic datasets. The maximum lag shown for each subplot (e.g., $l=2$ for (A) and (D), $l=3$ for (E)) corresponds to the \texttt{max\_spatial\_lag} hyperparameter selected for that specific dataset's model.}
        \label{fig:STEN_lags_importance}
\end{figure}

Fig. \ref{fig:STEN_lags_importance} presents a comparative analysis of the learned spatial dependencies within the STEN model, visualized for each of the six epidemic datasets. The forecast horizons chosen are 30-day, 13-week, and 12-month for the daily, weekly, and monthly datasets, respectively. The $l=0$ lag serves as the primary predictor for USA ILI (43.8\%) and China TB (40.3\%), indicating that these diseases are driven largely by local historical incidence. In contrast, Belgium COVID-19 exhibits a clear spatial diffusion profile, where the influence of first-order ($l=1$, 35.2\%) and second-order neighbors ($l=2$, 35.3\%) outweighs local history ($l=0$, 29.6\%). A more balanced importance distribution across all spatial lags is observed in Japan TB and Hungary chickenpox. For the Colombia dengue dataset, the prevalence of significant weights at $l=0$ and $l=2$ suggests that the forecast is driven by a combination of strong local autoregressive dependencies ($l=0$) and second-order spatial influences ($l=2$), the latter potentially reflecting shared regional climatic drivers across non-immediate neighbors. These results validate the STEN architecture's capability as an explainable framework, successfully learning and quantifying the diverse, disease-specific spatial contagion patterns inherent in the data.

\section{{Ablation Study}} \label{sec:ablation}
In this section, we conduct an ablation study to evaluate the contributions of the spatial module, the pre-additive noise mechanism, and energy score optimization in probabilistic spatiotemporal forecasting. Specifically, we modify the proposed architectures in two key ways.

First, we replace the pre-additive noise structure with a post-additive formulation. In this configuration, inputs are sequentially processed by the spatial module (excluding MVEN) and the temporal module, with noise injected only into the final deterministic forecasts. We denote these models as MVEN-Post, GCEN-Post, and STEN-Post. Because these variants are still trained using the energy score, their forecast uncertainty profiles are shaped exclusively by the loss function, removing the temporal module's role in driving noise dynamics, as opposed to the pre-additive variants. Second, we retain the pre-additive noise structure but train the models using MSE rather than the energy score loss. These models are denoted by MVEN-MSE, GCEN-MSE, and STEN-MSE. During training, a forecast ensemble of 2 samples (consistent with our main experiments) is generated for the output window, and the expected MSE is computed by averaging the individual errors of each sample against the ground truth. Unlike the energy score, which incorporates a sharpness or dispersion term to prevent mode collapse and degenerate predictive distributions, this MSE objective explicitly penalizes sample variance, thereby increasing the model's vulnerability to these failure modes.

The results for the modified models, evaluated across all six epidemic datasets using identical hyperparameters, are detailed for short, medium, and long-term horizons in Tables \ref{table:Ablation_results_short}-\ref{table:Ablation_results_long} (Appendix \ref{appendix:figures}). The proposed architectures utilizing the pre-additive noise structure and energy score optimization consistently outperform both ablation variants. This is further supported by the MCB plot in Fig. \ref{fig:Ablation_MCB}, which demonstrates that MVEN, GCEN, and STEN are statistically superior to their post-additive and MSE-trained counterparts. PIT Q-Q plots for the modified models, shown for the USA ILI and Colombia dengue datasets in Fig. \ref{fig:Ablation_QQ} (Appendix \ref{appendix:figures}), qualitatively reveal that the ablation variants yield inferiorly calibrated prediction intervals. Specifically, a staircase pattern emerges in these Q-Q plots, indicating that empirical quantiles remain static while theoretical quantiles increase. This denotes a degeneration of the forecast distribution into localized point masses, characterized by predictive trajectories that collapse toward their conditional mean with an acutely constrained variance. This behavior is particularly observed in the MSE-trained models, explicitly highlighting the role of the dispersion term within the energy score loss for maintaining well-calibrated predictive distributions. Finally, the overall importance of the spatial module is evident in the MCB plot (Fig. \ref{fig:Ablation_MCB}), where the purely temporal MVEN architecture is statistically surpassed by both GCEN and STEN based on CRPS and MAE.

\begin{figure}       \centering
        \includegraphics[width=0.8\linewidth]{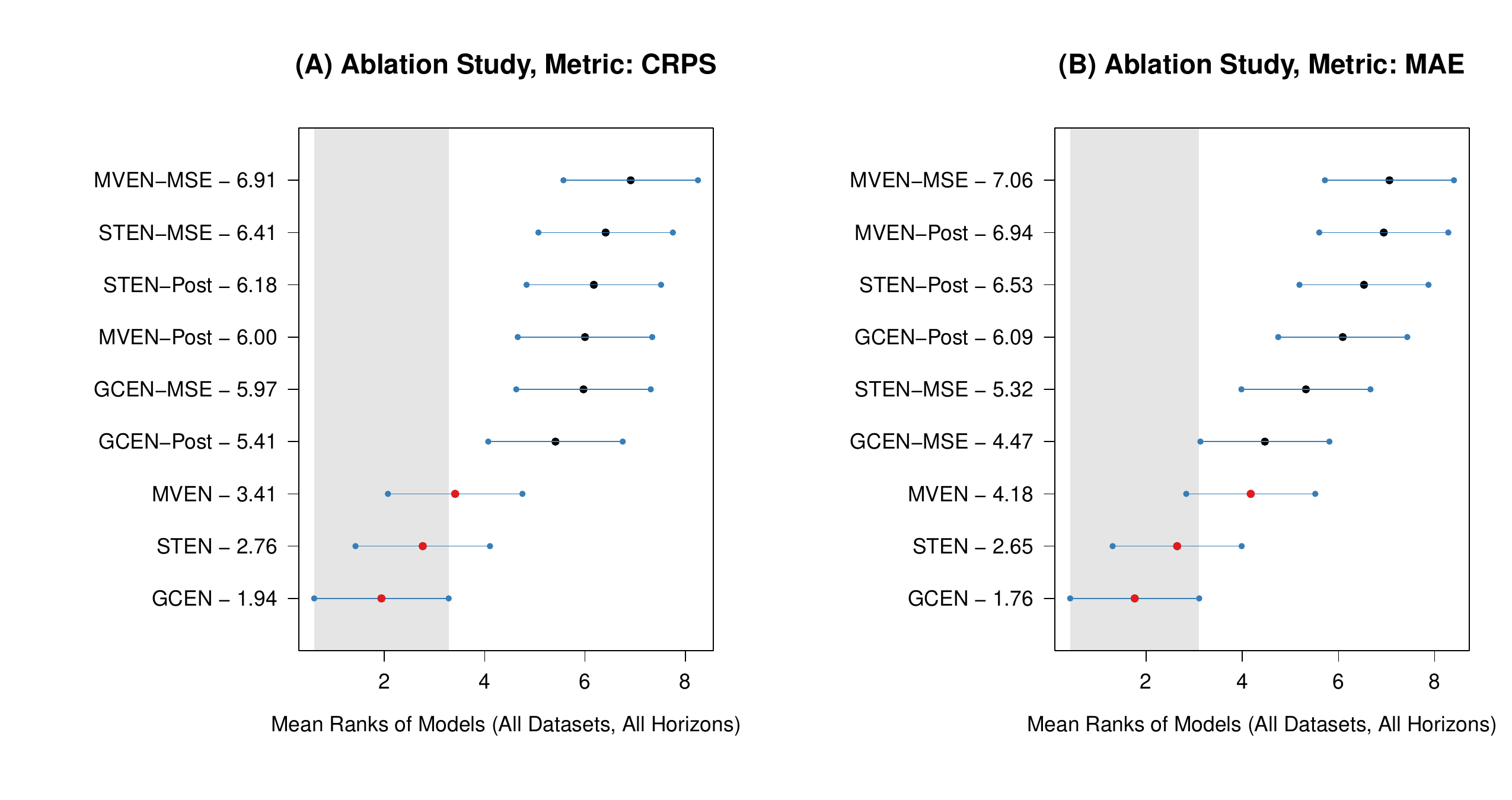}
        \caption{MCB plots with respect to (A) CRPS and (B) MAE, ranking the proposed models and their modified variants considered in the ablation experiments.}
        \label{fig:Ablation_MCB}
\end{figure}

\section{Conclusion} \label{sec:conclusion}
This study addresses the challenge of probabilistic spatiotemporal forecasting by introducing three lightweight and generative deep learning frameworks: MVEN, GCEN, and STEN. By integrating the engression methodology with robust neural architectures, we established probabilistic frameworks capable of modeling both spatial and temporal dependencies while quantifying uncertainty. A theoretical contribution of this work is the formal derivation of geometric ergodicity and asymptotic stationarity for the closed-loop versions of the proposed generative processes. These proofs provide essential mathematical guarantees regarding the stability of the models. Empirically, the models were rigorously benchmarked across six diverse epidemiological datasets, covering various diseases and geographical topologies with varying sizes and granularities. The results consistently demonstrated the superiority of the spatiotemporal engression approaches over state-of-the-art baselines, even when data is scarce. Furthermore, the probabilistic nature of these frameworks enabled the generation of well-calibrated model-intrinsic PIs, as evidenced by the PIT analysis and Winkler scores. Our models consistently achieved better CRPS, Pinball scores, and Winkler scores against the baseline probabilistic models. More recently developed models such as NBeats \citep{oreshkin2019n} and NHiTS \citep{challu2023nhits} can also be used instead of LSTM as the temporal module for the proposed models. Moreover, the construction of the spatial weights matrix for STEN can be made flexible. Instead of using the inverse of Haversine distances, one may use other established techniques, such as the $k$-nearest neighbors, or neighbors of order $k$ based on contiguity \citep{moraga2023spatial}.

A primary advantage of the proposed engression-based models lies in their computational parsimony during both the training and inference phases. The popularly used STGCN \citep{yu2018spatio} requires a large number of training samples, and is computationally heavy. The baseline spatiotemporal probabilistic frameworks such as GpGp, DiffSTG, and STESN, encounter prohibitive temporal overhead during ensemble generation from the trained models (see Table \ref{table:time} in Appendix \ref{appendix:implementation_details}). Consequently, the high latency inherent in traditional models precludes their effective deployment in near-real-time epidemiological surveillance and forecasting.

The STEN model, in particular, demonstrated distinct explainability advantages. By leveraging a STAR-inspired spatial layer, it allowed for the quantification of specific spatial lag importances, offering public health officials actionable insights into the relative dominance of local autoregressive trends versus spatial diffusion mechanisms. While the MCB test establishes the statistical superiority of GCEN in terms of predictive accuracy, STEN provides a more explainable framework for characterizing the underlying spatial dynamics of the process. These findings establish MVEN, GCEN, and STEN as robust, computationally viable tools for real-time disease surveillance, offering a reliable methodology for policy formulation under uncertainty. Beyond the current scope, these methodologies possess the inherent flexibility to model various phenomena, such as traffic flow dynamics and air quality forecasting in spatiotemporal domain. 

\subsection*{Limitations and Future Directions} \label{sec:limitations}
While the proposed frameworks offer distinct advantages in terms of intrinsic uncertainty quantification and computational efficiency, they are not without limitations. The current study only leverages the pre-ANM structure and the energy score loss to generate probabilistic forecasts for spatiotemporal data. A key benefit of engression as a distributional regression method lies in its extrapolability: ability to predict for inputs beyond the training support. Spatiotemporal extrapolability is not well-defined in the literature, to the best of our knowledge. Spatial extrapolation is generally defined as generating data or forecasting for unobserved locations \citep{hu2023graph}, while the concept of extrapolation in time series is structurally analogous to handling distributional shifts and extreme events \citep{panja2026stgcn}. In epidemiological contexts, this challenge frequently emerges when models trained predominantly on periods of low incidence are tasked with forecasting severe, high-spread outbreaks. While the classical engression setup imposes monotonicity-like assumptions on the non-linear function, such assumptions are usually violated for real-world spatiotemporal dynamics. Moreover, standard autoregressive or sequence-to-sequence models without explicit extreme-value handling fails to capture sudden surges. Thus, a primary constraint lies in the models’ diminished capacity to accurately forecast extreme peaks, which is often a requirement in epidemiological modeling for anticipating and managing peak outbreak scenarios. This behavior is empirically evident in the results (for example, Shaanxi in China TB data; see Fig. \ref{fig:Forecasts} (D)). This can be considered as a future scope of research where extrapolation capabilities of engression can be explored by integrating extreme value modeling to capture extremes and handle distributional shifts in spatiotemporal data.

Second, the current study exclusively relies on historical epidemic incidence data, which omits the influence of exogenous determinants that fundamentally drive disease dynamics. While factors such as climatic variability for dengue or demographic density for COVID-19 and TB are known to modulate transmission, this work does not consider them as exogenous features. However, the proposed architecture is designed to be inherently generalizable to a multifaceted input space $\mathcal{Y}\in\mathbb{R}^{T\times N \times D}$, where the last dimension may encompass both the primary endogenous variables intended for forecasting and auxiliary exogenous covariates that provide supportive contextual information. Fully connected layers can be utilized to project the high-dimensional hidden states back into a refined output space that corresponds strictly to the target variables of interest. 

Furthermore, our proposals are data-centric models. However, in epidemic forecasting, deep learning models integrated with epidemic knowledge \citep{barman2025epidemic} are often of interest. The current work can be extended to include epidemiological factors and integrated with compartmental models in future research. Such an integration would likely enhance predictive accuracy and provide more robust contextual information for public health decision-making under uncertainty.

Finally, in engression, the discrepancy between the target and the functional space is sensitive to the choice of the distance metric. For example, \cite{shen2025engression,kraft2026modeling}, and the present work utilizes the $L2$ norm in the energy score loss. Copula-based density regression might be viewed as an alternative. A copula operates on a topological space, avoiding such a choice of specific distance metric.

\subsection*{Broader Impact Statement}
The proposed spatiotemporal engression models - MVEN, GCEN, and STEN, provide a robust framework for researchers and public health officials to generate reliable probabilistic forecasts of epidemic incidences. In a broad sense, the deployment of such models can significantly enhance public health preparedness by quantifying uncertainty and providing best or worst-case scenarios, thereby informing strategic resource allocation and the design of tailored, timely interventions. The proposed models are capable of generating multiple plausible future trajectories for a given spatiotemporal problem, thereby omitting the need for model-agnostic wrappers such as conformal prediction for uncertainty quantification. Sometimes, our proposed models produce narrow and somewhat overconfident intervals (Fig. \ref{fig:Calibration_QQ}). However, this behavior can be readily resolved by applying established recalibration techniques \citep{rumack2022recalibrating}, guaranteeing robust uncertainty quantification. On the theoretical side, we investigate the ergodic properties of a pre-additive recurrent network process, supporting our analytical findings with numerical simulations and real data experiments.

\section*{Data and Code Availability}
The datasets used in this study are publicly available and briefly described in the main text and appendix. Code is available at \url{https://github.com/PyCoder913/stengression}, as well as through the \href{https://pypi.org/project/stengression/}{\texttt{stengression}} Python package\footnote{Documentation: \url{https://stengression.readthedocs.io/}}. 

\section*{Acknowledgement}
We sincerely thank the anonymous reviewers and the action editor for their constructive comments. We believe that incorporating their suggestions have greatly improved the presentation of this work.


\bibliography{main}

@article{ferguson2016countering,
  title={{Countering the Zika epidemic in Latin America}},
  author={Ferguson, Neil M and Cucunub{\'a}, Zulma M and Dorigatti, Ilaria and Nedjati-Gilani, Gemma L and Donnelly, Christl A and Bas{\'a}{\~n}ez, Maria-Gloria and Nouvellet, Pierre and Lessler, Justin},
  journal={Science},
  volume={353},
  number={6297},
  pages={353--354},
  year={2016},
  publisher={American Association for the Advancement of Science}
}

@article{houlihan2017ebola,
  title={Ebola exposure, illness experience, and Ebola antibody prevalence in international responders to the West African Ebola epidemic 2014--2016: A cross-sectional study},
  author={Houlihan, Catherine F and McGowan, Catherine R and Dicks, Steve and Baguelin, Marc and Moore, David AJ and Mabey, David and Roberts, Chrissy H and Kumar, Alex and Samuel, Dhan and Tedder, Richard and others},
  journal={PLoS medicine},
  volume={14},
  number={5},
  pages={e1002300},
  year={2017},
  publisher={Public Library of Science San Francisco, CA USA}
}

@article{trilla20081918,
  title={The 1918 “spanish flu” in spain},
  author={Trilla, Antoni and Trilla, Guillem and Daer, Carolyn},
  journal={Clinical infectious diseases},
  volume={47},
  number={5},
  pages={668--673},
  year={2008},
  publisher={The University of Chicago Press}
}

@article{kermack1927contribution,
  title={{A contribution to the Mathematical Theory of Epidemics}},
  author={Kermack, William Ogilvy and McKendrick, Anderson G},
  journal={Proceedings of the Royal Society of London. Series A, Containing papers of a mathematical and physical character},
  volume={115},
  number={772},
  pages={700--721},
  year={1927},
  publisher={The Royal Society London}
}

@inproceedings{liu2024review,
  title={A review of graph neural networks in epidemic modeling},
  author={Liu, Zewen and Wan, Guancheng and Prakash, B Aditya and Lau, Max SY and Jin, Wei},
  booktitle={Proceedings of the 30th ACM SIGKDD conference on knowledge discovery and data mining},
  pages={6577--6587},
  year={2024}
}

@inproceedings{deng2020cola,
  title={{Cola-GNN: Cross-location Attention based Graph Neural Networks for long-term ILI prediction}},
  author={Deng, Songgaojun and Wang, Shusen and Rangwala, Huzefa and Wang, Lijing and Ning, Yue},
  booktitle={Proceedings of the 29th ACM International Conference on Information \& Knowledge Management},
  pages={245--254},
  year={2020}
}

@inproceedings{li2019study,
  title={A study on graph-structured recurrent neural networks and sparsification with application to epidemic forecasting},
  author={Li, Zhijian and Luo, Xiyang and Wang, Bao and Bertozzi, Andrea L and Xin, Jack},
  booktitle={World congress on global optimization},
  pages={730--739},
  year={2019},
  organization={Springer}
}

@inproceedings{xie2022epignn,
  title={{EpiGNN: Exploring spatial transmission with graph neural network for regional epidemic forecasting}},
  author={Xie, Feng and Zhang, Zhong and Li, Liang and Zhou, Bin and Tan, Yusong},
  booktitle={Joint European Conference on Machine Learning and Knowledge Discovery in Databases},
  pages={469--485},
  year={2022},
  organization={Springer}
}

@article{hochreiter1997long,
  title={Long short-term memory},
  author={Hochreiter, Sepp and Schmidhuber, J{\"u}rgen},
  journal={Neural computation},
  volume={9},
  number={8},
  pages={1735--1780},
  year={1997},
  publisher={MIT press}
}

@article{chakraborty2020real,
  title={{Real-time forecasts and risk assessment of novel coronavirus (COVID-19) cases: A data-driven analysis}},
  author={Chakraborty, Tanujit and Ghosh, Indrajit},
  journal={Chaos, Solitons \& Fractals},
  volume={135},
  pages={109850},
  year={2020},
  publisher={Elsevier}
}

@article{datilo2019review,
  title={A review of epidemic forecasting using artificial neural networks},
  author={Datilo, Philemon Manliura and Ismail, Zuhaimy and Dare, Jayeola},
  journal={Epidemiology and Health System Journal},
  volume={6},
  number={3},
  pages={132--143},
  year={2019},
  publisher={Shahrekord University of Medical Sciences}
}

@article{kingma2014adam,
  title={Adam: A method for stochastic optimization},
  author={Kingma, Diederik P and Ba, Jimmy},
  journal={arXiv preprint arXiv:1412.6980},
  year={2014}
}

@article{liu2020probabilistic,
  title={Probabilistic spatiotemporal wind speed forecasting based on a variational Bayesian deep learning model},
  author={Liu, Yongqi and Qin, Hui and Zhang, Zhendong and Pei, Shaoqian and Jiang, Zhiqiang and Feng, Zhongkai and Zhou, Jianzhong},
  journal={Applied Energy},
  volume={260},
  pages={114259},
  year={2020},
  publisher={Elsevier}
}

@article{klein2023deep,
  title={{Deep distributional time series models and the probabilistic forecasting of intraday electricity prices}},
  author={Klein, Nadja and Smith, Michael Stanley and Nott, David J},
  journal={Journal of Applied Econometrics},
  volume={38},
  number={4},
  pages={493--511},
  year={2023},
  publisher={Wiley Online Library}
}

@article{huang2022forecasting,
  title={{Forecasting high-frequency spatio-temporal wind power with dimensionally reduced echo state networks}},
  author={Huang, Huang and Castruccio, Stefano and Genton, Marc G},
  journal={Journal of the Royal Statistical Society Series C: Applied Statistics},
  volume={71},
  number={2},
  pages={449--466},
  year={2022},
  publisher={Oxford University Press}
}

@inproceedings{wen2023diffstg,
  title={{DiffSTG: Probabilistic spatio-temporal graph forecasting with denoising diffusion models}},
  author={Wen, Haomin and Lin, Youfang and Xia, Yutong and Wan, Huaiyu and Wen, Qingsong and Zimmermann, Roger and Liang, Yuxuan},
  booktitle={Proceedings of the 31st ACM International Conference on Advances in Geographic Information Systems},
  pages={1--12},
  year={2023}
}

@article{glatter2021history,
  title={{History of the plague: An ancient pandemic for the age of COVID-19}},
  author={Glatter, Kathryn A and Finkelman, Paul},
  journal={The American Journal of Medicine},
  volume={134},
  number={2},
  pages={176--181},
  year={2021},
  publisher={Elsevier}
}

@article{benvenuto2020application,
  title={{Application of the ARIMA model on the COVID-2019 epidemic dataset}},
  author={Benvenuto, Domenico and Giovanetti, Marta and Vassallo, Lazzaro and Angeletti, Silvia and Ciccozzi, Massimo},
  journal={Data in brief},
  volume={29},
  pages={105340},
  year={2020},
  publisher={Elsevier}
}

@article{pfeifer1980three,
  title={{A three-stage iterative procedure for space-time modeling phillip}},
  author={Pfeifer, Phillip E and Deutrch, Stuart Jay},
  journal={Technometrics},
  volume={22},
  number={1},
  pages={35--47},
  year={1980},
  publisher={Taylor \& Francis}
}

@article{shen2025engression,
  title={{Engression: extrapolation through the lens of distributional regression}},
  author={Shen, Xinwei and Meinshausen, Nicolai},
  journal={Journal of the Royal Statistical Society Series B: Statistical Methodology},
  volume={87},
  number={3},
  pages={653--677},
  year={2025},
  publisher={Oxford University Press UK}
}

@inproceedings{zhao2020rnn,
  title={{Do RNN and LSTM have long memory?}},
  author={Zhao, Jingyu and Huang, Feiqing and Lv, Jia and Duan, Yanjie and Qin, Zhen and Li, Guodong and Tian, Guangjian},
  booktitle={International Conference on Machine Learning},
  pages={11365--11375},
  year={2020},
  organization={PMLR}
}

@article{kraft2026modeling,
  title={{Modeling uncertainty with engression: A deep generative time-series approach}},
  author={Kraft, Basil and Stalder, Steven and Aeberhard, William H and Ruiz, Nicol{\'a}s Harrington and Meinshausen, Nicolai and Shen, Xinwei and Gudmundsson, Lukas},
  journal={Geophysical Research Letters},
  volume={53},
  number={2},
  pages={e2025GL120122},
  year={2026},
  publisher={Wiley Online Library}
}

@article{rathod2018improved,
  title={{An improved space-time autoregressive moving average (STARMA) model for modelling and forecasting of spatio-temporal time-series data}},
  author={Rathod, Santosha and Gurung, Bishal and Singh, KN and Ray, Mrinmoy},
  journal={{Journal of the Indian Society of Agricultural Statistics}},
  volume={72},
  number={3},
  pages={239--253},
  year={2018}
}

@article{panja2026stgcn,
  title={{E-STGCN: extreme spatio-temporal graph convolutional networks for air quality forecasting}},
  author={Panja, Madhurima and Chakraborty, Tanujit and Biswas, Anubhab and Deb, Soudeep},
  journal={{Journal of the Royal Statistical Society Series A: Statistics in Society}},
  pages={1--51},
  year={2026},
  publisher={Oxford University Press UK}
}

@article{dickey1979distribution,
  title={{Distribution of the Estimators for Autoregressive Time Series With a Unit Root}},
  author={Dickey, David A and Fuller, Wayne A},
  journal={Journal of the American Statistical Association},
  volume={74},
  number={366a},
  pages={427--431},
  year={1979},
  publisher={Taylor \& Francis}
}

@article{sah2023dengue,
  title={{Dengue virus and its recent outbreaks: current scenario and counteracting strategies}},
  author={Sah, Ranjit and Siddiq, Abdelmonem and Padhi, Bijaya K and Mohanty, Aroop and Rabaan, Ali A and Chandran, Deepak and Chakraborty, Chiranjib and Dhama, Kuldeep},
  journal={International Journal of Surgery},
  volume={109},
  number={9},
  pages={2841--2845},
  year={2023},
  publisher={LWW}
}

@article{cherry2004sars,
  title={{SARS: the First Pandemic of the 21st Century}},
  author={Cherry, James D and Krogstad, Paul},
  journal={Pediatric research},
  volume={56},
  number={1},
  pages={1--5},
  year={2004},
  publisher={Nature Publishing Group}
}

@article{nie2010efficient,
  title={{Efficient and robust feature selection via joint $\ell$2, 1-norms minimization}},
  author={Nie, Feiping and Huang, Heng and Cai, Xiao and Ding, Chris},
  journal={Advances in neural information processing systems},
  volume={23},
  year={2010}
}

@article{terasvirta1993power,
  title={Power of the neural network linearity test},
  author={Ter{\"a}svirta, Timo and Lin, Chien-Fu and Granger, Clive WJ},
  journal={Journal of time series analysis},
  volume={14},
  number={2},
  pages={209--220},
  year={1993},
  publisher={Wiley Online Library}
}

@book{sutton1998reinforcement,
  title={{Reinforcement learning: An introduction}},
  author={Sutton, Richard S and Barto, Andrew G and others},
  volume={1},
  year={1998},
  publisher={MIT press Cambridge}
}

@article{jensen2022ensemble,
  title={Ensemble conformalized quantile regression for probabilistic time series forecasting},
  author={Jensen, Vilde and Bianchi, Filippo Maria and Anfinsen, Stian Normann},
  journal={IEEE Transactions on Neural Networks and Learning Systems},
  volume={35},
  number={7},
  pages={9014--9025},
  year={2022},
  publisher={IEEE}
}

@inproceedings{yu2018spatio,
  title={Spatio-temporal graph convolutional networks: a deep learning framework for traffic forecasting},
  author={Yu, Bing and Yin, Haoteng and Zhu, Zhanxing},
  booktitle={Proceedings of the 27th International Joint Conference on Artificial Intelligence},
  pages={3634--3640},
  year={2018}
}

@article{shafer2008tutorial,
  title={A tutorial on conformal prediction.},
  author={Shafer, Glenn and Vovk, Vladimir},
  journal={Journal of Machine Learning Research},
  volume={9},
  number={3},
  year={2008}
}

@article{barman2025epidemic,
  title={{Epidemic-guided deep learning for spatiotemporal forecasting of tuberculosis outbreak}},
  author={Barman, Madhab and Panja, Madhurima and Mishra, Nachiketa and Chakraborty, Tanujit},
  journal={{Machine Learning}},
  volume={114},
  number={10},
  pages={213},
  year={2025},
  publisher={Springer}
}

@inproceedings{akiba2019optuna,
  title={{Optuna: A next-generation hyperparameter optimization framework}},
  author={Akiba, Takuya and Sano, Shotaro and Yanase, Toshihiko and Ohta, Takeru and Koyama, Masanori},
  booktitle={{Proceedings of the 25th ACM SIGKDD International Conference on Knowledge Discovery \& Data Mining}},
  pages={2623--2631},
  year={2019}
}

@article{winkler1996scoring,
  title={Scoring rules and the evaluation of probabilities},
  author={Winkler, Robert L and Munoz, Javier and Cervera, Jos{\'e} L and Bernardo, Jos{\'e} M and Blattenberger, Gail and Kadane, Joseph B and Lindley, Dennis V and Murphy, Allan H and Oliver, Robert M and R{\'\i}os-Insua, David},
  journal={Test},
  volume={5},
  number={1},
  pages={1--60},
  year={1996},
  publisher={Springer}
}

@inproceedings{gilmer2017neural,
  title={Neural message passing for quantum chemistry},
  author={Gilmer, Justin and Schoenholz, Samuel S and Riley, Patrick F and Vinyals, Oriol and Dahl, George E},
  booktitle={International Conference on Machine Learning},
  pages={1263--1272},
  year={2017},
  organization={Pmlr}
}

@article{feigin1985random,
  title={{Random coefficient autoregressive processes: a Markov chain analysis of stationarity and finiteness of moments}},
  author={Feigin, Paul D and Tweedie, Richard L},
  journal={Journal of time series analysis},
  volume={6},
  number={1},
  pages={1--14},
  year={1985},
  publisher={Wiley Online Library}
}

@book{meyn2012markov,
  title={Markov chains and stochastic stability},
  author={Meyn, Sean P and Tweedie, Richard L},
  year={2012},
  publisher={{Springer Science \& Business Media}}
}

@book{nemenyi1963distribution,
  title={Distribution-free multiple comparisons.},
  author={Nemenyi, Peter Bjorn},
  year={1963},
  publisher={Princeton University}
}

@article{gneiting2023model,
  title={Model diagnostics and forecast evaluation for quantiles},
  author={Gneiting, Tilmann and Wolffram, Daniel and Resin, Johannes and Kraus, Kristof and Bracher, Johannes and Dimitriadis, Timo and Hagenmeyer, Veit and Jordan, Alexander I and Lerch, Sebastian and Phipps, Kaleb and others},
  journal={Annual Review of Statistics and Its Application},
  volume={10},
  number={1},
  pages={597--621},
  year={2023},
  publisher={Annual Reviews}
}

@article{seeger2016bayesian,
  title={Bayesian intermittent demand forecasting for large inventories},
  author={Seeger, Matthias W and Salinas, David and Flunkert, Valentin},
  journal={Advances in neural information processing systems},
  volume={29},
  year={2016}
}

@incollection{tweedie1983criteria,
  title={Criteria for rates of convergence of Markov chains, with application to queueing and storage theory},
  author={Tweedie, RL},
  booktitle={Probability, statistics and analysis},
  volume={79},
  pages={260},
  year={1983},
  publisher={Cambridge University Press Cambridge}
}

@article{salinas2020deepar,
  title={{DeepAR: Probabilistic forecasting with autoregressive recurrent networks}},
  author={Salinas, David and Flunkert, Valentin and Gasthaus, Jan and Januschowski, Tim},
  journal={International Journal of Forecasting},
  volume={36},
  number={3},
  pages={1181--1191},
  year={2020},
  publisher={Elsevier}
}

@inproceedings{challu2023nhits,
  title={{Nhits: Neural hierarchical interpolation for time series forecasting}},
  author={Challu, Cristian and Olivares, Kin G and Oreshkin, Boris N and Ramirez, Federico Garza and Canseco, Max Mergenthaler and Dubrawski, Artur},
  booktitle={{Proceedings of the AAAI conference on artificial intelligence}},
  volume={37 (6)},
  pages={6989--6997},
  year={2023}
}

@article{ioannidis2022forecasting,
  title={{Forecasting for COVID-19 has failed}},
  author={Ioannidis, John PA and Cripps, Sally and Tanner, Martin A},
  journal={{International Journal of Forecasting}},
  volume={38},
  number={2},
  pages={423--438},
  year={2022},
  publisher={Elsevier}
}

@article{panja2023epicasting,
  title={Epicasting: an ensemble wavelet neural network for forecasting epidemics},
  author={Panja, Madhurima and Chakraborty, Tanujit and Kumar, Uttam and Liu, Nan},
  journal={Neural Networks},
  volume={165},
  pages={185--212},
  year={2023},
  publisher={Elsevier}
}

@article{rumack2022recalibrating,
  title={Recalibrating probabilistic forecasts of epidemics},
  author={Rumack, Aaron and Tibshirani, Ryan J and Rosenfeld, Roni},
  journal={PLOS Computational Biology},
  volume={18},
  number={12},
  pages={e1010771},
  year={2022},
  publisher={Public Library of Science San Francisco, CA USA}
}

@article{rebman1979pigeonhole,
  title={{The Pigeonhole Principle (What it is, how it works, and how it applies to map coloring)}},
  author={Rebman, Kenneth R},
  journal={{The Two-Year College Mathematics Journal}},
  volume={10},
  number={1},
  pages={3--13},
  year={1979},
  publisher={Taylor \& Francis}
}

@article{oreshkin2019n,
  title={{N-BEATS: Neural basis expansion analysis for interpretable time series forecasting}},
  author={Oreshkin, Boris N and Carpov, Dmitri and Chapados, Nicolas and Bengio, Yoshua},
  journal={arXiv preprint arXiv:1905.10437},
  year={2019}
}

@article{liu2023itransformer,
  title={{iTransformer: Inverted transformers are effective for time series forecasting}},
  author={Liu, Yong and Hu, Tengge and Zhang, Haoran and Wu, Haixu and Wang, Shiyu and Ma, Lintao and Long, Mingsheng},
  journal={arXiv preprint arXiv:2310.06625},
  year={2023}
}

@article{vaswani2017attention,
  title={Attention is all you need},
  author={Vaswani, Ashish and Shazeer, Noam and Parmar, Niki and Uszkoreit, Jakob and Jones, Llion and Gomez, Aidan N and Kaiser, {\L}ukasz and Polosukhin, Illia},
  journal={Advances in neural information processing systems},
  volume={30},
  year={2017}
}

@article{bai2018empirical,
  title={{An Empirical Evaluation of Generic Convolutional and Recurrent Networks for Sequence Modeling}},
  author={Bai, Shaojie},
  journal={arXiv preprint arXiv:1803.01271},
  year={2018}
}

@inproceedings{ruchjana2012least,
  title={{Least squares estimation of Generalized Space Time AutoRegressive (GSTAR) model and its properties}},
  author={Ruchjana, Budi Nurani and Borovkova, Svetlana A and Lopuhaa, HP and Baskoro, Edy Tri and Suprijanto, Djoko},
  booktitle={AIP Conference Proceedings-American Institute of Physics},
  volume={1450 (1)},
  pages={61},
  year={2012}
}

@article{koning2005m3,
  title={{The M3 competition: Statistical tests of the results}},
  author={Koning, Alex J and Franses, Philip Hans and Hibon, Michele and Stekler, Herman O},
  journal={{International Journal of Forecasting}},
  volume={21},
  number={3},
  pages={397--409},
  year={2005},
  publisher={Elsevier}
}

@article{guinness2018permutation,
  title={Permutation and grouping methods for sharpening Gaussian process approximations},
  author={Guinness, Joseph},
  journal={Technometrics},
  volume={60},
  number={4},
  pages={415--429},
  year={2018},
  publisher={Taylor \& Francis}
}

@article{vecchia1988estimation,
  title={Estimation and model identification for continuous spatial processes},
  author={Vecchia, Aldo V},
  journal={Journal of the Royal Statistical Society Series B: Statistical Methodology},
  volume={50},
  number={2},
  pages={297--312},
  year={1988},
  publisher={Oxford University Press}
}

@article{herzen2022darts,
  title={Darts: User-friendly modern machine learning for time series},
  author={Herzen, Julien and L{\"a}ssig, Francesco and Piazzetta, Samuele Giuliano and Neuer, Thomas and Tafti, L{\'e}o and Raille, Guillaume and Van Pottelbergh, Tomas and Pasieka, Marek and Skrodzki, Andrzej and Huguenin, Nicolas and others},
  journal={Journal of Machine Learning Research},
  volume={23},
  number={124},
  pages={1--6},
  year={2022}
}

@book{moraga2023spatial,
  title={Spatial statistics for data science: theory and practice with R},
  author={Moraga, Paula},
  year={2023},
  publisher={Chapman and Hall/CRC}
}

@inproceedings{guo2019exploring,
  title={{Exploring interpretable LSTM neural networks over multi-variable data}},
  author={Guo, Tian and Lin, Tao and Antulov-Fantulin, Nino},
  booktitle={International Conference on Machine Learning},
  pages={2494--2504},
  year={2019},
  organization={PMLR}
}

@article{sumi2019time,
  title={Time-series analysis of geographically specific monthly number of newly registered cases of active tuberculosis in Japan},
  author={Sumi, Ayako and Kobayashi, Nobumichi},
  journal={PLoS One},
  volume={14},
  number={3},
  pages={e0213856},
  year={2019},
  publisher={Public Library of Science San Francisco, CA USA}
}

@article{ma2023influential,
  title={{Influential factors of tuberculosis in mainland China based on MGWR model}},
  author={Ma, Zhipeng and Fan, Hong},
  journal={Plos one},
  volume={18},
  number={8},
  pages={e0290978},
  year={2023},
  publisher={Public Library of Science San Francisco, CA USA}
}

@article{clarke2024global,
  title={{A global dataset of publicly available dengue case count data}},
  author={Clarke, J and Lim, A and Gupte, P and Pigott, DM and Van Panhuis, WG and Brady, OJ},
  journal={Scientific Data},
  volume={11},
  number={1},
  pages={296},
  year={2024},
  publisher={Nature Publishing Group UK London}
}

@article{rozemberczki2021chickenpox,
  title={Chickenpox cases in Hungary: a benchmark dataset for spatiotemporal signal processing with graph neural networks},
  author={Rozemberczki, Benedek and Scherer, Paul and Kiss, Oliver and Sarkar, Rik and Ferenci, Tamas},
  journal={arXiv preprint arXiv:2102.08100},
  year={2021}
}

@inproceedings{rozemberczki2021pytorch,
  title={{Pytorch geometric temporal: Spatiotemporal signal processing with neural machine learning models}},
  author={Rozemberczki, Benedek and Scherer, Paul and He, Yixuan and Panagopoulos, George and Riedel, Alexander and Astefanoaei, Maria and Kiss, Oliver and Beres, Ferenc and Lopez, Guzman and Collignon, Nicolas and others},
  booktitle={Proceedings of the 30th ACM International Conference on Information \& Knowledge Management},
  pages={4564--4573},
  year={2021}
}

@article{gneiting2007strictly,
  title={Strictly proper scoring rules, prediction, and estimation},
  author={Gneiting, Tilmann and Raftery, Adrian E},
  journal={Journal of the American statistical Association},
  volume={102},
  number={477},
  pages={359--378},
  year={2007},
  publisher={Taylor \& Francis}
}

@article{alexandrov2020gluonts,
  title={{GluonTS: Probabilistic and neural time series modeling in python}},
  author={Alexandrov, Alexander and Benidis, Konstantinos and Bohlke-Schneider, Michael and Flunkert, Valentin and Gasthaus, Jan and Januschowski, Tim and Maddix, Danielle C and Rangapuram, Syama and Salinas, David and Schulz, Jasper and others},
  journal={Journal of Machine Learning Research},
  volume={21},
  number={116},
  pages={1--6},
  year={2020}
}

@article{bishop1994mixture,
  title={Mixture density networks},
  author={Bishop, Christopher M},
  year={1994},
  publisher={Aston University}
}

@article{papamakarios2021normalizing,
  title={Normalizing flows for probabilistic modeling and inference},
  author={Papamakarios, George and Nalisnick, Eric and Rezende, Danilo Jimenez and Mohamed, Shakir and Lakshminarayanan, Balaji},
  journal={Journal of Machine Learning Research},
  volume={22},
  number={57},
  pages={1--64},
  year={2021}
}

@inproceedings{hu2023graph,
  title={Graph neural processes for spatio-temporal extrapolation},
  author={Hu, Junfeng and Liang, Yuxuan and Fan, Zhencheng and Chen, Hongyang and Zheng, Yu and Zimmermann, Roger},
  booktitle={{Proceedings of the 29th ACM SIGKDD Conference on Knowledge Discovery and Data Mining}},
  pages={752--763},
  year={2023}
}

@article{jaeger2001echo,
  title={The “echo state” approach to analysing and training recurrent neural networks-with an erratum note},
  author={Jaeger, Herbert},
  journal={Bonn, Germany: German national research center for information technology gmd technical report},
  volume={148},
  number={34},
  pages={13},
  year={2001},
  publisher={Bonn}
}

@inproceedings{
kipf2017semisupervised,
title={{Semi-Supervised Classification with Graph Convolutional Networks}},
author={Thomas N. Kipf and Max Welling},
booktitle={International Conference on Learning Representations},
year={2017},
url={https://openreview.net/forum?id=SJU4ayYgl}
}
\bibliographystyle{tmlr}

\clearpage
\section*{Appendix}
\appendix

\section{Mathematical Proofs for Section \ref{sec:ergodicity}} \label{appendix:Proofs}

\subsection{Proof of Theorem \ref{thm:Preadditive_Ergodicity}}
\begin{proof}
    This proof follows a similar structure to that of Theorem 1 in \cite{zhao2020rnn}, albeit they considered a post-additive noise process. Given that $S_t = \left(\mathbf{y}_t^\top, \mathbf{h}_t^\top\right)^\top \in \mathbb{R}^{k}$, where $k=N+N'$. The process is defined as $S_t = \mathcal{F}(S_{t-1}+\boldsymbol{\varepsilon}_t),$ where $\mathcal{F}:\mathbb{R}^k\to\mathbb{R}^k$ is a general non-linear function and $\boldsymbol{\varepsilon}_t = (\boldsymbol{\eta}^\top_t; \boldsymbol{\xi}^\top_{t})^\top$, where $\boldsymbol{\eta}_t \in \mathbb{R}^N, \ \boldsymbol{\xi}_t \in \mathbb{R}^{N'}$. $\{S_t\}$ is a homogeneous Markov chain on the state space $(\mathbb{R}^k, \mathcal{B}^k, \lambda_k)$. We shall establish geometric ergodicity by first showing that $\{S_t\}$ is $\lambda_k$-irreducible, followed by proving that it satisfies Tweedie's drift criterion (\eqref{eqn:Tweedie_drift_criterion}; \cite{tweedie1983criteria}).

    For a set $A\in \mathcal{B}^k$ and $\mathbf{s}\in \mathbb{R}^k$, the transition probability is given by $$\mathbb{P}(\mathbf{s},A) = \mathbb{P}(S_t\in A| S_{t-1}=\mathbf{s}) = \int_{\mathbb{R}^k} \mathbb{I}_A \left(\mathcal{F}(\mathbf{s}+z)\right) f_{\boldsymbol{\varepsilon}}(z) \ dz,$$
    where $f_{\boldsymbol{\varepsilon}}(\cdot)$ denotes the density of $\boldsymbol{\varepsilon}_t$, which is given by $f_{\boldsymbol{\varepsilon}}(z) = \prod_{i=1}^N f_{\boldsymbol{\eta}}(z_i) \prod_{j=N+1}^k f_{\boldsymbol{\xi}}(z_j)$, because the noise coordinates are independent. By assumptions (A1) and (A2), the transition kernel admits a density that is positive on sets of positive Lebesgue measure. Thus, $\{S_t\}$ is $\lambda_k$-irreducible.
    
    Finally, it remains to show that there exists a small set $G$ with $\lambda_k(G)>0$ and a non-negative continuous function $V(\mathbf{s})$ such that
    \begin{equation} \label{eqn:Tweedie_drift_criterion}
        \mathbb{E}[V(S_t)|S_{t-1}=\mathbf{s}] \leq \begin{cases}(1-\delta)V(\mathbf{s}), &\mathbf{s}\notin G,\\
        \widehat{M}, &\mathbf{s}\in G
        \end{cases}
    \end{equation}
    for some $0<\delta<1$ and $0<\widehat{M}<\infty$.
By hypothesis, we have constants $0<a<1$ and $b$ such that $\|\mathcal{F}(z)\| \leq a\|z\| + b$. Define test function $V(\mathbf{s}) = 1 + \|\mathbf{s}\| > 0$. Then,
\begin{align*}
    \mathbb{E}[V(S_t)|S_{t-1}=\mathbf{s}] &= \mathbb{E}[V(\mathcal{F}(\mathbf{s}+\boldsymbol{\varepsilon}_t))]\\
    &= \mathbb{E}[1+\|\mathcal{F}(\mathbf{s} + \boldsymbol{\varepsilon}_t)\|]\\
    &\leq 1 + \mathbb{E}[a\|\mathbf{s}+\boldsymbol{\varepsilon}_t\|+b]\quad \text{(by hypothesis)}\\
    &\leq 1 + \mathbb{E}[a\|\mathbf{s}\| + a\|\boldsymbol{\varepsilon}_t\|+b] \quad \text{(triangle inequality)}\\
    &= 1 + a\|\mathbf{s}\| + b + a\mathbb{E}(\|\boldsymbol{\varepsilon}_t\|)\\
    &= aV(\mathbf{s}) + 1 - a + b + a\mathbb{E}(\|\boldsymbol{\varepsilon}_t\|)
\end{align*}

\noindent Define $$\delta = 1 - a - \frac{1-a+b+a\mathbb{E}(\|\boldsymbol{\varepsilon}_t\|)}{V(\mathbf{s})}$$ and $G=\{\mathbf{s}:\|\mathbf{s}\|\leq L\}$ such that $V(\mathbf{s}) > 1+\frac{b+\mathbb{E}(\|\boldsymbol{\varepsilon}_t\|)}{1-a} \ \forall \, \|\mathbf{s}\|>L$, which ensures that \eqref{eqn:Tweedie_drift_criterion} holds true. Finally, as $\mathbb{E}[g(S_t)|S_{t-1}=\mathbf{s}]$ is continuous with respect to $\mathbf{s}$ for any bounded and continuous function $g$, it follows that $\{S_t\}$ is a Feller chain \citep{feigin1985random}. As $G$ is compact, by \cite{feigin1985random}, we have that $G$ is a small set. Hence, by Theorem 4(ii) and 1 in \cite{tweedie1983criteria} and \cite{feigin1985random} respectively, it follows that $\{S_t\}$ is geometrically ergodic.
\end{proof}

\subsection{Proof of Theorem \ref{thm:GCEN_ergodicity}}
\begin{proof}
     This proof focuses on showing that the LSTM update equation satisfies the contraction property required by Theorem \ref{thm:Preadditive_Ergodicity}. Let $u=f_{spatial}(\mathbf{y}_{t-1}) + \boldsymbol{\eta}_t \in B_\infty^N$ by assumption (A4), $v=\mathbf{h}_{t-1}+\boldsymbol{\xi}_t\in B_\infty^{N'}$, $w=\mathbf{c}_{t-1} + \boldsymbol{\psi}_t$, and $x = o(u,v) \odot \tanh\left[i(u,v)\odot \tanh(W_{ch}v + W_{cy}u + b_c) + f(u,v)\odot w\right] \in B_\infty^{N'}$. Note that $x\in B_{\infty}^{N'} \implies \left\|x\right\|_1 \leq N'$. Our objective is to find $0<a<1$ and $b$ such that $\left\|\mathcal{F}_{LSTM}(u,v,w)\right\|_1 \leq a\left\|(u,v,w)\right\|_1+b$. Let $a = a_0 \in (0,1)$ and $b=\widetilde{M}+2N'$, where $\widetilde{M} := \sup_{x\in B_\infty^{N'}} \left\|g(W_{zh}x+b_z)\right\|_1 < \infty$, under assumption (A5). Thus, we have
    \begin{align*}
        &\left\|\mathcal{F}_{LSTM}(u,v,w)\right\|_1 - a_0 \left\|(u,v,w)\right\|_1\\ 
        &\leq \left\|g(W_{zh}x+b_z)\right\|_1 + \left\|x\right\|_1 + \left\|\mathbf{1}_{N'} + f(u,v) \odot w\right\|_1 - a_0 \left\|(u,v,w)\right\|_1\\
        &\leq \widetilde{M} + 2N' + \left\|f(u,v)\right\|_\infty \left\|w\right\|_1 - a_0 \left(\left\|u\right\|_1 + \left\|v\right\|_1 + \left\|w\right\|_1\right)\\
        &\leq \widetilde{M} + 2N' - a_0 \left\|u\right\|_1 - a_0 \left\|v\right\|_1 + \left(\left\|f(u,v)\right\|_\infty-a_0\right) \left\|w\right\|_1.
    \end{align*}
    We have by Assumption (A6), that 
    \begin{align*}
        \left\|f(u,v)\right\|_\infty &= \left\|\sigma(W_{fh}v+W_{fy}u+b_f)\right\|_\infty\\ 
        &\leq \sigma\left(\left\|W_{fh}\right\|_\infty +\left\|W_{fy}\right\|_\infty+\left\|b_f\right\|_\infty\right) \\
        &\leq a_0.
    \end{align*} Putting together, we have $$\left\|\mathcal{F}_{LSTM}(u,v,w)\right\|_1 - a_0 \left\|(u,v,w)\right\|_1\leq \widetilde{M} + 2N' = b.$$
    By Theorem \ref{thm:Preadditive_Ergodicity}, the GCEN and STEN processes as defined in \eqref{eqn:GCEN/STEN process} is geometrically ergodic.
\end{proof}

\section{Algorithms} \label{appendix:algo}
Algorithms \ref{alg:genforward}, \ref{alg:sen_forward}, and \ref{alg:men_forward} present the pseudo codes for the forward passes of GCEN, STEN, and MVEN respectively. Algorithm for the training procedure is provided in Algorithm \ref{alg:training_procedure}.

\begin{algorithm}
\caption{Graph Convolutional Engression Network (GCEN) Forward Pass}
\label{alg:genforward}
\begin{algorithmic}[1]
\Require Input tensor $\mathbf{Y}=\mathbf{Y}_{t-p+1:t} \in \mathbb{R}^{B \times p \times N \times D}$ (Batch, Lag, Nodes, Features); Adjacency info $\mathcal{G} = (\mathcal{V}, \mathcal{E})$; Noise distribution $\boldsymbol{\eta} \sim \mathcal{P}_z$.
\Ensure Single forecast trajectory $\widehat{\mathbf{Y}} = \widehat{\mathbf{Y}}_{t+1:t+q} \in \mathbb{R}^{B \times q \times N \times D}$.

\Statex \textbf{// Phase 1: Spatial feature extraction (GCN)}
\State Permute $\mathbf{Y}$ to node-first format: $\mathbf{Y} \leftarrow \text{Permute}(\mathbf{Y}, (N, B, p, D))$
\State Compute transformed node features: $\mathbf{H}_{nodes} = \mathbf{Y} \Theta_{GCN}$
\State Gather neighbor features $\widetilde{\mathbf{Y}}$ based on edges $\mathcal{E} \in \mathcal{G}$
\State Aggregate neighbor messages: $\mathbf{M} = \text{Aggregate}(\mathbf{Y}, \widetilde{\mathbf{Y}})$ \Comment{Mean, Sum, or Max}
\State Combine and apply non-linearity: $\mathbf{Z} = \sigma(\text{Combine}(\mathbf{H}_{nodes}, \mathbf{M} \Theta_{GCN}))$ \Comment{Add or Concat}
\State Permute back to batch-first format: $\mathbf{Z} \leftarrow \text{Permute}(\mathbf{Z}, (B, p, N, D'))$

\Statex \textbf{// Phase 2: Stochastic noise injection (Engression)}
\If{NoiseMode is \textit{Additive}}
    \State Sample $\boldsymbol{\eta} \sim \mathcal{P}_z$ with shape of $\mathbf{Z}$
    \State $\mathbf{Z}_{noisy} = \mathbf{Z} + \boldsymbol{\eta}$
\Else \If{NoiseMode is \textit{Concatenation}}
    \State Sample $\boldsymbol{\eta} \sim \mathcal{P}_z$ with shape $(B,p,N,D_{noise})$
    \State $\mathbf{Z}_{noisy} = [\mathbf{Z} \parallel \boldsymbol{\eta}]$ \Comment{Concatenate along feature dimension}
\EndIf
\EndIf

\Statex \textbf{// Phase 3: Temporal processing (LSTM)}
\State Reshape $\mathbf{Z}_{noisy}$ for sequential processing: $\mathbf{Z}_{noisy} \in \mathbb{R}^{p \times (B \cdot N) \times D_{net}}$ \\ \Comment{$D_{net}=D'$ if NoiseMode is Additive, else $D_{net} = D' + D_{noise}$}
\State $(\mathbf{h}_n, \mathbf{c}_n) = \text{LSTM}(\mathbf{Z}_{noisy})$
\State Extract final hidden state: $\mathbf{h}_{final} = \mathbf{h}_n[-1]$ \Comment{Shape: $(B \cdot N) \times D_{hidden}$}

\Statex \textbf{// Phase 4: Forecast generation}
\State Project to forecast horizon: $\widehat{\mathbf{Y}} = \text{Dense}(\mathbf{h}_{final})$ \Comment{Shape: $(B \cdot N) \times (q \cdot D)$}
\State Reshape and permute: $\widehat{\mathbf{Y}} \leftarrow \text{Reshape}(\widehat{\mathbf{Y}}, (B, q, N, D))$

\State \Return $\widehat{\mathbf{Y}}$
\end{algorithmic}
\end{algorithm}

\begin{algorithm}
\caption{Spatio-Temporal Engression Network (STEN) Forward Pass}
\label{alg:sen_forward}
\begin{algorithmic}[1]
\Require Input tensor $\mathbf{Y} = \mathbf{Y}_{t-p+1:t} \in \mathbb{R}^{B \times p \times N \times D}$ (Batch, Lag, Nodes, Features); Spatial weights matrices $\left\{ W^{l} \right\}_{l=0}^{L}$ for spatial lags $0$ to $L$; Noise distribution $\boldsymbol{\eta} \sim \mathcal{P}_z$.
\Ensure Single forecast trajectory $\widehat{\mathbf{Y}} = \widehat{\mathbf{Y}}_{t+1:t+q}\in \mathbb{R}^{B \times q \times N \times D}$.

\Statex \textbf{// Phase 1: Spatial embedding (STAR-layer)}
\State $\mathbf{Z}_{total} \leftarrow \mathbf{0} \in \mathbb{R}^{B \times p \times N \times D'}$
\For{$l = 0$ \textbf{to} $L$} \Comment{Iterate through spatial lags}
    \State Compute spatial aggregation: $\mathbf{Y}^{(l)}_\tau = W^{l} \mathbf{Y}_\tau$ \Comment{{\scriptsize Performed for each $\tau \in \{t-p+1, \dots, t\}$; $\mathbf{Y}_\tau \in \mathbb{R}^{N\times D}$}}
    \State Apply learnable transformation: $\mathbf{H}^{(l)} = \mathbf{Y}^{(l)}_\tau \Phi_l$ \Comment{$\Phi_l \in \mathbb{R}^{D\times D'}$ is weight matrix for lag $l$}
    \State $\mathbf{Z}_{total} = \mathbf{Z}_{total} + \mathbf{H}^{(l)}$ \Comment{Summation across all spatial lags}
\EndFor
\State $\widetilde{\mathbf{Z}} = \text{ReLU}(\mathbf{Z}_{total})$ \Comment{Final spatial embedding}

\State Phase 2 to Phase 4 with the spatial embedding $\widetilde{\mathbf{Z}}$ follows from Steps 7-22 in Algorithm \ref{alg:genforward}.

\end{algorithmic}
\end{algorithm}

\begin{algorithm}
\caption{Multivariate Engression Network (MVEN) Forward Pass}
\label{alg:men_forward}
\begin{algorithmic}[1]
\Require Input tensor $\mathbf{Y} = \mathbf{Y}_{t-p+1:t} \in \mathbb{R}^{B \times p \times N \times D}$ (Batch, Lag, Nodes, Features); Noise distribution $\boldsymbol{\eta} \sim \mathcal{P}_z$.
\Ensure Single forecast trajectory $\widehat{\mathbf{Y}} = \widehat{\mathbf{Y}}_{t+1:t+q} \in \mathbb{R}^{B \times q \times N \times D}$.

\Statex \textbf{// Phase 1: Stochastic noise injection (Engression)}
\If{NoiseMode is \textit{Additive}}
    \State Sample $\boldsymbol{\eta} \sim \mathcal{P}_z$ with shape $(B, p, N, D)$
    \State $\mathbf{Y}_{noisy} = \mathbf{Y} + \boldsymbol{\eta}$
\Else \If{NoiseMode is \textit{Concatenation}}
    \State Sample $\boldsymbol{\eta} \sim \mathcal{P}_z$ with shape $(B, p, N, D_{noise})$
    \State $\mathbf{Y}_{noisy} = [\mathbf{Y} \parallel \boldsymbol{\eta}]$ \Comment{Concatenate along feature dimension $D$}
\EndIf
\EndIf

\State The rest of the algorithm follows from steps 16-22 in Algorithm \ref{alg:genforward} with $\mathbf{Y}_{noisy}$ instead of $\mathbf{Z}_{noisy}$.
\end{algorithmic}
\end{algorithm}

\begin{algorithm}
\caption{Training Procedure of the Proposed Models}
\label{alg:training_procedure}
\begin{algorithmic}[1]
\Require Training dataset: $\mathcal{D} = \{(\mathbf{Y}_{t-p+1:t}, \mathbf{Y}_{t+1:t+q})\}, \ p\leq t\leq T-q$; Ensemble size: $M$; Number of epochs: $E$; Learning rate: $\varepsilon$; Loss function: $\mathcal{L}$ (energy score loss).
\Ensure Trained model parameters $\Theta = \{\Theta_{\text{Spatial}}, \Theta_{\text{LSTM}}, W_{out}\}$, where $\Theta_{\text{Spatial}}, \Theta_{\text{LSTM}}, \text{ and } W_{out}$ are the weight matrices for the spatial component (GCN/STAR; ignore for MVEN), LSTM, and dense layers respectively.

\State Initialize $\Theta$ using Xavier initialization;
\State Form batches of inputs and targets. Denote by $\mathcal{X} = [\mathbf{Y}_{t-p+1:t}] \in \mathbb{R}^{B\times p \times N \times D}$ the input sequences for different time points $t$ stacked over the first (batch) dimension, and let $\mathcal{Y} = [\mathbf{Y}_{t+1:t+q}] \in \mathbb{R}^{B\times q \times N \times D}$ be the corresponding batch of target sequences;
\For{epoch $= 1$ \textbf{to} $E$}
    \For{each batch $(\mathcal{X}, \mathcal{Y}) \in \mathcal{D}$}
        \State $\mathbf{S} \leftarrow \emptyset$ \Comment{Initialize empty ensemble list}
        
        \Statex \indent \quad \textbf{// Ensemble generation during in-sample prediction. Typically, set $M=2$}
        \For{$m = 1$ \textbf{to} $M$}
            
            \State $\widehat{\mathcal{Y}}^{(m)} \leftarrow \text{Model}(\mathcal{X}; \Theta)$ \Comment{Forward pass through GCEN/STEN/MVEN}
            \State $\mathbf{S} \leftarrow \mathbf{S} \cup \{\widehat{\mathcal{Y}}^{(m)}\}$ \Comment{{\scriptsize Append to ensemble list; each forward pass produces a different forecast trajectory}}
        \EndFor
        
        \State Form the forecast ensemble tensor by stacking the elements of $\mathbf{S}$ along the first dimension: $\widehat{\mathcal{Y}}_{ensemble} \in \mathbb{R}^{M \times B \times q \times N \times D}$
        
        \Statex \indent \quad \textbf{// Loss computation and backpropagation}
        \State $\text{loss} \leftarrow \mathcal{L}(\mathcal{Y}, \widehat{\mathcal{Y}}_{ensemble})$ \Comment{Probabilistic loss over ensemble}
        \State $\Theta \leftarrow \Theta - \varepsilon \cdot \nabla_\Theta (\text{loss})$ \Comment{Update weights via Adam/SGD}
    \EndFor

    \State Record loss per epoch for convergence monitoring;
\EndFor

\State \Return $\Theta$
\end{algorithmic}
\end{algorithm}

\section{Energy Score Loss} \label{appendix:esloss}
\begin{figure}[!ht]
        \centering
        \includegraphics[width=0.8\linewidth]{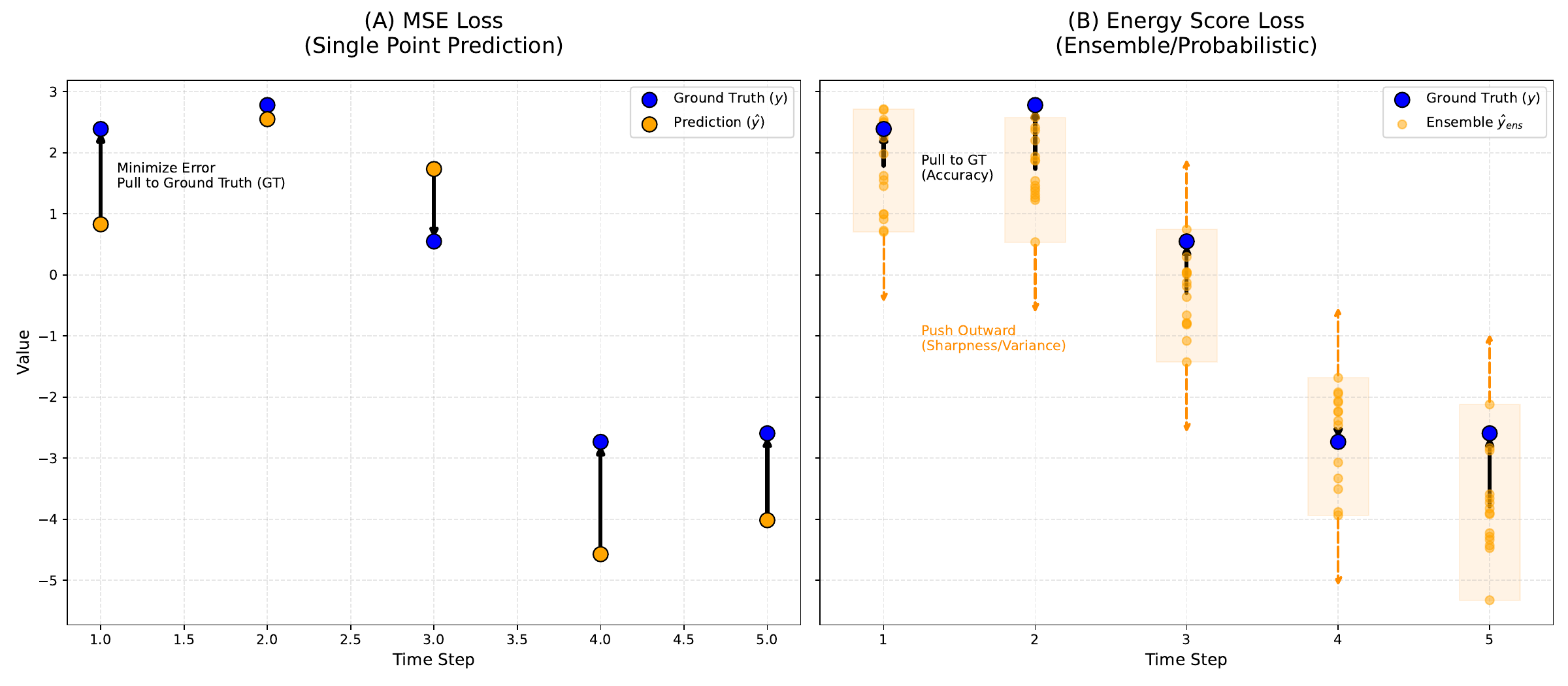}
        \caption{A visual comparison between MSE and energy score loss using toy data. The accuracy term ($\mathcal{L}_{accuracy}$) in ES loss is analogous to MSE. The sharpness term is what makes energy score a probabilistic loss function.}
        \label{fig:mse_vs_energy}
\end{figure}

Fig. \ref{fig:mse_vs_energy} illustrates the components of the energy score loss in contrast to MSE. In the figure, the ground truth data points (blue) are generated using $y_t = 3 \sin(t) + \epsilon_t$, where $t=\{1, \ldots, 5\}$ and $\epsilon_t\sim \mathcal{N}(0, 0.5)$. The corresponding predictions (orange) are generated using $\widehat{y}_t = y_t + \eta_t$, where $\eta_t\sim \mathcal{N}(0, 1.5)$. In panel (B), for each prediction $\widehat{y}_t$, the ensemble points constituting the orange shaded region is created by sampling $15$ random points from $\mathcal{N}(\widehat{y}_t, 0.6)$.

The accuracy (first) term in the ES loss, analogous to MSE, minimizes the distance between the ground truth and predictions, effectively centering the sample cloud around the observed outcome. Conversely, the sharpness (second) term is the average distance between prediction pairs. This is subtracted within the loss function. Maximizing this distance during training encourages predictive diversity and prevents mode collapse, where the model might otherwise converge on identical values. This forces the model to generate a spread of plausible futures that accurately reflect underlying uncertainty. The energy score loss inherently balances exploitation - anchoring the predictive distribution to the ground truth via the accuracy term, with exploration - effectively penalizing mode collapse to ensure the ensemble adequately spans the underlying stochasticity.

\section{Global Temporal Properties of the Datasets} \label{appendix:global_properties}

\begin{table}[!ht]
\caption{Global characteristics of Japan's TB dataset.}
\label{table:Japan_Characteristics}%
\begin{adjustbox}{width=\linewidth}
\begin{tabular}{llllllllll}
\toprule
Node No. & Node Name & Mean & STD & Range & Skewness & Kurtosis & \makecell{ADF\\$\mathrm{p}$-value} & \makecell{Hurst\\Exponent} & \makecell{Teräsvirta\\$\mathrm{p}$-value} \\ \midrule
        1 & Hokkaido & 71 & 25.74 & [26, 163] & 0.8 & 0 & 0.24 & 0.81 & 0.00 \\ 
        2 & Aomori & 24.05 & 9.98 & [8, 61] & 0.87 & 0.56 & 0.45 & 0.76 & 0.59 \\ 
        3 & Iwate & 17.5 & 7.86 & [5, 48] & 1.1 & 1.51 & 0.32 & 0.71 & 0.09 \\ 
        4 & Miyagi & 27.59 & 11.55 & [6, 71] & 0.92 & 0.6 & 0.5 & 0.73 & 0.01 \\ 
        5 & Akita & 14.52 & 6.91 & [2, 38] & 0.82 & 0.47 & 0.27 & 0.71 & 0.63 \\ 
        6 & Yamagata & 13.43 & 5.99 & [3, 36] & 1.31 & 2.08 & 0.01 & 0.72 & 0.49 \\ 
        7 & Fukushima & 26.02 & 11.3 & [7, 75] & 1.11 & 1.65 & 0.71 & 0.73 & 0.02 \\ 
        8 & Ibaragi & 43.38 & 13.17 & [22, 84] & 0.68 & -0.19 & 0.77 & 0.71 & 0.39 \\ 
        9 & Tochigi & 27.51 & 9.59 & [9, 59] & 0.77 & 0.21 & 0.07 & 0.67 & 0.07 \\ 
        10 & Gunnma & 25.23 & 10.08 & [10, 70] & 1.18 & 1.99 & 0.59 & 0.7 & 0.24 \\ 
        11 & Saitama & 111.5 & 27.39 & [53, 232] & 0.91 & 1.35 & 0.81 & 0.74 & 0.13 \\ 
        12 & Chiba & 106.15 & 30.01 & [53, 190] & 0.64 & -0.15 & 0.48 & 0.78 & 0.09 \\ 
        13 & Tokyo & 289.13 & 63.62 & [165, 544] & 0.5 & 0.51 & 0.95 & 0.77 & 0.44 \\ 
        14 & Kanagawa & 150.27 & 37.59 & [89, 306] & 0.94 & 1.11 & 0.71 & 0.76 & 0.50 \\ 
        15 & Niigata & 31.31 & 11.97 & [11, 65] & 0.66 & -0.18 & 0.69 & 0.72 & 0.09 \\ 
        16 & Toyoama & 17.44 & 7.54 & [5, 48] & 0.9 & 0.77 & 0.5 & 0.7 & 0.01 \\ 
        17 & Ishikawa & 18.12 & 7.35 & [4, 47] & 1.01 & 1.31 & 0.63 & 0.69 & 0.13 \\ 
        18 & Fukui & 11.81 & 4.76 & [2, 29] & 0.69 & 0.63 & 0.87 & 0.64 & 0.24 \\ 
        19 & Yamanashi & 9.9 & 4.38 & [2, 28] & 0.76 & 0.85 & 0.05 & 0.7 & 0.71 \\ 
        20 & Nagano & 20.34 & 6.33 & [7, 42] & 0.68 & 1.05 & 0.23 & 0.66 & 0.44 \\ 
        21 & Gifu & 42.79 & 14.53 & [18, 107] & 1.05 & 1.65 & 0.65 & 0.73 & 0.10 \\ 
        22 & Shizuoka & 61.15 & 17.83 & [30, 130] & 0.45 & 0 & 0.55 & 0.72 & 0.11 \\ 
        23 & Aichi & 151.2 & 37.73 & [76, 280] & 0.64 & 0.21 & 0.67 & 0.82 & 0.00 \\ 
        24 & Mie & 30.3 & 10.87 & [8, 68] & 0.7 & 0.23 & 0.69 & 0.69 & 0.01 \\ 
        25 & Shiga & 20.75 & 7.46 & [6, 49] & 0.79 & 0.85 & 0.58 & 0.71 & 0.01 \\ 
        26 & Kyoto & 59.57 & 28.43 & [20, 178] & 1.98 & 4.58 & 0.03 & 0.76 & 0.22 \\ 
        27 & Osaka & 280.49 & 113.18 & [119, 623] & 0.94 & -0.11 & 0.4 & 0.87 & 0.01 \\ 
        28 & Hyogo & 127.13 & 48.09 & [57, 320] & 1.1 & 0.63 & 0.42 & 0.8 & 0.00 \\ 
        29 & Nara & 27.9 & 9.69 & [9, 58] & 0.68 & -0.11 & 0.58 & 0.73 & 0.04 \\ 
        30 & Wakayama & 22.69 & 9.41 & [4, 57] & 1.03 & 0.93 & 0.56 & 0.73 & 0.00 \\ 
        31 & Tottori & 9.38 & 4.35 & [1, 27] & 0.91 & 1.32 & 0.23 & 0.61 & 0.23 \\ 
        32 & Shimane & 11.66 & 4.28 & [3, 27] & 0.56 & 0.36 & 0.42 & 0.66 & 0.64 \\ 
        33 & Okayama & 30.54 & 10.85 & [11, 63] & 0.75 & 0.34 & 0.7 & 0.72 & 0.26 \\ 
        34 & Hiroshima & 44.77 & 14.48 & [17, 91] & 0.77 & 0.03 & 0.7 & 0.76 & 0.62 \\ 
        35 & Yamaguchi & 27.09 & 12.09 & [9, 71] & 1.05 & 0.83 & 0.46 & 0.78 & 0.26 \\ 
        36 & Tokushima & 16.94 & 7.43 & [5, 42] & 1.08 & 1.06 & 0.46 & 0.75 & 0.03 \\ 
        37 & Kagawa & 19.31 & 7.87 & [4, 52] & 0.84 & 0.91 & 0.44 & 0.72 & 0.02 \\ 
        38 & Ehime & 23.14 & 8.83 & [8, 49] & 0.83 & 0.42 & 0.4 & 0.75 & 0.34 \\ 
        39 & Kochi & 14.84 & 6.82 & [1, 40] & 0.82 & 0.34 & 0.34 & 0.7 & 0.13 \\ 
        40 & Fukuoka & 97.16 & 30.8 & [38, 201] & 0.81 & 0.24 & 0.67 & 0.81 & 0.10 \\ 
        41 & Saga & 15.06 & 5.29 & [2, 33] & 0.36 & 0.2 & 0.37 & 0.62 & 0.08 \\ 
        42 & Nagasaki & 31.32 & 10.01 & [11, 63] & 0.63 & 0.02 & 0.64 & 0.69 & 0.65 \\ 
        43 & Kumamoto & 30.69 & 8.46 & [14, 60] & 0.64 & 0.3 & 0.76 & 0.64 & 0.08 \\ 
        44 & Oita & 24.4 & 9.64 & [9, 69] & 1.5 & 3.08 & 0.22 & 0.72 & 0.14 \\ 
        45 & Miyazaki & 19.09 & 8.2 & [4, 50] & 1.05 & 0.91 & 0.21 & 0.66 & 6.54E-07 \\ 
        46 & Kagoshima & 34.04 & 11.7 & [10, 65] & 0.58 & -0.26 & 0.37 & 0.74 & 0.45 \\ 
        47 & Okinawa & 24.51 & 6.86 & [9, 45] & 0.35 & -0.28 & 0 & 0.63 & 0.31 \\ \toprule
    \end{tabular}
    \end{adjustbox}
\end{table}

\begin{table}[!ht]
    \centering
        \caption{Global characteristics of China's TB dataset.}
\label{table:China_Characteristics}%
    \begin{adjustbox}{width=\linewidth}
    \begin{tabular}{llllllllll}
    \toprule
        Node No. & Node Name & Mean & STD & Range & Skewness & Kurtosis & \makecell{ADF\\$\mathrm{p}$-value} & \makecell{Hurst\\Exponent} & \makecell{Teräsvirta\\$\mathrm{p}$-value} \\ \midrule

        1 & Beijing  & 576.68 & 70.15 & [397, 706] & -0.29 & -0.61 & 0.62 & 0.56 & 0.00 \\ 
        2 & Tianjin  & 266 & 38.01 & [171, 344] & -0.4 & 0.1 & 0.63 & 0.58 & 0.59 \\ 
        3 & Hebei & 2731.63 & 341.66 & [1755, 3369] & -0.42 & -0.06 & 0.99 & 0.6 & 0.00 \\ 
        4 & Shanxi & 1218.1 & 193.28 & [854, 1625] & 0 & -0.63 & 0.78 & 0.64 & 0.00 \\ 
        5 & Inner Mongolia & 1040.45 & 168.35 & [733, 1433] & 0.47 & -0.4 & 0.97 & 0.76 & 0.00 \\ 
        6 & Liaoning  & 1986.65 & 258.02 & [1357, 2584] & 0.12 & -0.41 & 0.63 & 0.69 & 0.00 \\ 
        7 & Jilin  & 1209.37 & 231.38 & [737, 1732] & 0.3 & -0.49 & 0.68 & 0.65 & 0.00 \\ 
        8 & Heilongjiang  & 2507.3 & 488.15 & [1489, 3598] & 0.05 & -0.62 & 1 & 0.56 & 0.00 \\ 
        9 & Shanghai  & 547.65 & 65.84 & [388, 657] & -0.3 & -0.8 & 0.66 & 0.55 & 0.02 \\ 
        10 & Jiangsu  & 2418.22 & 371.02 & [1609, 3193] & -0.18 & -0.43 & 0.96 & 0.57 & 0.00 \\ 
        11 & Zhejiang  & 2281.55 & 301.3 & [1731, 2825] & -0.11 & -1.11 & 0.49 & 0.57 & 0.00 \\ 
        12 & Anhui  & 2893.8 & 353.49 & [2094, 3721] & 0.16 & -0.5 & 0.98 & 0.63 & 0.00 \\ 
        13 & Fujian  & 1425.73 & 177.96 & [1046, 1807] & -0.14 & -0.62 & 0.82 & 0.63 & 0.00 \\ 
        14 & Jiangxi  & 3101.47 & 2882.15 & [1976, 24882] & 7.37 & 53.22 & 0 & 0.55 & 0.51 \\ 
        15 & Shandong  & 2588.25 & 459.76 & [1670, 4059] & 0.73 & 0.95 & 0.75 & 0.57 & 0.00 \\ 
        16 & Henan  & 4780.77 & 522.39 & [3242, 5769] & -0.34 & 0.25 & 0.84 & 0.62 & 0.00 \\ 
        17 & Hubei  & 3549.92 & 497.49 & [2518, 4550] & -0.14 & -0.7 & 1 & 0.6 & 0.00 \\ 
        18 & Hunan  & 4507.5 & 566.54 & [3443, 5801] & 0.22 & -0.77 & 0.69 & 0.63 & 0.00 \\ 
        19 & Guangdong  & 6556.75 & 795.79 & [4484, 7976] & -0.44 & -0.54 & 0.97 & 0.61 & 0.00 \\ 
        20 & Guangxi  & 3626.18 & 556.43 & [2623, 5086] & 0.35 & -0.37 & 0.82 & 0.59 & 0.00 \\ 
        21 & Hainan & 693.97 & 84.28 & [485, 857] & -0.34 & -0.42 & 0 & 0.58 & 0.30 \\ 
        22 & Chongqing  & 1889.9 & 289.03 & [1328, 2645] & 0.32 & 0.03 & 0.81 & 0.63 & 0.00 \\ 
        23 & Sichuan  & 4412.45 & 571.64 & [3307, 6044] & 0.36 & -0.02 & 1 & 0.6 & 0.00 \\ 
        24 & Guizhou  & 3708.45 & 545.25 & [2530, 5251] & 0.32 & 0.43 & 1 & 0.55 & 0.00 \\ 
        25 & Yunnan  & 2237.87 & 297.6 & [1616, 3004] & 0.29 & 0.57 & 0.99 & 0.59 & 0.00 \\ 
        26 & Tibet & 413.55 & 85.75 & [176, 593] & -0.44 & 0.3 & 0.98 & 0.67 & 0.00 \\ 
        27 & Shaanxi & 1821.4 & 242.03 & [1462, 2855] & 1.38 & 3.68 & 0.72 & 0.62 & 0.06 \\ 
        28 & Gansu  & 1181.68 & 282.9 & [622, 2039] & 0.37 & 0.24 & 1 & 0.74 & 0.00 \\ 
        29 & Qinghai  & 616.07 & 116.13 & [375, 964] & 0.35 & -0.01 & 0.1 & 0.71 & 0.06 \\ 
        30 & Ningxia  & 236.67 & 49.81 & [144, 394] & 0.74 & 0.97 & 0.62 & 0.73 & 0.04 \\ 
        31 & Xinjiang  & 4153.5 & 1370.1 & [2212, 8628] & 1.46 & 1.87 & 1 & 0.69 & 0.43 \\ \toprule
    \end{tabular}
    \end{adjustbox}
\end{table}

\begin{table}[!ht]
\caption{Global characteristics of USA's ILI dataset.}
\label{table:USA_Characteristics}%
\begin{adjustbox}{width=\linewidth}
\begin{tabular}{llllllllll}
\toprule
Node No. & Node Name & Mean & STD & Range & Skewness & Kurtosis & \makecell{ADF\\$\mathrm{p}$-value} & \makecell{Hurst\\Exponent} & \makecell{Teräsvirta\\$\mathrm{p}$-value} \\ \midrule
        1 & Alabama & 336.61 & 347.18 & [13, 1938] & 2.13 & 4.93 & 0 & 0.76 & 0.01 \\ 
        2 & Alaska & 18.16 & 18.87 & [0, 116] & 2.11 & 5.28 & 0 & 0.72 & 0.04 \\ 
        3 & Arizona & 304.68 & 225.1 & [56, 1297] & 1.6 & 2.93 & 0 & 0.77 & 0.02 \\ 
        4 & Arkansas & 65.26 & 87.15 & [0, 525] & 2.13 & 4.83 & 0 & 0.72 & 2.52E-09 \\ 
        5 & California & 833.96 & 461.23 & [91, 2452] & 1.07 & 0.8 & 0 & 0.72 & 0.20 \\ 
        6 & Colorado & 123.79 & 145.14 & [0, 891] & 2.29 & 5.86 & 0 & 0.78 & 0.02 \\ 
        7 & Connecticut & 42.34 & 53.39 & [0, 272] & 1.85 & 3.59 & 0 & 0 & 0.29 \\ 
        8 & Delaware & 15.71 & 27.57 & [0, 157] & 2.69 & 7.77 & 0 & 0 & 0.33 \\ 
        9 & Florida & 284.18 & 194.97 & [9, 1165] & 1.29 & 2.34 & 0 & 0.77 & 0.46 \\ 
        10 & Georgia & 557.41 & 468.69 & [1, 2489] & 1.5 & 2.17 & 0.05 & 0.82 & 0.02 \\ 
        11 & Hawaii & 87.91 & 100.43 & [2, 603] & 2.05 & 4.77 & 0 & 0.88 & 0.00 \\ 
        12 & Idaho & 54.03 & 59.9 & [0, 301] & 1.73 & 3.1 & 0 & 0 & 0.42 \\ 
        13 & Illinois & 901.93 & 511.59 & [244, 3569] & 1.5 & 3.69 & 0 & 0.71 & 0.00 \\ 
        14 & Indiana & 109.56 & 135.33 & [5, 902] & 3.12 & 11.77 & 0 & 0.74 & 0.11 \\ 
        15 & Iowa & 33.11 & 45.32 & [0, 302] & 2.41 & 7.89 & 0 & 0 & 0.08 \\ 
        16 & Kansas & 94.88 & 117.49 & [0, 555] & 1.87 & 2.97 & 0 & 0.74 & 0.01 \\ 
        17 & Kentucky & 96.84 & 133.63 & [0, 746] & 2.21 & 5.45 & 0 & 0.73 & 0.00 \\ 
        18 & Louisiana & 810.6 & 620.94 & [35, 3473] & 1.61 & 2.94 & 0 & 0.8 & 0.01 \\ 
        19 & Maine & 45.82 & 32.63 & [2, 213] & 2.12 & 6 & 0 & 0.79 & 0.00 \\ 
        20 & Maryland & 125.27 & 77.21 & [4, 467] & 1.4 & 2.52 & 0 & 0.74 & 0.00 \\ 
        21 & Massachusetts & 263.47 & 152.1 & [22, 961] & 1.49 & 2.77 & 0 & 0.77 & 1.69E-05 \\ 
        22 & Michigan & 173.43 & 142.71 & [10, 720] & 1.48 & 2.07 & 0 & 0.78 & 0.10 \\ 
        23 & Minnesota & 61.06 & 47.31 & [2, 302] & 1.55 & 3.2 & 0 & 0.76 & 0.69 \\ 
        24 & Mississippi & 559.02 & 354.32 & [104, 2359] & 1.76 & 3.92 & 0 & 0.79 & 0.00 \\ 
        25 & Missouri & 106.36 & 111.77 & [0, 528] & 1.37 & 1.49 & 0 & 0.72 & 0.05 \\ 
        26 & Montana & 6.69 & 10.78 & [0, 71] & 2.67 & 8.43 & 0 & 0 & 0.12 \\ 
        27 & Nebraska & 46.29 & 63.62 & [1, 454] & 3.18 & 13.1 & 0 & 0.79 & 0.79 \\ 
        28 & Nevada & 160.62 & 163.92 & [6, 784] & 1.78 & 2.84 & 0 & 0.79 & 0.67 \\ 
        29 & New Hampshire & 13.37 & 18.4 & [0, 149] & 3.18 & 13.92 & 0 & 0.75 & 0.00 \\ 
        30 & New Jersey & 138.89 & 136.78 & [1, 894] & 2.1 & 6.9 & 0.1 & 0.86 & 0.07 \\ 
        31 & New Mexico & 190.13 & 140.28 & [8, 699] & 1.55 & 2.04 & 0 & 0.72 & 0.05 \\ 
        32 & New York & 163.67 & 164.81 & [0, 744] & 1.41 & 1.36 & 0 & 0.82 & 0.01 \\ 
        33 & North Carolina & 194.97 & 230.75 & [0, 1406] & 2.72 & 9.11 & 0 & 0.73 & 0.03 \\ 
        34 & North Dakota & 17.51 & 24.58 & [0, 170] & 2.58 & 9.21 & 0 & 0.79 & 2.29E-06 \\ 
        35 & Ohio & 116.13 & 120.59 & [1, 835] & 2.27 & 6.64 & 0 & 0.73 & 1.21E-07 \\ 
        36 & Oklahoma & 88.88 & 96.08 & [0, 511] & 1.45 & 1.86 & 0 & 0.72 & 2.45E-05 \\ 
        37 & Oregon & 31.74 & 35.21 & [0, 219] & 1.96 & 4.67 & 0 & 0.74 & 0.00 \\ 
        38 & Pennsylvania & 208.09 & 158.48 & [40, 893] & 1.85 & 3.68 & 0 & 0.77 & 4.89E-06 \\ 
        39 & Rhode Island & 28.74 & 43.1 & [0, 286] & 2.87 & 9.79 & 0 & 0.73 & 1.70E-05 \\ 
        40 & South Carolina & 49.02 & 80.72 & [0, 726] & 4.28 & 25.24 & 0 & 0.73 & 0.01 \\ 
        41 & South Dakota & 65.56 & 53.84 & [0, 275] & 1.21 & 1.38 & 0 & 0.77 & 0.01 \\ 
        42 & Tennessee & 123.9 & 147.2 & [0, 841] & 2.08 & 5.02 & 0 & 0.75 & 0.01 \\ 
        43 & Texas & 1068.6 & 756.48 & [189, 4821] & 1.97 & 4.51 & 0 & 0.78 & 5.17E-07 \\ 
        44 & Utah & 303.95 & 233.37 & [0, 1282] & 1.47 & 2.6 & 0 & 0.8 & 0.21 \\ 
        45 & Vermont & 48.3 & 41.75 & [0, 259] & 1.96 & 4.99 & 0 & 0.79 & 0.07 \\ 
        46 & Virginia & 1428.67 & 1159.25 & [155, 9716] & 2.97 & 13.43 & 0 & 0.71 & 4.51E-06 \\ 
        47 & Washington & 38.9 & 38.63 & [0, 231] & 1.76 & 3.57 & 0 & 0.75 & 0.18 \\ 
        48 & West Virginia & 165.21 & 190.98 & [5, 1116] & 2.49 & 7.19 & 0 & 0.75 & 0.10 \\ 
        49 & Wisconsin & 58.05 & 62.59 & [0, 428] & 1.67 & 3.96 & 0 & 0.81 & 0.00 \\ 
        50 & Wyoming & 40.34 & 52.5 & [0, 336] & 2.41 & 6.97 & 0 & 0.76 & 1.08E-08 \\ \toprule
    \end{tabular}
    \end{adjustbox}
\end{table}

\begin{table}[!ht]
\caption{Global characteristics of Belgium's COVID-19 dataset.\label{table:Belgium_Characteristics}}%
\begin{adjustbox}{width=\textwidth}
    \centering
    \begin{tabular}{llllllllll}
    \toprule
        Node No. & Node Name & Mean & STD & Range & Skewness & Kurtosis & \makecell{ADF\\$\mathrm{p}$-value} & \makecell{Hurst\\Exponent} & \makecell{Teräsvirta\\$\mathrm{p}$-value} \\ \midrule
        1 & Antwerpen & 877.6 & 1391.49 & [9, 11873] & 3.67 & 17.68 & 0.01 & 0.86 & 0.19 \\ 
        2 & BrabantWallon & 191.16 & 334.6 & [0, 3004] & 3.93 & 19.73 & 0.01 & 0 & 0.78 \\ 
        3 & Brussels & 615.33 & 925.16 & [0, 6839] & 3.88 & 17.54 & 0 & 0.87 & 0.81 \\ 
        4 & Hainaut & 581.7 & 1040.44 & [0, 8706] & 4.09 & 20.69 & 0 & 0.85 & 0.14 \\ 
        5 & Limburg & 395.23 & 689.45 & [0, 5711] & 3.78 & 18.05 & 0.03 & 0.86 & 0.04 \\ 
        6 & Liege & 451.17 & 846.35 & [0, 6996] & 3.68 & 16.24 & 0 & 0.85 & 0.00 \\ 
        7 & Luxembourg & 118.84 & 235.89 & [0, 1958] & 4.42 & 24.11 & 0 & 0 & 0.44 \\ 
        8 & Namur & 219.06 & 388.4 & [0, 3198] & 3.97 & 19.92 & 0 & 0 & 0.02 \\ 
        9 & OostVlaanderen & 744.99 & 1222.12 & [0, 11526] & 3.94 & 20.83 & 0.01 & 0.86 & 0.34 \\ 
        10 & VlaamsBrabant & 515.82 & 847.77 & [0, 6931] & 3.53 & 16.31 & 0.01 & 0.86 & 0.72 \\ 
        11 & WestVlaanderen & 564.27 & 993.44 & [0, 8858] & 4.22 & 23.7 & 0 & 0.86 & 0.21 \\ \toprule
    \end{tabular}
    \end{adjustbox}
\end{table}

\begin{table}[!ht]
\caption{Global characteristics of Colombia's dengue dataset.\label{table:Colombia_Characteristics}}%
\begin{adjustbox}{width=\linewidth}
\begin{tabular}{llllllllll}
\toprule
 Node No. & Node Name & Mean & STD & Range & Skewness & Kurtosis & \makecell{ADF\\$\mathrm{p}$-value} & \makecell{Hurst\\Exponent} & \makecell{Teräsvirta\\$\mathrm{p}$-value} \\ \midrule
        1 & Amazonas & 6.05 & 11.32 & [0, 126] & 5.86 & 45.88 & 0 & 0 & 0.02 \\ 
        2 & Antioquia & 119.1 & 193.49 & [1, 1335] & 3.51 & 13.37 & 0 & 0 & 0.12 \\ 
        3 & Arauca & 16.94 & 20.47 & [1, 186] & 3.49 & 15.98 & 0 & 0.85 & 0.00 \\ 
        4 & Atlantico & 80.23 & 89.61 & [2, 674] & 2.75 & 10.03 & 0 & 0.85 & 0.02 \\ 
        5 & Bogota & 4.49 & 2.79 & [0, 13] & -0.22 & -1.45 & 0.49 & 0 & 0.00 \\ 
        6 & Bolivar & 52.43 & 66.43 & [1, 520] & 3.16 & 14.09 & 0.07 & 0.89 & 0.55 \\ 
        7 & Boyaca & 11.44 & 13.32 & [1, 103] & 2.55 & 8.18 & 0.01 & 0.9 & 7.31E-13 \\ 
        8 & Caldas & 9.33 & 11.21 & [1, 93] & 2.79 & 11.63 & 0 & 0 & 7.25E-07 \\ 
        9 & Caqueta & 14.33 & 17.52 & [1, 121] & 3.04 & 11.56 & 0 & 0.83 & 0.93 \\ 
        10 & Casanare & 40.85 & 41.79 & [2, 308] & 2.64 & 9.49 & 0 & 0.82 & 0.01 \\ 
        11 & Cauca & 11.27 & 11.6 & [1, 96] & 3.06 & 13.01 & 0 & 0.86 & 2.01E-05 \\ 
        12 & Cesar & 51.74 & 47.37 & [1, 297] & 1.64 & 3.14 & 0 & 0.91 & 6.71E-05 \\ 
        13 & Choco & 5.7 & 6.64 & [0, 77] & 3.4 & 20.7 & 0 & 0 & 0.00 \\ 
        14 & Cordoba & 37.35 & 38.23 & [1, 166] & 1.46 & 1.22 & 0 & 0.9 & 8.12E-08 \\ 
        15 & Cundinamarca & 44.83 & 42.43 & [1, 286] & 1.73 & 3.31 & 0 & 0.89 & 1.31E-07 \\ 
        16 & Guainia & 2.63 & 3.24 & [0, 22] & 2.84 & 8.8 & 0 & 0 & 0.00 \\ 
        17 & Guajira & 19.32 & 20.02 & [1, 148] & 2.16 & 6.61 & 0.48 & 0 & 0.14 \\ 
        18 & Guaviare & 6.96 & 7.75 & [1, 53] & 2.22 & 5.52 & 0 & 0 & 7.69E-08 \\ 
        19 & Huila & 81.48 & 72.77 & [6, 412] & 1.74 & 3.2 & 0 & 0.89 & 0.01 \\ 
        20 & Magdalena & 26.05 & 26.05 & [1, 176] & 1.95 & 5.13 & 0 & 0.88 & 0.18 \\ 
        21 & Meta & 95.46 & 91.67 & [10, 639] & 2.69 & 10.17 & 0 & 0.9 & 0.86 \\ 
        22 & Narino & 6.68 & 5.8 & [0, 41] & 1.42 & 3.06 & 0.03 & 0 & 1.50E-09 \\ 
        23 & Norte Santander & 84.52 & 56.88 & [5, 300] & 1.08 & 0.92 & 0.01 & 0.9 & 0.06 \\ 
        24 & Putumayo & 19.34 & 17.94 & [1, 131] & 2.09 & 5.86 & 0 & 0.84 & 0.09 \\ 
        25 & Quindio & 34.77 & 62.56 & [1, 598] & 5.55 & 38.19 & 0 & 0.91 & 0.33 \\ 
        26 & Risaralda & 26.42 & 73.56 & [1, 684] & 5.68 & 35.44 & 0 & 0.9 & 0.00 \\ 
        27 & San andres & 1.95 & 2.24 & [0, 30] & 5.45 & 44.18 & 0 & 0 & 0.00 \\ 
        28 & Santander & 142.65 & 130.42 & [8, 756] & 1.65 & 3.33 & 0.01 & 0.91 & 0.56 \\ 
        29 & Sucre & 42.74 & 42.8 & [1, 251] & 2.1 & 4.86 & 0 & 0.88 & 0.00 \\ 
        30 & Tolima & 112.87 & 100.95 & [14, 518] & 1.56 & 1.87 & 0.01 & 0.9 & 3.25E-05 \\ 
        31 & Valle & 210.16 & 235.2 & [1, 1460] & 2.17 & 5.64 & 0 & 0.87 & 0.00 \\ 
        32 & Vaupes & 1.38 & 1.66 & [0, 19] & 6.47 & 52.64 & 0.01 & 0 & 0.00 \\ 
        33 & Vichada & 3.46 & 4.43 & [0, 43] & 4.14 & 23.26 & 0 & 0 & 0.44 \\ \toprule
    \end{tabular}
    \end{adjustbox}
\end{table}

\begin{table}[!ht]
\caption{Global characteristics of Hungary's chickenpox dataset.\label{table:Hungary_Characteristics}}%
\begin{adjustbox}{width=\linewidth}
\begin{tabular}{llllllllll}
\toprule
Node No. & Node Name & Mean & STD & Range & Skewness & Kurtosis & \makecell{ADF\\$\mathrm{p}$-value} & \makecell{Hurst\\Exponent} & \makecell{Teräsvirta\\$\mathrm{p}$-value} \\ \midrule
        1 & Bács & 37.17 & 36.84 & [0, 274] & 1.79 & 5.35 & 0 & 0.76 & 4.28E-13 \\ 
        2 & Baranya & 34.2 & 32.57 & [0, 194] & 1.37 & 2.25 & 0 & 0.77 & 6.86E-13 \\ 
        3 & Békés & 28.91 & 37.62 & [0, 271] & 2.48 & 8.12 & 0 & 0.83 & 0.95 \\ 
        4 & Borsod & 57.08 & 50.73 & [0, 355] & 1.39 & 3.08 & 0 & 0.74 & 4.81E-12 \\ 
        5 & Budapest & 101.25 & 76.35 & [0, 479] & 0.95 & 1.4 & 0 & 0.74 & 2.07E-14 \\ 
        6 & Csongrád & 31.49 & 33.79 & [0, 199] & 1.66 & 3.11 & 0 & 0.75 & 0.19 \\ 
        7 & Fejér & 33.27 & 31.4 & [0, 164] & 1.11 & 0.71 & 0 & 0.82 & 6.36E-11 \\ 
        8 & Győr & 41.44 & 36.01 & [0, 181] & 1.02 & 0.74 & 0 & 0.78 & 1.04E-08 \\ 
        9 & Hajdú & 47.1 & 44.61 & [0, 262] & 1.4 & 2.31 & 0 & 0.77 & 3.12E-08 \\ 
        10 & Heves & 29.69 & 31.86 & [0, 210] & 1.94 & 4.88 & 0 & 0.81 & 0.03 \\ 
        11 & Jasz & 40.87 & 37.28 & [0, 224] & 1.12 & 1.13 & 0 & 0.8 & 5.26E-07 \\ 
        12 & Komárom & 25.64 & 24.47 & [0, 160] & 1.4 & 2.75 & 0 & 0.8 & 1.78E-08 \\ 
        13 & Nógrád & 21.85 & 22.03 & [0, 112] & 1.4 & 2 & 0 & 0.83 & 2.87E-07 \\ 
        14 & Pest & 86.1 & 66.77 & [0, 431] & 0.87 & 1.13 & 0 & 0.76 & 3.33E-16 \\ 
        15 & Somogy & 27.61 & 26.72 & [0, 155] & 1.41 & 2.23 & 0 & 0.74 & 6.19E-06 \\ 
        16 & Szabolcs & 29.85 & 31.81 & [0, 203] & 1.89 & 5.26 & 0 & 0.74 & 3.84E-07 \\ 
        17 & Tolna & 20.35 & 23.27 & [0, 131] & 1.71 & 2.96 & 0 & 0.82 & 0.01 \\ 
        18 & Vas  & 22.47 & 25.01 & [0, 141] & 1.67 & 3.12 & 0 & 0.82 & 0.00 \\ 
        19 & Veszprém & 40.64 & 40.7 & [0, 230] & 1.57 & 3.03 & 0 & 0.81 & 0.01 \\ 
        20 & Zala & 19.87 & 22 & [0, 216] & 2.37 & 12.34 & 0 & 0.79 & 2.41E-13 \\ \toprule
    \end{tabular}
    \end{adjustbox}
\end{table}
We present the tables summarizing the global temporal properties for the datasets used in our experimental setup in Tables \ref{table:Japan_Characteristics}-\ref{table:Hungary_Characteristics}. Our characterization included standard descriptive statistics (mean, standard deviation, skewness, and kurtosis) for each nodal time series. We also assessed three global characteristics for each node: stationarity, long-range dependence, and nonlinearity. Stationarity, a key assumption for classical forecasting models like ARMA, was evaluated using the Augmented Dickey-Fuller (ADF) test \citep{dickey1979distribution} at a significance level of $0.05$; a time series is considered non-stationary for a $\mathrm{p}$-value $> 0.05$. Long-range dependence, which reflects the self-similarity of a time series and is crucial for probabilistic modeling, was quantified using the Hurst exponent. Values below 0.5 indicate mean-reverting (anti-persistent) behavior, 0.5 suggests a random walk (Brownian motion), and values above 0.5 indicate a trending (persistent) series. Finally, to assess the presence of nonlinear dynamics at each node, Teräsvirta's neural network test \citep{terasvirta1993power} at a significance level of $0.05$ was employed. Under this framework, the null hypothesis postulates linearity; therefore, a $p$-value $< 0.05$ signifies a statistically significant departure from linearity, confirming the nonlinear nature of the time series.

The analysis revealed distinct behaviors across the datasets. The tuberculosis time series for both Japan and China were found to be predominantly non-stationary while uniformly exhibiting long-range dependence. Specifically, all prefectures in Japan (except Yamagata, Kyoto, and Okinawa) and all provinces in China (except Jiangxi and Hainan) were non-stationary. Japan's TB data typically exhibits a decreasing trend (and hence, linearity in many nodes), reflecting Japan's effective and aggressive combat against the disease\footnote{https://www.mofa.go.jp/announce/announce/2008/7/1182066\_1030.html}. In contrast, the majority of nodes in the remaining datasets were stationary. All states in the USA ILI dataset were stationary, with the exception of New Jersey, and most exhibited long-range dependence. Similarly, all provinces in Belgium's COVID-19 data and all counties in Hungary's chickenpox data were stationary and showed strong long-range dependence, with only a few locations in Belgium (Brabant Wallon, Luxembourg, and Namur) identified as mean-reverting series.

\section{Baseline Models} \label{appendix:Models}
In this section, we briefly describe the baseline models used to compare the model performances on real datasets. 

\begin{enumerate}[leftmargin=*]

\item \textbf{LSTM:} The Long Short-Term Memory (LSTM) network \citep{hochreiter1997long} is a specialized architecture of RNNs designed to overcome the vanishing gradient problem, thereby enabling the modeling of long-range temporal dependencies. The memory cell state in LSTM is regulated by three distinct gating mechanisms: the forget gate, which determines what information to discard from the previous state; the input gate, which controls the integration of new information; and the output gate, which modulates the information passed to the next hidden state. Although LSTMs are popular for modeling long-range dependencies, \cite{zhao2020rnn} show that they are short memory processes under mild assumptions.

\item \textbf{NHiTS:} The Neural Hierarchical Interpolation for Time Series (NHiTS) \citep{challu2023nhits} is an advanced neural forecasting architecture designed to address the challenges of long-horizon forecasting by significantly reducing computational complexity and mitigating the effects of high-frequency noise. Building upon the basis-expansion principles of NBeats \citep{oreshkin2019n}, NHiTS introduces a hierarchical multi-rate sampling approach that employs distinct interpolation layers at various scales. This architecture decomposes the signal by processing different frequency components across its blocks, utilizing subsampling and subsequent interpolation to reconstruct the forecast. 

\item \textbf{Transformers:} The Transformer architecture, originally introduced by \cite{vaswani2017attention} in the context of natural language processing, has revolutionized time series forecasting by replacing recurrent structures with self-attention mechanisms. Unlike sequential models, Transformers process entire sequences in parallel, utilizing multi-head attention to capture dependencies between time steps regardless of their temporal distance. This allows the model to assign dynamic weights to past observations based on their relevance to the current forecast horizon. To preserve temporal order, the architecture incorporates positional encodings into the input embeddings. 

\item \textbf{TCN:} The Temporal Convolutional Network (TCN) \citep{bai2018empirical} is a specialized 1D convolutional architecture designed for sequence modeling that serves as an alternative to RNNs. TCNs are defined by two primary characteristics: the use of causal convolutions, which ensure no information leakage from the future to the past, and dilated convolutions, which allow the network's receptive field to grow exponentially with its depth. This enables the model to capture long-range temporal dependencies with significantly fewer layers than standard convolutions. Moreover, the architecture incorporates residual blocks to facilitate the training of deeper networks and mitigate the vanishing gradient problem. TCNs benefit from massive parallelism during training, as convolutions can be computed across the entire sequence simultaneously, making them highly efficient for large-scale temporal forecasting tasks.

\item \textbf{STARMA:}
The Space-Time Autoregressive Moving Average (STARMA) \citep{pfeifer1980three} model is a specialized class of spatiotemporal models designed to capture systematic dependencies in data observed across both space and time. It extends the univariate Box-Jenkins ARIMA framework by representing a single variable as a linear combination of its past observations and errors, lagged in both dimensions. STARMA incorporates $N \times N$ spatial weight matrices, which define the hierarchical relationship between different locations (such as countries or states) to account for spatial correlation and heterogeneity.

\item \textbf{GSTAR:}
The Generalized Space-Time Autoregressive (GSTAR) model \citep{ruchjana2012least} is a modeling framework designed to capture spatiotemporal dependencies in multivariate time series. As an extension of the standard STAR model, GSTAR relaxes the assumption of spatial homogeneity by allowing the autoregressive parameters to vary across different locations. This flexibility makes it particularly effective for modeling phenomena where spatial units exhibit heterogeneous characteristics. The model utilizes location-specific weights (represented through weights matrices $W$) to account for the proximity and influence of neighboring sites, combining temporal lags and spatial lags into a unified linear system.

\item \textbf{STGCN:}
The Spatio-Temporal Graph Convolutional Network (STGCN), proposed by \cite{yu2018spatio}, is a novel deep learning framework designed to address the challenges of time series prediction in the traffic domain by modeling multi-scale traffic networks. Unlike traditional approaches that rely on regular convolutional or recurrent units, STGCN is built with a complete convolutional structure on graphs. The architecture consists of several spatio-temporal convolutional blocks, each featuring a sandwich structure where two gated temporal convolution layers flank a spatial graph convolution layer. 

\item \textbf{DeepAR:} 
DeepAR \citep{salinas2020deepar} is a probabilistic forecasting methodology that utilizes RNNs to learn a global model from a large collection of related time series. Unlike classical methods that fit parameters to individual series, DeepAR shares information across the dataset, allowing it to capture complex seasonal patterns and provide forecasts for items with little historical data. The architecture employs an encoder-decoder framework where the network's output parametrizes a chosen likelihood function, such as Gaussian for real-valued data or negative binomial for count data.

\item \textbf{Probabilistic iTransformer:}
The Inverted Transformer (iTransformer) \citep{liu2023itransformer, alexandrov2020gluonts} architecture applies Transformers to multivariate time series forecasting by inverting the standard data modality. Unlike vanilla Transformers that embed multiple variates of a single timestamp into temporal tokens, iTransformer embeds the entire historical sequence of each individual variate into its own variate token. This inverted approach allows the self-attention mechanism to explicitly capture multivariate correlations between variate tokens, while the shared feed-forward network (FFN) learns nonlinear, global temporal representations for each series independently.

\item \textbf{GpGp:}
The Fast Gaussian Process (GpGp) method \citep{guinness2018permutation}, represents a sophisticated refinement of Vecchia’s approximation \citep{vecchia1988estimation}, specifically optimized for the analysis of large-scale spatiotemporal datasets. This approach streamlines the computational complexity of traditional Gaussian Process regression by introducing a grouping mechanism for ordered sequences, which allows for highly efficient likelihood evaluation and parameter estimation. A defining advantage of the GpGp framework is its interpolative nature; as a continuous spatial predictor, it can generalize beyond the observed data points to forecast values for specific coordinates (longitude and latitude) that were not originally present in the training set.

\item \textbf{DiffSTG:}
DiffSTG \citep{wen2023diffstg} is a non-autoregressive probabilistic forecasting framework that generalizes denoising diffusion probabilistic models (DDPMs) to spatio-temporal graphs. It utilizes a dedicated denoising network, UGnet, which integrates a U-Net-based structure for multi-scale temporal modeling with GNNs to capture complex spatial correlations. By implementing a conditional reverse diffusion process, the model learns to reconstruct future graph signals from Gaussian noise, conditioned on historical observations and the underlying graph structure.

\item \textbf{Spatiotemporal Echo-State Networks (STESNs):}
STESN \citep{huang2022forecasting} captures nonlinear spatiotemporal dynamics by mapping inputs to high-dimensional reservoir states. These states evolve via the fixed nonlinear dynamics, where the weight matrices are sparse, random, and fixed. The output is formulated with quadratic interactions to model complex physical processes. Unlike standard RNNs, only the output matrices are trained using a closed-form ridge regression estimator. To ensure computational tractability over large domains, the model is applied to a reduced set of $n^*$ spatial knots and then reconstructed via kriging. Uncertainty quantification is done via empirical calibration by estimating specific quantiles.

\end{enumerate}

\subsection{Implementation Details} \label{appendix:implementation_details}
We implemented the NHiTS, Transformers, TCN, and DeepAR models using the Darts library \citep{herzen2022darts} in Python. The STARMA, GSTAR, and GpGp were built using the \texttt{starma}, \texttt{gstar}, and \texttt{GpGp} packages respectively, in the R statistical software. For STGCN, we utilized the PyTorch implementation in PyTorch Geometric Temporal\footnote{https://github.com/benedekrozemberczki/pytorch\_geometric\_temporal} \citep{rozemberczki2021pytorch}. The probabilistic iTransformer was implemented using GluonTS \citep{alexandrov2020gluonts} with a PyTorch backend, and the default implementations were used for DiffSTG\footnote{https://github.com/wenhaomin/DiffSTG} and STESN\footnote{https://github.com/hhuang90/KSA-wind-forecast}. The multivariate LSTM and the proposed MVEN, GCEN, and STEN models were implemented using the PyTorch framework in Python, offering seamless training and inference on GPUs. 

The proposed and baseline deep learning models were trained on an NVIDIA P100 GPU with 16 GB RAM. For GSTAR, STARMA, and GpGp, we utilized an Intel(R) Xeon(R) CPU with 4 cores and 30 GB RAM. The default STESN implementation is based on NumPy and SciPy, and does not natively support training and inference on GPUs, so we used an Intel(R) Xeon(R) CPU with 20 cores and 126 GB RAM. The time taken by the models for both training and inference (in the evaluation setup of 50 independent runs) is presented in Table \ref{table:time}. Time consumption mainly depends on the training device (whether GPU or CPU), and hyperparameter configuration of the models, such as the number of hidden layers and size of the embedding dimension. The STESN model was the most time-consuming as compared to others, in our evaluation setup. The predictive uncertainty profile was assessed directly from the raw STESN forecast ensembles, bypassing the empirical calibration procedure suggested in the original framework. Furthermore, as the target forecast domain was restricted to the spatial points within the established training grid, all nodes were used in fitting the ESN. The spatial kriging step, typically utilized for interpolation at unobserved locations, was omitted. For DiffSTG, we utilized the same adjacency matrices that were used in training GCEN.

\begin{table}[!ht]
    \centering
    \caption{Total time consumed (in seconds) to train the models and evaluate them across 50 independent evaluation runs. The deterministic models (LSTM-STGCN) are evaluated in a single run only.}
    \begin{adjustbox}{width=\textwidth}
    \begin{tabular}{ccccccccccccccccc}
    \toprule
        Dataset & Horizon & LSTM & NHiTS & Transformers & TCN & GSTAR & STARMA & STGCN & DeepAR & Prob-iTransformer & GpGp & DiffSTG & STESN & MVEN & GCEN & STEN \\ \midrule
        Japan TB & 6-month & 4.59 & 6.64 & 11.4 & 3.39 & 2.46 & 0.16 & 300 & 6.53 & 111.82 & 1019.35 & 5266.52 & 9026.05 & 15.77 & 15.4 & 6.62 \\ 
        ~ & 12-month & 1.68 & 6.08 & 8.93 & 3.01 & 2.44 & 0.15 & 673 & 7.18 & 116.97 & 1015.88 & 4776.41 & 9224.47 & 26.69 & 12.4 & 11.27 \\ 
        ~ & 24-month & 1.99 & 5.19 & 3.25 & 1.17 & 2.46 & 0.15 & 836 & 8.6 & 123.44 & 783.37 & 4814.27 & 461.9 & 7.96 & 18.3 & 38.88 \\ \midrule
        China TB & 6-month & 0.61 & 2.41 & 3.72 & 1.14 & 0.35 & 0.069 & 56 & 4.77 & 110.7 & 150.61 & 4740.22 & 26.14 & 2.77 & 5.54 & 6.18 \\ 
        ~ & 12-month & 0.42 & 1.42 & 3.16 & 1.2 & 0.35 & 0.07 & 50 & 5.25 & 114.01 & 148.7 & 4819.17 & 27.56 & 3.36 & 4.75 & 5.48 \\ \midrule
        USA ILI & 4-week & 8.17 & 9.54 & 14.3 & 4.68 & 16.61 & 0.21 & 1293 & 8.31 & 117.83 & 1972.37 & 5569.36 & 9689.09 & 20.94 & 37.07 & 51.22 \\ 
        ~ & 9-week & 5.97 & 8.91 & 14.4 & 4.66 & 15.31 & 0.21 & 1244 & 8.8 & 120.27 & 2030.99 & 5615.99 & 9694.36 & 13.31 & 26.9 & 11.57 \\ 
        ~ & 13-week & 22.7 & 8.8 & 14.5 & 4.63 & 15.37 & 0.21 & 1205 & 9.37 & 122.87 & 2105.11 & 5770.92 & 9744.84 & 13.46 & 12.25 & 21.81 \\ \midrule
        Belgium COVID-19 & 30-day & 7.69 & 19.2 & 27.8 & 11.4 & 2.78 & 0.04 & 7155 & 16.33 & 113.57 & 713.31 & 4930.86 & 10778.94 & 32 & 81 & 10.2 \\ 
        ~ & 60-day & 5.01 & 16.2 & 25.6 & 9.4 & 5.4 & 0.04 & 9468 & 17.43 & 117.28 & 692.75 & 4919.1 & 10869.83 & 15.1 & 150 & 18 \\ 
        ~ & 90-day & 41.8 & 14.2 & 24.5 & 8.4 & 8.7 & 0.044 & 10838 & 19.4 & 120.44 & 695.07 & 5037.91 & 10744.37 & 8.12 & 29.2 & 32.3 \\ \midrule
        Colombia Dengue & 4-week & 25 & 23.8 & 37.4 & 11.6 & 12.94 & 0.24 & 3473 & 15.24 & 127.55 & 3010.38 & 4975.49 & 11304.18 & 16.71 & 24.44 & 93.96 \\ 
        ~ & 9-week & 7.65 & 23.4 & 37.4 & 11.5 & 12.37 & 0.244 & 3354 & 15.5 & 124.95 & 3079.72 & 4897.05 & 11391.19 & 113.91 & 41.99 & 31.61 \\ 
        ~ & 13-week & 23 & 23.5 & 33.4 & 11.2 & 12.36 & 0.24 & 3320 & 15.82 & 126.8 & 2986.14 & 4953.34 & 11317.98 & 60.02 & 14.22 & 106.49 \\ \midrule
        Hungary Chickenpox & 4-week & 8.79 & 15.1 & 21 & 7.27 & 2.08 & 0.14 & 2143 & 12.63 & 125.09 & 1021.59 & 4648.87 & 10090.61 & 14.07 & 32.92 & 9.87 \\ 
        ~ & 9-week & 4.05 & 13.6 & 22.4 & 7.1 & 2.22 & 0.126 & 2048 & 10.86 & 108.2 & 1001.36 & 4614.07 & 10035.31 & 75.40 & 23.42 & 51.04 \\ 
        ~ & 13-week & 9.66 & 13.6 & 22.4 & 7.11 & 2.22 & 0.122 & 2049 & 11.22 & 110.74 & 985.69 & 4769.56 & 10022.03 & 10.60 & 13.25 & 13.12 \\ \toprule
    \end{tabular}
    \end{adjustbox}
    \label{table:time}
\end{table}

Hyperparameters for the proposed models were optimized over 100 trials using the Optuna framework \citep{akiba2019optuna} by minimizing the CRPS on the corresponding holdout validation sets. The validation set length for hyperparameter tuning was adjusted according to the forecast term and temporal granularity of the dataset. Specifically, the validation lengths for short, medium, and long-term forecast horizons were set to 60, 90, and 120 days for the daily data; 9, 13, and 26 weeks for the weekly data; and 12, 24, and 36 months for the monthly data, respectively. The hyperparameter search space encompassed the output dimensionality of the spatial module, bounded between 1 and 10; the hidden state dimensionality of the LSTM module, constrained between 2 and 100; the depth of the LSTM network, ranging from 1 to 5 layers; the LSTM dropout rate, sampled continuously from $[0.0, 0.5]$; the learning rate, sampled from $[10^{-5}, 0.1]$; and the decay parameter $\alpha$ and maximum spatial lags $L$ for STEN (see Sec. \ref{sec:STEN}) chosen from $\{1,2,3\}$. Following optimization, the best-performing hyperparameters were used to train the final models on the combined training and validation data prior to evaluation. The networks were trained via backpropagation using the energy score loss and the Adam optimizer \citep{kingma2014adam}. With the exception of STGCN, all architectures were trained for 100 epochs. The STGCN was trained for 50 epochs, due to its substantial time consumption.

To ensure a rigorous and transparent empirical evaluation, we adopted a standardized hyperparameter configuration protocol across all baseline models, thereby eliminating potential optimization bias (i.e., inadvertently under-tuning baselines to favor our proposed models). For the temporal models implemented via Darts framework (NHiTS, Transformers, TCN, and DeepAR), we utilized the native, library-specified default configurations\footnote{https://github.com/unit8co/darts/tree/master/darts/models/forecasting}. For Prob-iTransformer and the spatiotemporal architectures (STGCN \citep{yu2018spatio}, DiffSTG \citep{wen2023diffstg}, and STESN \citep{huang2022forecasting}), we strictly adhered to the hyperparameter settings established in their respective foundational papers, ensuring that each baseline was evaluated under its intended, expert-optimized conditions. Conversely, because a standard, universally accepted default configuration does not exist for the vanilla LSTM baseline, its hyperparameters were independently optimized using the validation sets, with the same search space as MVEN. Finally, to maintain strict experimental control and ensure a fair comparison, the input and output sequence windows were kept uniform across all evaluated architectures, including our proposed models.

\section{Evaluation Metrics} \label{appendix:Metrics}
This section provides a comprehensive overview of the performance measures employed to evaluate the proposed models. Our evaluation framework is designed to be nearly exhaustive, as it accounts for both point forecast accuracy and probabilistic calibration. Let $\left\{\widehat{Y}_t\right\}_{t=T+1}^{T+q}$ denote the predicted values for $\left\{Y_t\right\}_{t=T+1}^{T+q}$, $q$ be the forecast horizon, and $t=\{1,\ldots,T\}$ be the time indices for the observed (historical) data $\left\{Y_t\right\}_{t=1}^T$. 

\begin{enumerate}[leftmargin=*]
\item \textbf{SMAPE:}
The Symmetric Mean Absolute Percentage Error (SMAPE) adjusts for the asymmetry between over-forecasts and under-forecasts and is bounded between 0\% and 200\%.
$$SMAPE = \frac{100\%}{q} \sum_{t=T+1}^{T+q} \frac{|\widehat{Y}_t - Y_t|}{(|Y_t| + |\widehat{Y}_t|)/2}.$$

\item \textbf{MAE:}
Mean Absolute Error (MAE) is the average of the absolute differences between the forecasted values ($\widehat{Y}_t$) and the actual observations ($Y_t$).
$$MAE = \frac{1}{q} \sum_{t=T+1}^{T+q} \left|\widehat{Y}_t - Y_t\right|.$$

\item \textbf{RMSE:}
Root Mean Squared Error (RMSE) is the square root of the average of squared differences between prediction and observation. 
$$RMSE = \sqrt{\frac{1}{q} \sum_{t=T+1}^{T+q} \left(\widehat{Y}_t - Y_t\right)^2}.$$

\item \textbf{MASE:}
Mean Absolute Scaled Error (MASE) scales the MAE by the mean absolute error of a naive forecast produced on the training data.
$$MASE = \frac{\frac{1}{q}\sum_{t=T+1}^{T+q}\left|\widehat{Y}_t - Y_t\right|}{\frac{1}{T-1} \sum_{t=2}^{T} |Y_t - Y_{t-1}|}.$$

\item \textbf{RMSSE:}
Similar to MASE, the Root Mean Squared Scaled Error (RMSSE) scales the MSE of the forecast by the MSE of a naive baseline, subsequently taking the square root.
$$RMSSE = \sqrt{\frac{\frac{1}{q} \sum_{t=T+1}^{T+q} \left(\widehat{Y}_t - Y_t\right)^2}{\frac{1}{T-1} \sum_{t=2}^{T} (Y_t - Y_{t-1})^2}}.$$

\item \textbf{Pinball Loss:}
The Pinball loss \citep{gneiting2023model} is used to evaluate the accuracy of a quantile forecast. For a target quantile $\tau \in [0, 1]$, the loss is defined as:$$L_{\tau}(\mathbf{Y}, \widehat{\mathbf{Y}}) = \frac{1}{q} \sum_{t=T+1}^{T+q} \max\left\{\tau \left(Y_t - \widehat{Y}_t\right), \ (1-\tau)\left(Y_t - \widehat{Y}_t\right)\right\}.$$

\item \textbf{$\rho$-Risk:}
$\rho$-risk is a normalized version of the Pinball loss, often calculated across a range of quantiles to provide a summary of the distribution's coverage \citep{seeger2016bayesian, salinas2020deepar}. $$\rho\text{-risk} = \frac{2\sum_{t=T+1}^{T+q} \max\left\{\tau \left(Y_t - \widehat{Y}_t\right), \ (1-\tau)\left(Y_t - \widehat{Y}_t\right)\right\}}{\sum_{t=T+1}^{T+q}Y_t}$$

\item \textbf{CRPS:}
The Continuous Ranked Probability Score (CRPS) \citep{gneiting2007strictly} generalizes the MAE to the case of probabilistic forecasts. It measures the difference between the predicted cumulative distribution function and the empirical CDF of the observation.$$CRPS(F, y) = \frac{1}{q}\sum_{t=T+1}^{T+q} \int_{\mathbb{R}}\left(F_t(y) - \mathbb{I}_{y \geq Y_t}\right)^2 \ dy$$
where, $F_t$ denotes the predicted probability distribution at time $t$.

\item \textbf{Winkler Score:}
The Winkler score \citep{winkler1996scoring} is a strictly proper scoring rule used to assess the quality of predictive confidence intervals by jointly evaluating their width and coverage. For a nominal confidence level $1-\alpha$, given a prediction interval $[\ell_{\alpha,t}, u_{\alpha,t}]$ at time $t$ and the observed value $y_t$, the Winkler score $W_{\alpha,t}$ is mathematically defined as:
$$
W_{\alpha,t} = 
\begin{cases}
(u_{\alpha,t} - \ell_{\alpha,t}) + \frac{2}{\alpha} (\ell_{\alpha,t} - y_t), & \text{if } y_t < \ell_{\alpha,t} \\
(u_{\alpha,t} - \ell_{\alpha,t}), & \text{if } \ell_{\alpha,t} \le y_t \le u_{\alpha,t} \\
(u_{\alpha,t} - \ell_{\alpha,t}) + \frac{2}{\alpha} (y_t - u_{\alpha,t}), & \text{if } y_t > u_{\alpha,t}
\end{cases}
$$
This formulation assigns the Winkler score as the length of the interval when the true value lies inside it, rewarding narrow intervals. However, when the true value falls outside the interval, the score imposes an additional penalty proportional to the distance the observation lies beyond the interval bounds, scaled by $2/\alpha$. This penalization emphasizes the importance of coverage, discouraging intervals that fail to contain the true values. Consequently, lower Winkler scores indicate better prediction intervals that are simultaneously sharp and well-calibrated.

\item \textbf{Probability Integral Transform (PIT):}
For every spatiotemporal observation $y_{true}$ in the test set, we compute a PIT value relative to the model's predicted forecast ensemble. Given an ensemble of $M$ generated trajectories, the PIT value is calculated using a rank-based formulation defined as: $$PIT = \frac{1}{M+1} \sum_{i=1}^{M} \mathbb{I}(\widehat{y}^{(i)} < y_{true})$$where $\mathbb{I}(\cdot)$ is the indicator function and $\widehat{y}^{(i)}$ represents the $i$-th ensemble member. This formulation interprets the PIT value as the normalized rank of the ground truth relative to the ensemble distribution. 

\item \textbf{Empirical Coverage:}
Empirical coverage is the proportion of times the true values fall within the predicted confidence intervals:
$$\text{Coverage} = \frac{1}{n} \sum_{i=1}^n \mathbb{I}(y_i \in CI_i)$$
where, $\mathbb{I}$ denotes the indicator function, $y_i$ is the true value, and $CI_i$ is the confidence interval for prediction $i$. For a $95\%$ confidence interval, the best coverage score is $0.95$. This metric is often misleading, because a high coverage score, potentially reaching 1.00, can be artificially attained by generating excessively wide intervals (see Table \ref{table:empirical_coverage}). Fig. \ref{fig:Forecasts_others} illustrates this, where GpGp produces extremely wide intervals and attains 100\% coverage on the Belgium dataset. Such results lack sharpness, rendering them impractical for precision-dependent applications such as policy formulation or strategic decision-making. Hence, the intervals are evaluated using the Winkler score in the main text, which simultaneously penalizes both lack of coverage and excessive interval width.

\end{enumerate}

\section{Tables and Figures} \label{appendix:figures}
\begin{table}[!ht]
    \centering
    \caption{Empirical coverage achieved on the test set by the 95\% PIs generated by the probabilistic models. The mean (standard deviation) is reported across 50 runs of forecast ensemble generation and evaluation, as in the results in Tables \ref{table:short-horizon-results}-\ref{table:long-term-results}.}
    \begin{adjustbox}{width=\textwidth}

    \end{adjustbox}
    \label{table:empirical_coverage}
\end{table}

\begin{table}[!ht]
    \centering
    \caption{Mean forecast performance (along with standard deviations) of the proposed and benchmark models on all six datasets for the short-term forecast horizons (30-day/4-week/6-month). The \textbf{\underline{best}} and \textbf{second best} results are highlighted.}
    \begin{adjustbox}{width=\linewidth}
    %
    \label{table:short-horizon-results}
    \end{adjustbox}
\end{table}

\begin{table}[!ht]
    \centering
    \caption{Mean forecast performance (along with standard deviations) of the proposed and benchmark models on all six datasets for the medium-term forecast horizons (60-day/9-week/12-month). The \textbf{\underline{best}} and \textbf{second best} results are highlighted.}
    \begin{adjustbox}{width=\linewidth}
    %
    \label{table:medium-term-results}
    \end{adjustbox}
\end{table}

\begin{table}[!ht]
    \centering
    \caption{Mean forecast performance (along with standard deviations) of the proposed and benchmark models on all six datasets for the long-term forecast horizons (90-day/13-week/24-month). The \textbf{\underline{best}} and \textbf{second best} results are highlighted.}
    \begin{adjustbox}{width=\linewidth}
    %
    \label{table:long-term-results}
    \end{adjustbox}
\end{table}

\begin{table}[!ht]
    \centering
    \caption{Ablation study for pre-ANM vs. post-ANM and ES vs. MSE loss: results for short-term horizon. The mean and standard deviations for each metric are computed over 50 independent evaluation runs. The \textbf{\underline{best}} results are highlighted.}
    \begin{adjustbox}{width=\textwidth}
   
    %
    \end{adjustbox}
    \label{table:Ablation_results_short}
\end{table}

\begin{table}[!ht]
    \centering
    \caption{Ablation study (similar to Table \ref{table:Ablation_results_short}): results for medium-term horizon. The mean and standard deviations for each metric are computed over 50 independent evaluation runs. The \textbf{\underline{best}} results are highlighted.}
    \begin{adjustbox}{max width=\textwidth}
    
    %
    \end{adjustbox}
    \label{table:Ablation_results_medium}
\end{table}

\begin{table}[!ht]
    \centering
    \caption{Ablation study (similar to Table \ref{table:Ablation_results_short}): results for long-term horizon. The mean and standard deviations for each metric are computed over 50 independent evaluation runs. The \textbf{\underline{best}} results are highlighted.}
    \begin{adjustbox}{max width=\textwidth}
    
    %
    \end{adjustbox}
    \label{table:Ablation_results_long}
\end{table}

\begin{figure}[!hb]       \centering
        \includegraphics[width=\linewidth]{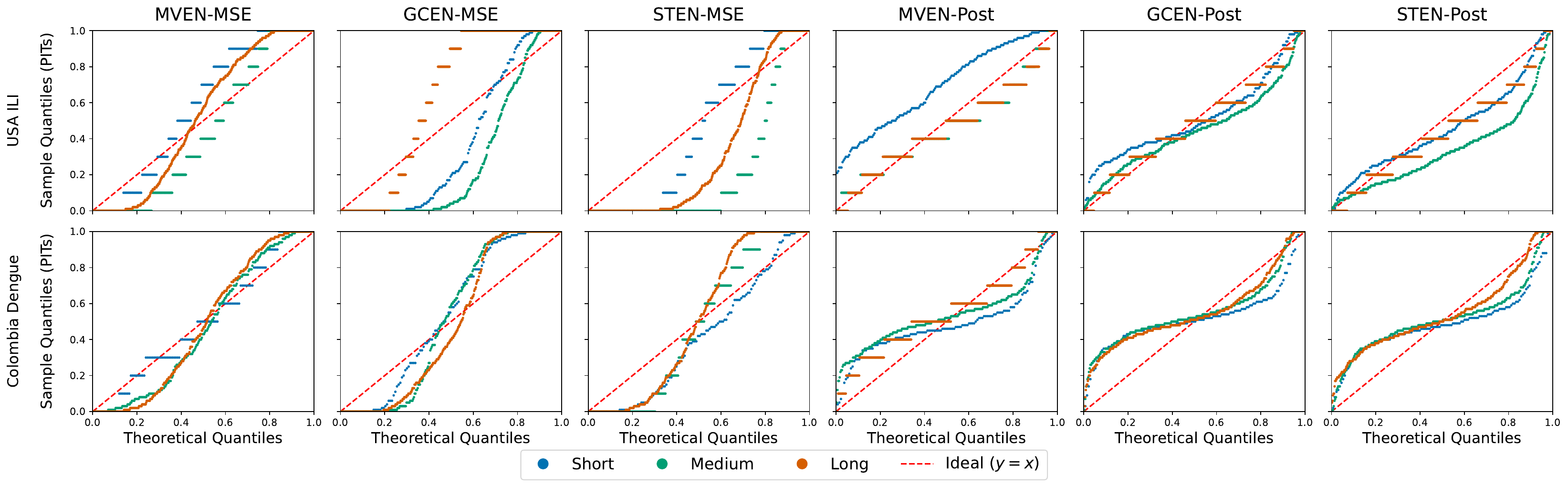}
        \caption{PIT Q-Q plots for the modified models considered in the ablation study on USA ILI and Colombia dengue datasets.}
        \label{fig:Ablation_QQ}
\end{figure}

\begin{figure}[!ht]
        \centering
        \includegraphics[width=\linewidth]{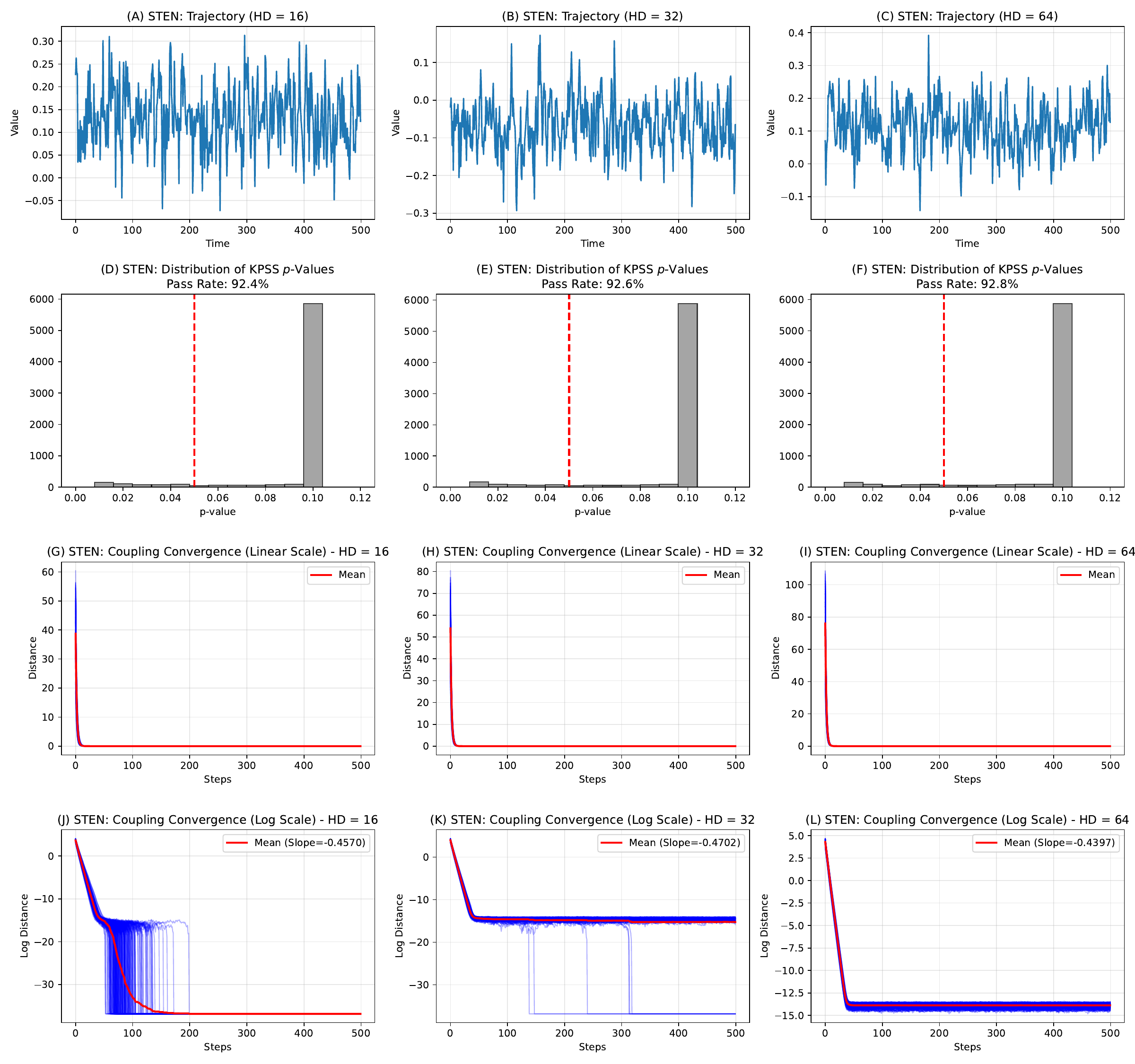}
        \caption{Simulation results for STEN. For details, refer to Sec. \ref{sec:Simulation}.}
        \label{fig:STEN_Simulations}
\end{figure}

\begin{figure}[!ht]
        \centering
        \includegraphics[width=\linewidth]{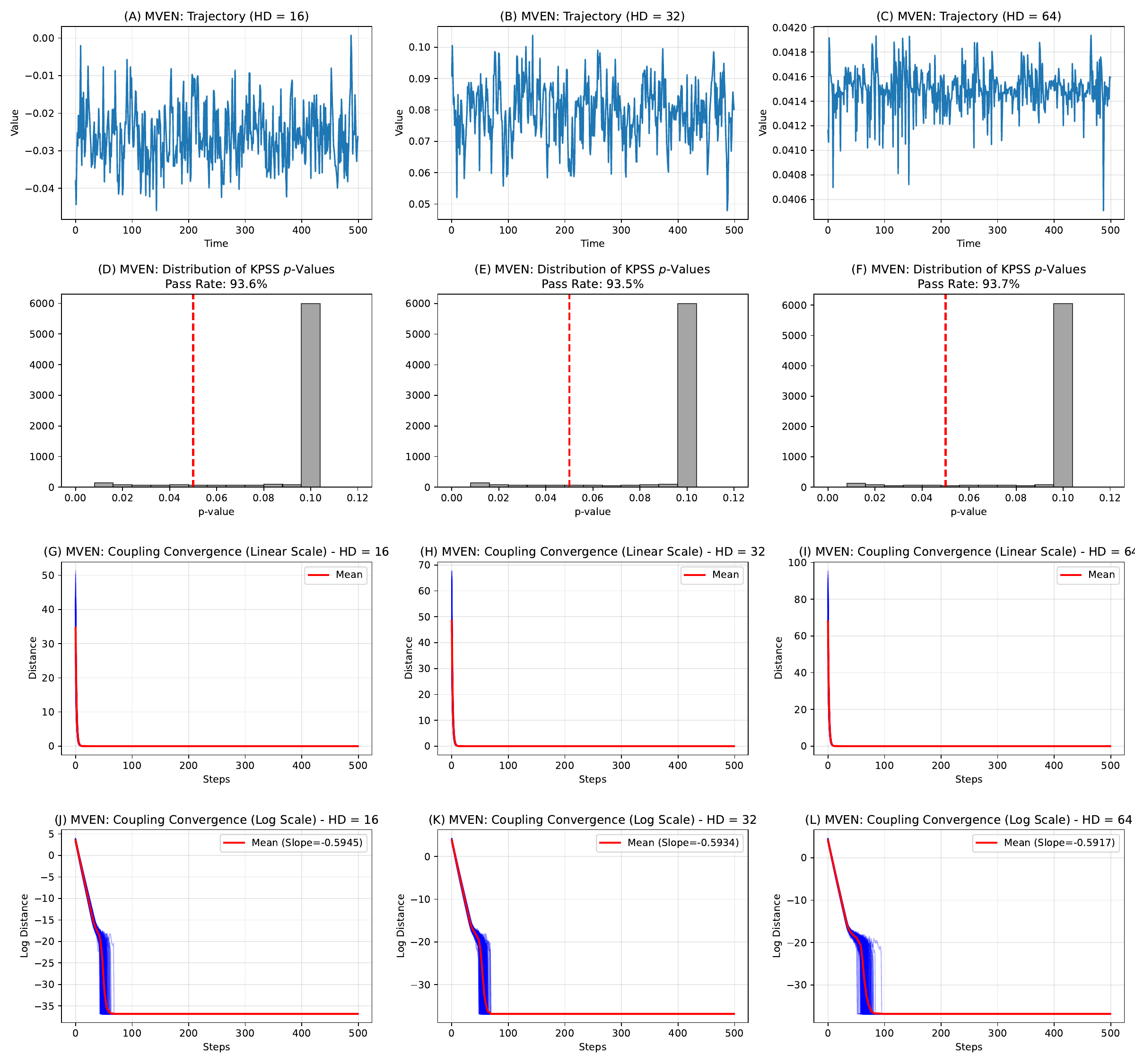}
        \caption{Simulation results for MVEN. For details, refer to Sec. \ref{sec:Simulation}.}
        \label{fig:MVEN_Simulations}
\end{figure}



\begin{figure}       \centering
        \includegraphics[width=\linewidth]{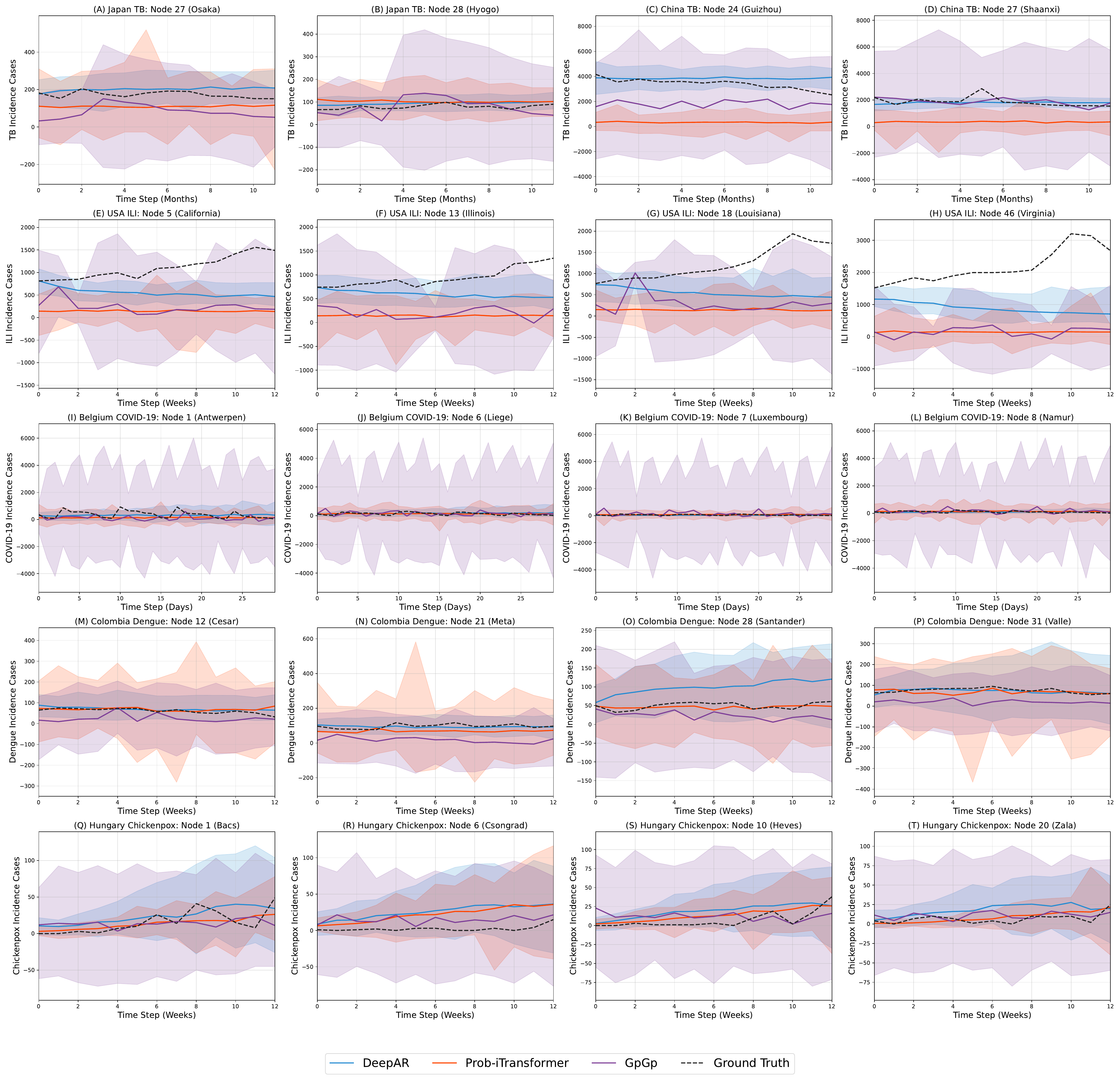}
        \caption{Forecasts along with 95\% PIs, produced by DeepAR, Prob-iTransformer, and GpGp on selected nodes (same as in Fig. \ref{fig:Forecasts}) of the six datasets.}
        \label{fig:Forecasts_others}
\end{figure}

\end{document}